\documentclass{article}
\usepackage{iclr2027_conference,times}

\usepackage{amsmath,amsfonts,bm}

\def\eqref#1{equation~\ref{#1}}

\def\1{\bm{1}}

\DeclareMathAlphabet{\mathsfit}{\encodingdefault}{\sfdefault}{m}{sl}
\SetMathAlphabet{\mathsfit}{bold}{\encodingdefault}{\sfdefault}{bx}{n}

\usepackage[utf8]{inputenc}
\usepackage[T1]{fontenc}
\usepackage{xcolor}
\usepackage{hyperref}
\usepackage{url}
\usepackage{booktabs}
\usepackage{colortbl}
\usepackage{amsmath}
\usepackage{amsfonts}
\usepackage{amssymb}
\usepackage{amsthm}
\usepackage{bm}
\usepackage{graphicx}
\usepackage{caption}
\usepackage{wrapfig}
\usepackage{dblfloatfix}
\usepackage{xspace}
\usepackage{microtype}
\usepackage{enumitem}
\usepackage{listings}
\usepackage{aliascnt}
\usepackage[nameinlink,capitalize,noabbrev]{cleveref}

\makeatletter
\newcommand{\subcolfont}{%
    \fontsize{\dimexpr0.9\dimexpr\f@size pt\relax\relax}%
             {\dimexpr1.1\dimexpr\f@size pt\relax\relax}\selectfont}
\makeatother

\newcommand{\suppref}[1]{Supp.~\cref{#1}}
\newcommand{\Suppref}[1]{Supp.~\Cref{#1}}

\definecolor{pdmdblue}{RGB}{33,91,166}
\definecolor{bytedanceblue}{RGB}{47,112,255}
\definecolor{bytedancebluebg}{RGB}{218,232,255}
\definecolor{codegreen}{RGB}{34,139,94}
\definecolor{codepurple}{RGB}{133,74,160}
\definecolor{dmdred}{HTML}{D55E00}
\definecolor{pdmdgreen}{HTML}{009E73}
\definecolor{resblue}{HTML}{3775BA}
\newcommand{\capbox}[2]{{\setlength{\fboxrule}{0.4pt}\setlength{\fboxsep}{1pt}\fcolorbox{#1}{white}{#2}}}
\definecolor{codegray}{RGB}{100,100,100}
\definecolor{codebackground}{RGB}{247,248,250}
\definecolor{pdmdhighlight}{RGB}{218,245,226}
\definecolor{algorithmcomment}{RGB}{92,139,143}
\hypersetup{
  colorlinks=true,
  linkcolor=pdmdblue,
  citecolor=pdmdblue,
  urlcolor=pdmdblue
}

\newcommand{\methodname}{\textsc{PDMD}\xspace}
\newcommand{\dmd}{\textsc{DMD}\xspace}
\newcommand{\dmdtwo}{\textsc{DMD2}\xspace}
\newcommand{\dmdtwodag}{\textsc{DMD2}$^{\dagger}$\xspace}
\newcommand{\rcm}{rCM\xspace}
\newcommand{\anyflow}{AnyFlow\xspace}

\newcommand{\deemph}[1]{\textcolor{black!45}{#1}}
\newcommand{\deemphcite}[1]{%
  \begingroup
  \hypersetup{citecolor=black!45}%
  \citep{#1}%
  \endgroup
}

\newtheorem{theorem}{Theorem}
\newaliascnt{lemma}{theorem}
\newtheorem{lemma}[lemma]{Lemma}
\aliascntresetthe{lemma}
\newaliascnt{proposition}{theorem}
\newtheorem{proposition}[proposition]{Proposition}
\aliascntresetthe{proposition}
\newaliascnt{corollary}{theorem}
\newtheorem{corollary}[corollary]{Corollary}
\aliascntresetthe{corollary}

\crefname{listing}{Algorithm}{Algorithms}
\Crefname{listing}{Algorithm}{Algorithms}
\crefname{lstlisting}{Algorithm}{Algorithms}
\Crefname{lstlisting}{Algorithm}{Algorithms}
\lstdefinestyle{algorithm}{
  language=Python,
  basicstyle=\ttfamily\scriptsize,
  keywordstyle=\color{pdmdblue}\bfseries,
  commentstyle=\color{algorithmcomment},
  stringstyle=\color{codepurple},
  backgroundcolor=\color{white},
  numbers=none,
  frame=tb,
  framerule=0.8pt,
  framesep=4pt,
  rulecolor=\color{black},
  columns=fullflexible,
  keepspaces=true,
  showstringspaces=false,
  breaklines=true,
  tabsize=4,
  captionpos=t,
  abovecaptionskip=2pt,
  belowcaptionskip=5pt,
  aboveskip=0.75em,
  belowskip=0.75em,
  xleftmargin=0pt,
  xrightmargin=0pt,
  escapeinside={(*@}{@*)}
}
\newcommand{\pdmdline}[1]{%
  \makebox[0pt][l]{%
    \raisebox{-0.28\baselineskip}{%
      \textcolor{pdmdhighlight}{%
        \rule{\linewidth}{1.28\baselineskip}%
      }%
    }%
  }%
  \textcolor{black}{\ttfamily\scriptsize #1}%
}

\newcommand{\linkicon}[1]{\raisebox{-0.18em}{\includegraphics[height=1.05em]{figures/icons/#1}}}
\title{\methodname{}: Projected Distribution Matching Distillation for Video Diffusion Models
\\[0.35em]
{\normalfont\large
\href{https://pdmd2026.github.io/}{\linkicon{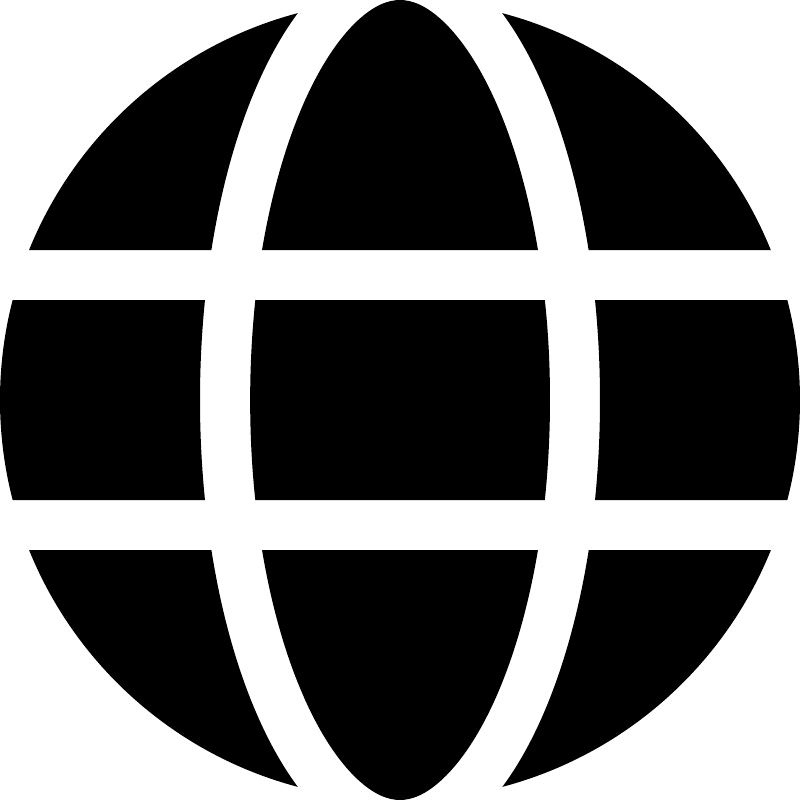}\,Project Page}\hspace{2.5em}
\href{https://huggingface.co/pdmd2026/pdmd_4NFE_full}{\linkicon{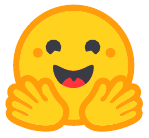}\,Hugging Face}\hspace{2.5em}
\href{https://github.com/ZeamoxWang/pdmd}{\linkicon{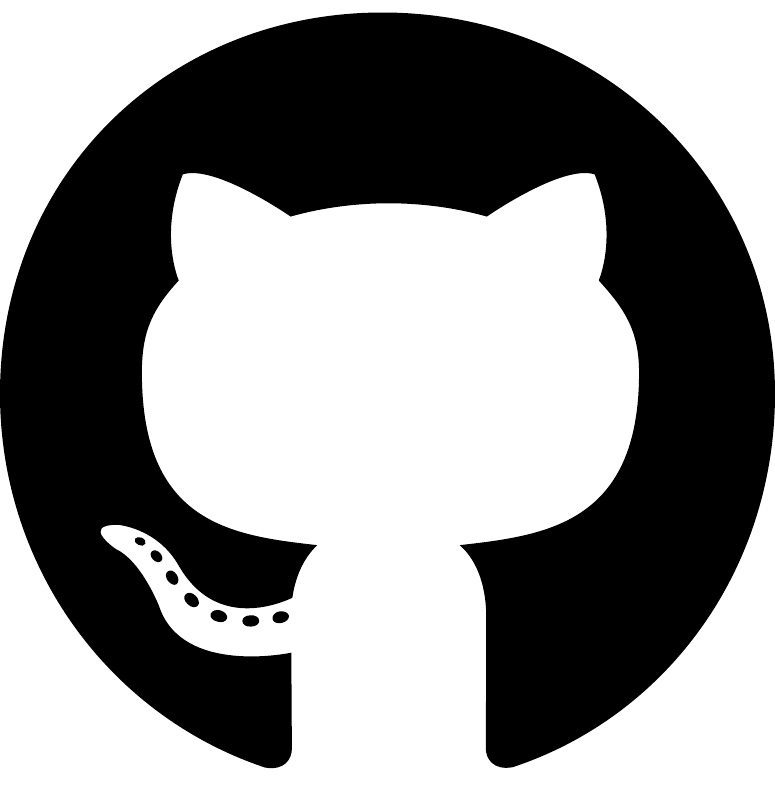}\,GitHub}}}

\author{%
Zimo Wang$^{1*}$\quad
Junkun Yuan$^{2}$\quad
Angtian Wang$^{2}$\quad
Haotian Yang$^{2}$
\\
\bfseries
Canyu Zhang$^{2}$\quad
Siyuan Yuan$^{2}$\quad
Xingchang Huang$^{2}$\quad
Bo Liu$^{2}$
\\
\bfseries
Yizhi Wang$^{2}$\quad
Yiding Yang$^{2}$\quad
Chongyang Ma$^{2}$\quad
Gordon Guocheng Qian$^{2\dagger}$
\\
$^{1}$University of California, San Diego\qquad
$^{2}$ByteDance Inc.
\\
$^{\dagger}$Project lead\qquad
$^{*}$Work done while Zimo Wang was an intern at ByteDance.
}

\iclrfinalcopy

\begin{document}

\maketitle
\addtocontents{toc}{\protect\setcounter{tocdepth}{-1}}
\providecommand{\tsf}{}\renewcommand{\tsf}[2]{\includegraphics[width=#1]{figures/src/teaser_pdmd/#2.jpg}}
\providecommand{\tsgap}{}\renewcommand{\tsgap}{\hspace{1.6pt}}
\providecommand{\tsvgap}{}\renewcommand{\tsvgap}{\\[1.6pt]}
\providecommand{\wtwo}{}\renewcommand{\wtwo}{0.4955\textwidth}
\providecommand{\wthree}{}\renewcommand{\wthree}{0.3293\textwidth}
\providecommand{\wfour}{}\renewcommand{\wfour}{0.2455\textwidth}
\providecommand{\wfive}{}\renewcommand{\wfive}{0.1960\textwidth}
\providecommand{\wsix}{}\renewcommand{\wsix}{0.1625\textwidth}
\providecommand{\weight}{}\renewcommand{\weight}{0.1205\textwidth}
\providecommand{\tslabel}{}\renewcommand{\tslabel}[1]{%
  \makebox[\textwidth][l]{\color{black!62}\sffamily\scriptsize #1}}
\providecommand{\tstag}{}\renewcommand{\tstag}[1]{%
  \makebox[\textwidth][l]{\color{black!55}\sffamily\fontsize{6.4}{7.4}\selectfont #1}}
\providecommand{\tsbadge}{}\renewcommand{\tsbadge}[1]{%
  \llap{\raisebox{4.5pt}{\colorbox{black!72}{\textcolor{white}{\sffamily\fontsize{6.2}{7}\selectfont\,#1\,}}\hspace{4.5pt}}}}
\providecommand{\wthreemini}{}\renewcommand{\wthreemini}{0.1078\textwidth}
\providecommand{\wfourmini}{}\renewcommand{\wfourmini}{0.0798\textwidth}
\providecommand{\wten}{}\renewcommand{\wten}{0.0952\textwidth}
\providecommand{\tsminigap}{}\renewcommand{\tsminigap}{\hspace{1.2pt}}
\providecommand{\tsp}{}\renewcommand{\tsp}[2]{\includegraphics[width=#1]{figures/src/teaser_pdmd/#2.pdf}}
\providecommand{\tshand}{}\renewcommand{\tshand}[2]{%
  \makebox[#1][c]{\color{black!70}\usefont{T1}{pzc}{m}{it}\fontsize{7.5}{8.5}\selectfont ``#2''}}

\begin{center}
    \captionsetup{type=figure}
    \vspace{-0.9em}
    \tsp{\wthree}{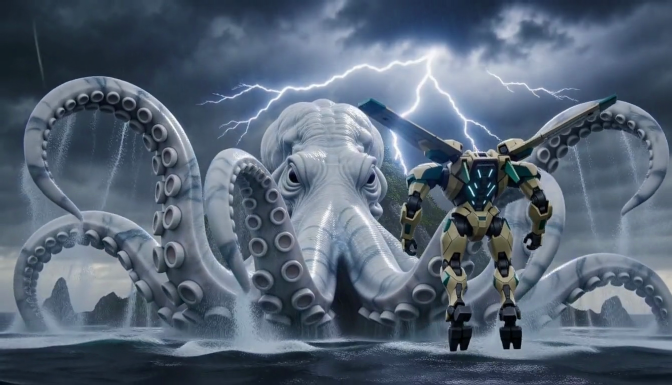}\tsgap\tsp{\wthree}{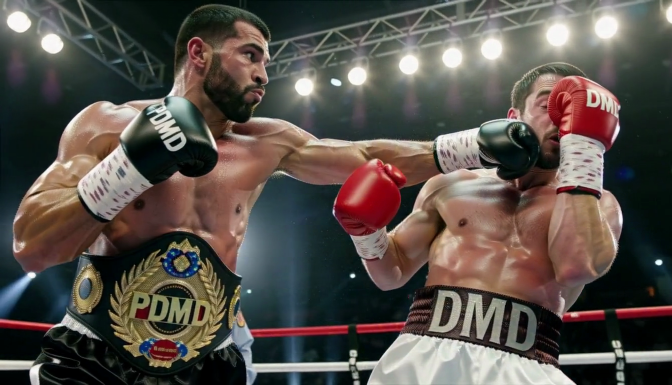}\tsgap\tsp{\wthree}{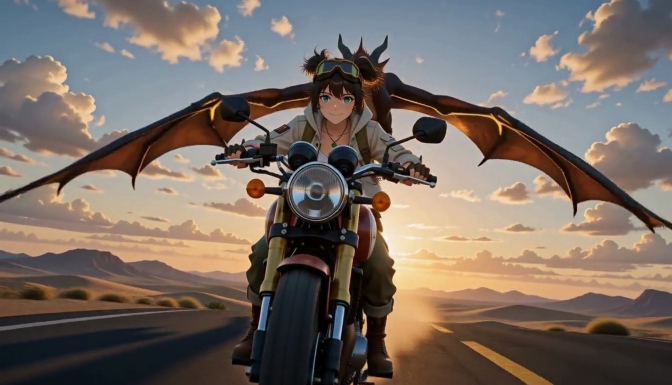}
    \\[1.6pt]
    \tsp{\wthreemini}{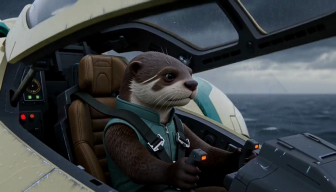}\tsminigap\tsp{\wthreemini}{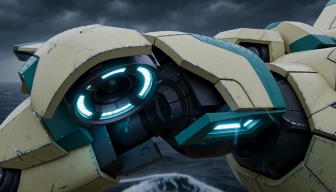}\tsminigap\tsp{\wthreemini}{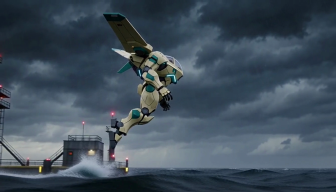}\tsgap%
    \tsp{\wthreemini}{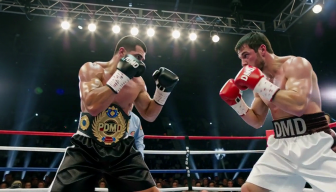}\tsminigap\tsp{\wthreemini}{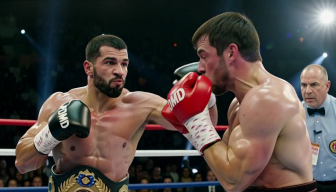}\tsminigap\tsp{\wthreemini}{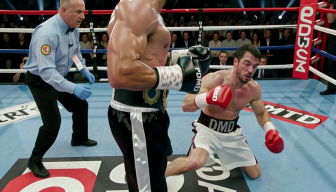}\tsgap%
    \tsp{\wthreemini}{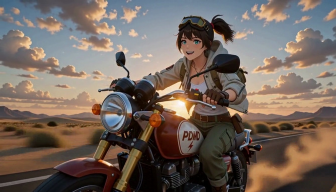}\tsminigap\tsp{\wthreemini}{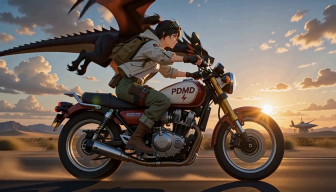}\tsminigap\tsp{\wthreemini}{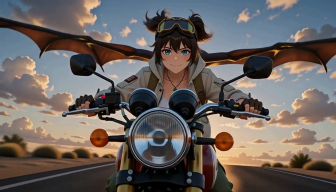}
    \\[2.4pt]
    \tshand{\wthree}{A marble kraken rises in a storm}\tsgap%
    \tshand{\wthree}{\methodname{} lands the knockdown on DMD}\tsgap%
    \tshand{\wthree}{A sunset ride}
    \\[3.4pt]
    \tsp{\wten}{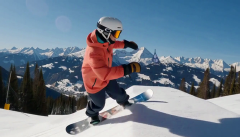}\tsgap\tsp{\wten}{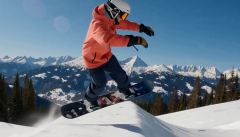}\tsgap\tsp{\wten}{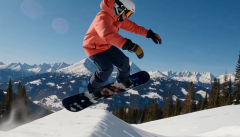}\tsgap\tsp{\wten}{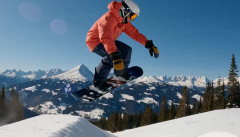}\tsgap%
    \tsp{\wten}{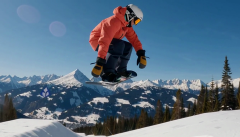}\tsgap\tsp{\wten}{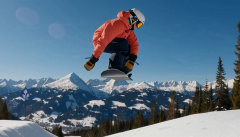}\tsgap\tsp{\wten}{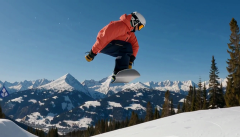}\tsgap\tsp{\wten}{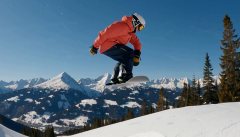}\tsgap%
    \tsp{\wten}{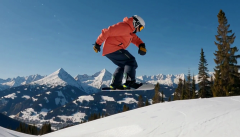}\tsgap\tsp{\wten}{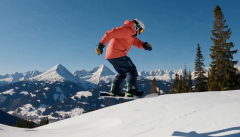}
    \\[2.4pt]
    \tshand{\textwidth}{One snowboard jump: takeoff, grab, landing}
    \par\vspace{0.35em}
    \captionof{figure}{%
        \textbf{4-NFE video generation with \methodname{}.}
        The 14.4-second, $1366{\times}768$ videos generated by
        \methodname{}-distilled MiniMax-H3~\citep{minimaxh3} exhibit natural
        colors and dynamic motion.
    }
    \label{fig:teaser}
\end{center}

\begin{abstract}
Modern video diffusion models require tens of denoising evaluations over long spatiotemporal token sequences. Distribution Matching Distillation (DMD) reduces the number of function evaluations (NFE) to just a few. However, DMD samples can degrade during training, exhibiting progressive oversaturation and artifacts. We trace this instability to critic errors, which enter successive student updates and accumulate over time. We introduce Projected Distribution Matching Distillation (\methodname{}) to filter critic errors. \methodname{} projects out the component of the DMD update parallel to the student--critic endpoint residual. At a fixed noisy query, we prove that this residual is an unbiased estimate of the critic's endpoint error. Under high-dimensional assumptions, this projection removes a constant fraction of critic error while discarding only a vanishing fraction of ideal DMD signal. Empirically, the projection stabilizes training and improves sample quality where DMD degrades and develops unnatural textures. \methodname{} requires only a one-line code change to DMD, with no extra loss, network, data, model pass, or multi-stage training. With Wan2.1, \methodname{} achieves a VBench total score of 83.73 at 4 NFE, surpassing matched DMD by 1.03 points. On MiniMax-H3 joint video--audio generation, \methodname{} achieves a VideoGen-Eval visual total score of 83.17, 0.41 points above the strongest distilled baseline. \methodname{} also achieves the best performance on all six audio metrics among the compared 4-NFE models. Qualitative comparisons and user studies favor \methodname{} over the distilled baselines in visual quality, motion, and audio quality. Code and models are available at \url{https://pdmd2026.github.io/}.
\end{abstract}

\section{Introduction}
\label{sec:introduction}

Video generation has advanced rapidly in visual quality, motion fidelity, and
prompt following, as demonstrated by Sora~\citep{sora2024},
Wan~\citep{wan2025wan}, Seedance~\citep{seedance2025}, and
MiniMax~\citep{minimaxh3}.
This progress comes at substantial inference cost: a modern video diffusion transformer~\citep{rombach2022ldm,peebles2023dit} typically requires a large number of function evaluations (NFE), often dozens, each processing a long spatiotemporal token sequence.
Step distillation~\citep{salimans2022progressive,song2023consistency} reduces
this cost by compressing a many-step teacher into a few-step student that
requires only a few evaluations.
Distribution Matching Distillation (\dmd)~\citep{yin2024onestep} is a widely
used approach to step distillation.
It trains the student to match the distribution produced by the many-step teacher, using the difference between the student score (learned online by a model called the critic) and the frozen teacher score.

\begin{figure}[t]
    \centering
    \setlength{\tabcolsep}{0pt}
    \newcommand{\dgf}[1]{\includegraphics[width=0.105\textwidth]{figures/src/degradation/v4/#1.jpg}}
    \begin{tabular}{@{}c@{\hspace{1pt}}c@{\hspace{1pt}}c@{\hspace{5pt}}c@{\hspace{1pt}}c@{\hspace{1pt}}c@{\hspace{5pt}}c@{\hspace{1pt}}c@{\hspace{1pt}}c@{}}
        \multicolumn{3}{c@{\hspace{5pt}}}{\dmd~\citep{yin2024onestep}} &
        \multicolumn{3}{c@{\hspace{5pt}}}{\dmdtwo~\citep{yin2024improved}} &
        \multicolumn{3}{c@{}}{\textbf{\methodname{} (Ours)}} \\[1pt]
        \scriptsize 100 iter. & \scriptsize 500 iter. & \scriptsize 1,000 iter. &
        \scriptsize 100 iter. & \scriptsize 500 iter. & \scriptsize 1,000 iter. &
        \scriptsize 100 iter. & \scriptsize 500 iter. & \scriptsize 1,000 iter. \\[1pt]
        \dgf{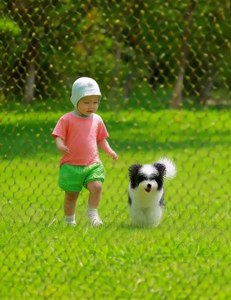} & \dgf{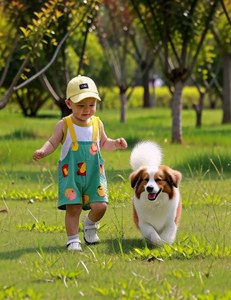} & \dgf{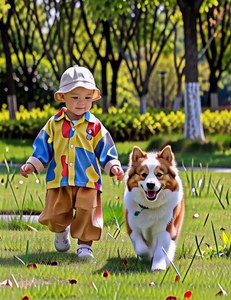} &
        \dgf{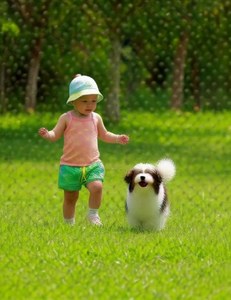} & \dgf{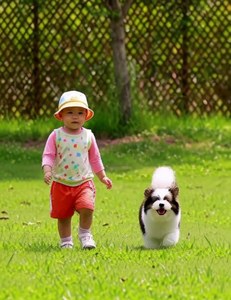} & \dgf{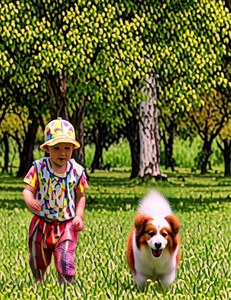} &
        \dgf{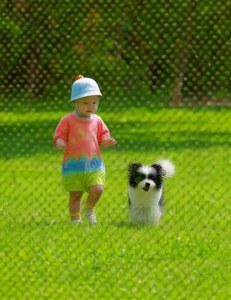} & \dgf{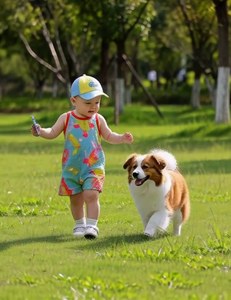} & \dgf{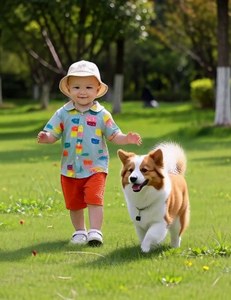}
    \end{tabular}
    \vspace{-0.3em}
    \caption{\textbf{Qualitative comparisons over matched 4-NFE training.} \dmd and \dmdtwo develop unnatural textures after 500 iterations, whereas \methodname{} keeps a natural appearance.}
    \label{fig:degradation}
\end{figure}

\begin{wrapfigure}[18]{r}{0.67\columnwidth}
    \vspace{-1.3em}
    \centering
    \setlength{\fboxsep}{0pt}%
    \setlength{\fboxrule}{3.0pt}%
    \newcommand{\dvf}[2]{\fcolorbox{#1}{white}{\includegraphics[height=0.2547\columnwidth]{figures/src/proj_endpoints/s500/rank4_target_#2_30.jpg}}}%
    \begin{tabular}{@{}c@{\hspace{2pt}}c@{\hspace{1pt}}c@{}}
        \includegraphics[height=0.2547\columnwidth]{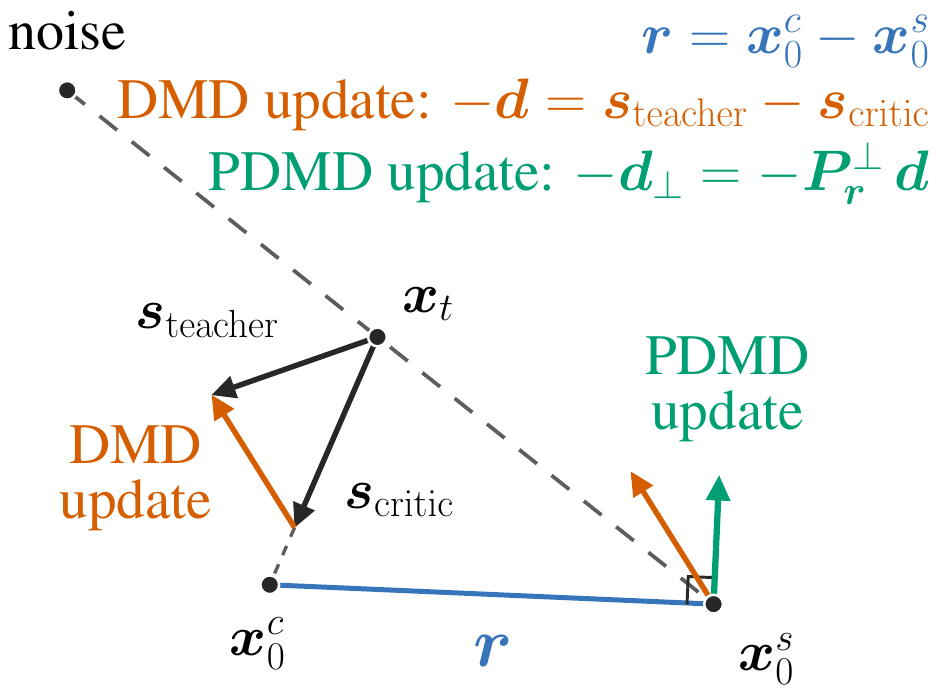} &
        \dvf{dmdred}{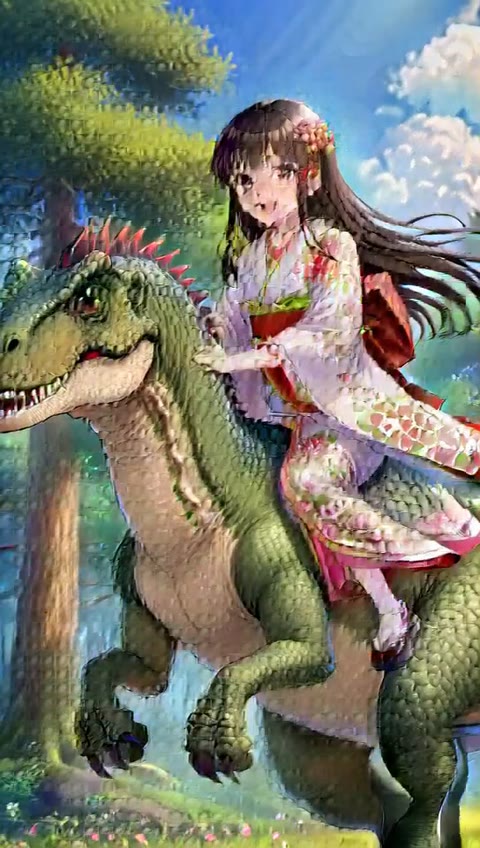} &
        \dvf{pdmdgreen}{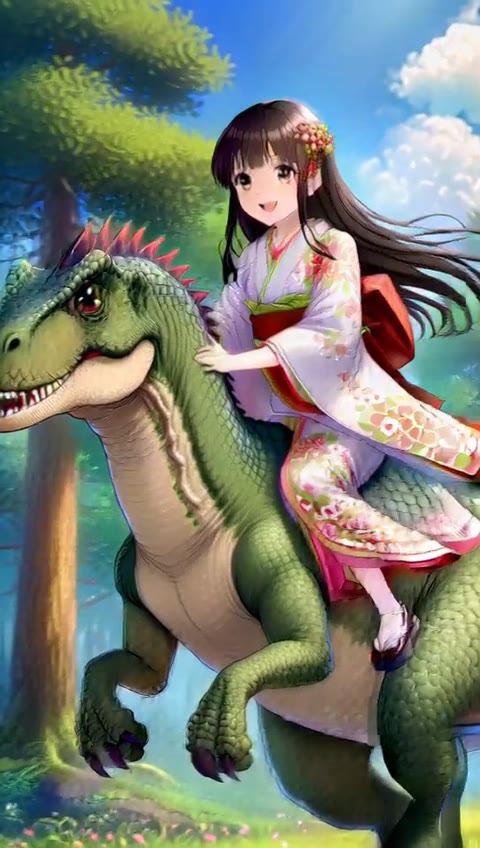}
    \end{tabular}
    \vspace{-0.4em}
    \caption{\textbf{DMD and \methodname{} updates.} DMD moves the student along \textcolor{dmdred}{$-\bm d$}. The critic score $\bm s_{\mathrm{critic}}$, the online-learning part of \textcolor{dmdred}{$\bm d$}, carries hidden error. \methodname{} projects out the component of \textcolor{dmdred}{$\bm d$} parallel to the known residual \textcolor{resblue}{$\bm r$} to suppress critic error, because \textcolor{resblue}{$\bm r$} is the critic error estimate. The two frames decode one real update target: the \protect\capbox{pdmdgreen}{\methodname{} target} contains fewer artifacts than the \protect\capbox{dmdred}{\dmd target}, while keeping 83\% of the update norm. }
    \label{fig:dmd_vectors}
\end{wrapfigure}

The critic is trained alternately with the student, using student-generated samples to track current student's distribution.
However, the online critic has approximation error even for a fixed student and can lag behind a changing one.~\citep{shao2025magicdistillation, yin2024improved}.
Because the critic score enters the \dmd update directly, \textit{critic errors} are repeatedly transferred to the student and can accumulate throughout training.
The critic error can lead to training instability, with samples becoming progressively oversaturated and eventually losing quality~\citep{shao2025magicdistillation, yin2024improved}.
For example, on MiniMax-H3, \dmd becomes visibly oversaturated from 500
iterations onward (\cref{fig:degradation}).

\dmdtwo~\citep{yin2024improved} reduces critic errors by taking more critic updates per student update, and also adds a discriminator to improve quality.
Other methods combine \dmd with additional objectives~\citep{zheng2026rcm,gu2026anyflow}.
We look beyond these additions and ask whether \dmd can reduce critic error using what it already computes.
Our goal is to use an estimate of the critic error direction that costs no extra data or model pass, to cancel part of the critic error with the estimate, and keep most of the \dmd signal.
Our key insight is that the direction from the student sample to the critic's one-step denoised prediction, namely the student--critic endpoint residual, provides an estimate of the critic error direction (\cref{sec:analysis-critic-average}).

To filter and attenuate critic error, we present Projected Distribution Matching Distillation (\methodname{}), which projects out the parallel component to the critic error estimate of the \dmd update.
\methodname{} is a one-line change to \dmd, with no auxiliary objective,
network, model pass, data, or training stage.
Under high-dimensional concentration and weak-alignment assumptions, our
analysis shows \methodname{} removes a constant fraction of critic error
and only a vanishing fraction of useful \dmd signal.

As illustrated by \cref{fig:dmd_vectors}, \methodname{} recasts improving DMD as a filtering problem inside the existing update.
\dmd re-noises a student sample $\bm x_0^s$ to $\bm x_t$ and evaluates the
critic score $\bm s_{\mathrm{critic}}$ and the frozen teacher score
$\bm s_{\mathrm{teacher}}$ at the same query.
The score difference,
$\bm d=\bm s_{\mathrm{critic}}-\bm s_{\mathrm{teacher}}$, defines its student update.
However, the critic error within $\bm s_{\mathrm{critic}}$ is not directly observable, and cannot simply be subtracted from $\bm d$.
The critic score can be linearly converted to a one-step endpoint $\bm x_0^c$ with no extra model pass.
Fortunately, the student--critic endpoint residual $\bm r=\bm x_0^c-\bm x_0^s$ is observable and a critic error estimator.
For a fixed query $\bm x_t$, the ideal critic endpoint is the average student endpoint. Therefore, the observed student--critic endpoint residual estimates critic error, with a sample-variation term that averages to zero.
We prove that, at a fixed query $\bm x_t$, $\bm r$ is an unbiased estimate of the critic endpoint error in \cref{eq:critic-residual-mean}.
Projecting out the component of $\bm d$ parallel to $\bm r$ suppresses critic error. A visual example is provided at right of \cref{fig:dmd_vectors}.

The filtering shows up empirically as more stable training, less
oversaturation, more motion, and better audio.
In 2D analysis, \methodname{} improves energy distance, on-mode fraction, and collapses to a single mode in $3/20$ runs, against $14/20$ without the projection (\cref{fig:2d-collapse};
\suppref{tab:supp-twomode}).
On Wan2.1-T2V-1.3B, \methodname{} reaches \textbf{83.73} total score on VBench, 1.03 points above its matched \dmd baseline (\cref{tab:wan13b}).
On MiniMax-H3-33B joint video--audio generation, it reaches \textbf{83.17}
total score on VideoGen-Eval, 0.41 above the strongest distilled baseline, and obtains the highest scores on all six audio metrics among 4-NFE models
(\cref{tab:h3}). User studies prefer \methodname{} to every 4-NFE baseline on visual, motion,
and audio quality (\cref{tab:h3} and \cref{tab:wan13b}).

Our \textbf{contributions} are:
\begin{itemize}[leftmargin=*,topsep=2pt,itemsep=2pt]
\item We connect \dmd's training instability and visual degradation
to critic error in its score-difference update, and identify the student--critic endpoint residual as an estimate of the critic error.

\item We introduce \methodname{}, a simple and effective algorithm that filters critic error by projecting out the component of the \dmd update parallel to the student--critic endpoint residual. Under high-dimensional assumptions, \methodname{} removes a constant fraction of this error while losing only a vanishing fraction of useful signal.

\item Experiments show improved energy distance and on-mode fraction in 2D, better visual quality on Wan2.1, and better video and audio quality on MiniMax-H3. User studies also favor \methodname{}.
\end{itemize}

\section{Related Work}
\label{sec:related-work}

\paragraph{Distribution matching distillation.}
DMD approximately minimizes the reverse KL divergence between a few-step student and a diffusion teacher~\citep{yin2024onestep}. Its online critic is imperfect, and the resulting error can contribute to progressive oversaturation and reduced motion~\citep{yin2024improved,xu2025fdistill,zheng2026rcm}. Adversarial distillation, such as DMD2, adds discriminator supervision. DMD2 also uses additional critic updates~\citep{yin2024improved,xu2023ufogen,sauer2023add,sauer2024ladd,lin2024lightning,lin2025apt}. These methods can improve fidelity or critic tracking, but they introduce discriminator training and coupled optimization, which can add memory and training costs and be difficult to stabilize~\citep{ge2025senseflow}. SiD, $f$-distill, and SGMD instead replace the reverse-KL formulation with semi-implicit Fisher, discriminator-weighted $f$-divergence, and stop-gradient Fisher objectives, respectively. Each requires additional network evaluations~\citep{zhou2024sid,xu2025fdistill,wu2026sgmd}. Others keep the DMD objective and add teacher-trajectory, real-data, relational, or reward supervision~\citep{wu2026dpdmd,chen2026dataforcing,zhang2026codmd,bai2026adaptive,jiang2025dmdr}. ADV also identifies oversaturation and temporal collapse in few-step video distillation and counters them with an adaptively weighted regression loss and a temporal regularizer on top of DMD~\citep{you2026adaptive}. \methodname{} keeps the DMD score networks and critic training unchanged. It changes only the student update computed from their existing predictions.

\paragraph{Trajectory and hybrid objectives.}
Consistency models enforce agreement along probability-flow ODE trajectories~\citep{song2023consistency,song2024improved,kim2023consistency}. rCM extends this approach to video by adding DMD to continuous-time consistency training~\citep{zheng2026rcm}. MeanFlow learns average velocities~\citep{geng2025meanflow}, while AnyFlow combines MeanFlow-style flow-map training with on-policy DMD through Flow Map Backward Simulation~\citep{gu2026anyflow}. SC-DMD, TMD and FSF-DMD are further hybrids, adding step-size consistency, a MeanFlow-pretrained flow head, or a critic-free update from the flow-map generator's endpoint pseudo-velocity~\citep{ge2026salt,nie2026transition,kim2026fsfdmd}. These methods need another objective, rollout procedure, or training stage. \methodname{} requires no additional model, loss, pass, data, or stage beyond DMD.

\paragraph{Error cancellation and directional decomposition.}
Adaptive noise cancellation and control variates use correlated references to reduce error or variance~\citep{widrow1975adaptive,glynn2002some}. DDS and FlowEdit likewise take differences between paired predictions for image editing~\citep{hertz2023delta,kulikov2025flowedit}, while Perp-Neg and APG project guidance components~\citep{armandpour2023reimagine,sadat2025eliminating}. \methodname{} instead projects the DMD update using the student--critic endpoint residual to suppress critic estimation error.

\section{Methodology}
\label{sec:method}

\subsection{Preliminaries: Distribution Matching Distillation}
\label{sec:method-dmd}

Distribution Matching Distillation (DMD)~\citep{yin2024onestep} trains a few-step student to match the teacher distribution using the difference between an online critic score and a frozen teacher score.

Let $G_{\theta}$ denote the few-step student. Given Gaussian input $\bm z$ and condition $\bm c$, the student produces the clean endpoint $\bm x_0^s(\bm z,\bm c)=G_{\theta}(\bm z,\bm c)$, a sample from the student's final distribution. An online critic field $\bm s_{\mathrm{critic}}(\bm x_t,t,\bm c)$ is trained to estimate the score at the renoised student endpoints. DMD renoises these samples using the same process and same coefficients $\alpha_t$ and $\sigma_t$ used to train the teacher:
\begin{equation}
    \bm x_t
    =
    \alpha_t\bm x_0^s(\bm z,\bm c)+\sigma_t\bm\epsilon,
    \qquad
    \bm\epsilon\sim\mathcal N(\bm 0,\bm I).
    \label{eq:dmd-noising}
\end{equation}

To update the student under the reverse-KL objective, DMD evaluates both the online critic field $\bm s_{\mathrm{critic}}(\bm x_t,t,\bm c)$ and the frozen teacher field $\bm s_{\mathrm{teacher}}(\bm x_t,t,\bm c)$, which estimates the score of the teacher distribution. Omitting scalar timestep weights, the DMD update signal is their difference:
\begin{equation}
    \bm d(\bm x_t,t,\bm c)
    =
    \bm s_{\mathrm{critic}}(\bm x_t,t,\bm c)
    -
    \bm s_{\mathrm{teacher}}(\bm x_t,t,\bm c).
    \label{eq:dmd-difference}
\end{equation}

$\bm d$ is backpropagated as a gradient through $\bm x_0^s=G_\theta(\bm z,\bm c)$ to update the student parameters $\theta$. Optimizing the student and critic in alternation trains the few-step student to match the teacher.

\subsection{Projected Distribution Matching Distillation}
\label{sec:method-pdmd}

DMD uses $-\bm d$ as an endpoint-space descent signal. However, because the critic is imperfect, this teacher--critic score difference contains the critic error. To suppress this contamination, we propose Projected Distribution Matching Distillation (\methodname{}), which projects out the component of the DMD update parallel to the student--critic endpoint residual, an estimate of the critic error.

\begin{wrapfigure}[10]{r}{0.48\columnwidth}
    \vspace{-2em}
    \begin{lstlisting}[
        style=algorithm,
        caption={\methodname{} student step. The highlighted line is the only
        change from DMD.},
        label={alg:pdmd-main}
    ]
x_0_s = G(z, cond)
x_t, t = renoise(x_0_s)           # Eq. (1)
s_t = T.score(x_t, t, cond)
s_c, x_0_c = C.score_end(x_t, t, cond)
d = s_c - s_t                     # Eq. (2)
r = x_0_c - x_0_s
(*@\pdmdline{d = d - dot(d, r) / dot(r, r) * r  \# Eq. (4)}@*)
update(G, dmd_loss(x_0_s, d))
\end{lstlisting}%
    \vspace{-1.0em}
\end{wrapfigure}

Specifically, the critic endpoint is $\bm x_0^c=(\bm x_t+\sigma_t^2\bm s_{\mathrm{critic}})/\alpha_t$. We define $\bm r=\bm x_0^c-\bm x_0^s$ and let
\begin{equation}
    \bm P_{\bm r}
    =
    \frac{\bm r\bm r^{\mathsf T}}{\|\bm r\|^2},
    \qquad
    \bm P_{\bm r}^{\perp}=\bm I-\bm P_{\bm r}.
    \label{eq:perpendicular-projector}
\end{equation}
The projected update is
\begin{equation}
    \bm d_{\perp}
    =
    \bm P_{\bm r}^{\perp}\bm d
    =
    \bm d
    -
    \frac{\langle\bm d,\bm r\rangle}{\|\bm r\|^2}\bm r.
    \label{eq:method-pdmd-projection}
\end{equation}
When $\bm r=\bm 0$, we leave the update unchanged. \methodname{} uses $\bm d_{\perp}$ in place of $\bm d$. This one-line change requires no additional loss, network, model pass, data, or training stage.

\section{Analysis}
\label{sec:analysis}

\subsection{The Projection Filters the Critic Error}
\label{sec:analysis-critic-average}

\paragraph{Residual as unbiased critic-error estimate.}
We show that the residual $\bm r$ that \methodname{} projects along is, at a fixed query, an unbiased estimate of the critic error. Following the re-diffusion process in \cref{eq:dmd-noising}, let $Q$ collect the noisy query $\bm x_t$, its noise level $t$, and the conditioning $\bm c$, and let $(Q,\bm S)$ be a random query--endpoint pair from critic training. Critic training re-diffuses the student's own endpoints, so the endpoint of the pair is the student endpoint, $\bm S=\bm x_0^s$. The critic endpoint is trained with a squared $\ell_2$ denoising loss, whose population-risk minimizer at a fixed query $Q=q$ is the conditional mean (\suppref{lem:supp-l2-conditional-mean}). The population-optimal critic endpoint and the conditional noise are therefore
\begin{equation}
    \bm c^\star(q)=\mathbb E[\bm S\mid Q=q],
    \qquad
    \bm\xi=\bm S-\bm c^\star(q),
    \qquad
    \mathbb E[\bm\xi\mid Q=q]=\bm 0.
    \label{eq:critic-conditional-statistics}
\end{equation}
The learned critic endpoint $\bm x_0^c$ is fixed at $q$, and its error is
\begin{equation}
    \bm e=\bm x_0^c-\bm c^\star(q).
    \label{eq:critic-endpoint-error}
\end{equation}
The residual at that query is $\bm R=\bm x_0^c-\bm S=\bm e-\bm\xi$. $\bm R$ is random through $\bm S$. The residual $\bm r$ of \cref{eq:perpendicular-projector} is one realization of $\bm R$, and $\bm P_{\bm r}$ is the corresponding realization of the random projector $\bm P_{\bm R}$. Taking the conditional mean of $\bm R=\bm e-\bm\xi$ and using $\mathbb E[\bm\xi\mid Q=q]=\bm 0$ gives
\begin{equation}
    \mathbb E[\bm R\mid Q=q]=\bm e,
    \label{eq:critic-residual-mean}
\end{equation}
so the residual is an unbiased estimate of the critic error. Unlike an arbitrary direction, $\bm R$ contains $\bm e$ by construction.

\paragraph{Error removal guarantee.}
We now quantify how much critic error the projection removes. Let $\bm\Sigma(q)=\operatorname{Cov}(\bm S\mid Q=q)$, so that $\mathbb E[\|\bm\xi\|^2\mid Q=q]=\operatorname{tr}\bm\Sigma(q)$ and the second moments of the residual are
\begin{equation}
    \mathbb E[\bm e^{\mathsf T}\bm R\mid Q=q]=\|\bm e\|^2,
    \qquad
    \mathbb E[\|\bm R\|^2\mid Q=q]
    =\|\bm e\|^2+\operatorname{tr}\bm\Sigma(q).
    \label{eq:critic-residual-moments}
\end{equation}
For $\bm e\neq\bm 0$, define the fraction of critic-error energy removed along $\bm R$ as
\begin{equation}
    \gamma_e(\bm R)
    =
    \frac{\|\bm P_{\bm R}\bm e\|^2}{\|\bm e\|^2}
    =
    \frac{(\bm e^{\mathsf T}\bm R)^2}
         {\|\bm e\|^2\|\bm R\|^2}.
    \label{eq:conditional-removal-ratio}
\end{equation}
We set $\gamma_e(\bm 0)=0$ when the residual vanishes. Applying Cauchy--Schwarz to $\mathbb E[(\bm e^{\mathsf T}\bm R)^2/\|\bm R\|^2\mid Q=q]$ with the moments of \cref{eq:critic-residual-moments} yields
\begin{equation}
    \mathbb E\!\left[\gamma_e(\bm R)\mid Q=q\right]
    \geq
    \frac{\|\bm e\|^2}
         {\|\bm e\|^2+\operatorname{tr}\bm\Sigma(q)}.
    \label{eq:conditional-removal-bound}
\end{equation}
Moreover, the Bayes risk $L^\star(q)=\mathbb E[\|\bm S-\bm c^\star(q)\|^2\mid Q=q]$ equals $\operatorname{tr}\bm\Sigma(q)$, so the covariance and conditional-risk views are two forms of the same result (\suppref{sec:proof-l2-optimal-critic}). At a fixed noise level, converting endpoint predictions to scores only rescales the corresponding critic error, so $\gamma_e$ is unchanged in the score space used by DMD.

\paragraph{High-dimensional interpretation.}
When the conditional noise $\bm\xi$ is concentrated in high dimensions and has no dominant alignment with the fixed error direction $\bm e$, the lower bound in \cref{eq:conditional-removal-bound} describes a typical update:
\begin{equation}
    \gamma_e
    \approx
    \frac{\|\bm e\|^2}
         {\|\bm e\|^2+\operatorname{tr}\bm\Sigma(q)}.
    \label{eq:high-dimensional-removal}
\end{equation}
For example, when $\operatorname{tr}\bm\Sigma(q)=\|\bm e\|^2$, the conditional guarantee removes at least $50\%$ of the critic-error energy on average (\suppref{thm:supp-typical-removal,cor:supp-gaussian-removal}). Under high-dimensional assumptions, the projection removes a constant fraction of the critic error but only a vanishing fraction of the ideal signal (\suppref{cor:supp-high-dimensional-separation,cor:supp-net-correction-improvement}). On MiniMax-H3, the projection visibly attenuates speckle artifacts and retains more than 80\% of the update norm for most of training (\suppref{sec:h3-norm-ratio,sec:h3-proj-endpoints}).

\paragraph{Critic lag and collapse.}
\dmdtwo attributes instability to the critic lagging behind the moving student and adds critic updates~\citep{yin2024improved}. In a two-mode model, a strong lag turns the critic's restoring feedback into instability (\suppref{lem:supp-critical-delay}). For a critic whose error arises only from this lag, the error is still the conditional mean of $\bm r$, so the bound of \cref{eq:conditional-removal-bound} applies to it and the projection attenuates the lag error in conditional mean square (\suppref{cor:supp-lag-attenuation}).

\subsection{2D Verification}
\label{sec:analysis-2d}

\paragraph{Error filtering.}
We test the error-filtering performance on a ring of eight Gaussians with an exact teacher score, so the critic is the only learned score. We compare DMD with four variants that each remove one direction from its update (\cref{fig:2d-curves}; \suppref{sec:supp-2d-exp1}). The other projections remove a similar share of the update norm without targeting the critic error. Only \methodname{} improves both distributional metrics over DMD. \Suppref{fig:supp-2d-gamma,tab:supp-2d-net} measure $\gamma_e$, the ideal-signal energy removal fraction $\gamma_s$, and the net error reduction directly along the run: the projection reduces mean squared error to the ideal update by $28\%$ relative to DMD for $\sigma\in[0.15,1.1]$.

\begin{figure}[t]
    \centering
    \begin{minipage}[t]{0.655\linewidth}
        \vspace{0pt}
        \centering
        \includegraphics[width=\linewidth]{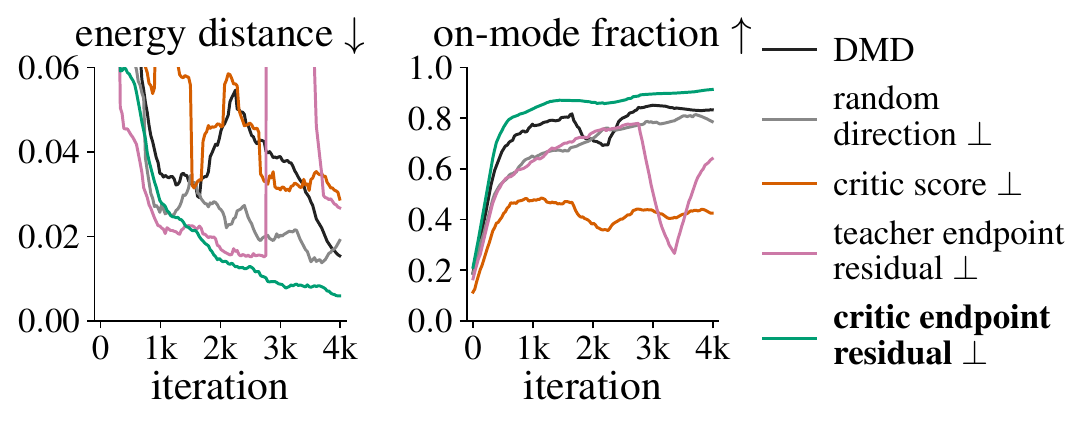}
        \caption{\textbf{2D comparison of projection directions} (20-point moving average).
        Left: energy distance to the target. Right: fraction of samples within $3\sigma$ of a mode center. Among the tested directions, only the \textcolor{pdmdgreen}{\methodname{}} projection stabilizes training and improves both metrics over DMD.}
        \label{fig:2d-curves}
    \end{minipage}
    \hfill
    \begin{minipage}[t]{0.315\linewidth}
        \vspace{0pt}
        \centering
        \includegraphics[width=\linewidth]{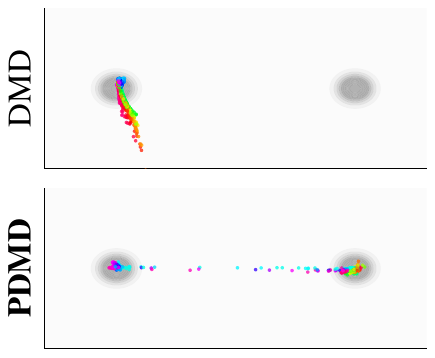}
        \caption{\textbf{Two-mode distillation with lagging critic}. This run illustrates DMD collapse (top) and \methodname{} coverage of both modes (bottom).}
        \label{fig:2d-collapse}
    \end{minipage}
\end{figure}

\paragraph{Collapse under critic lag.}
With the critic learning rate at $1/20$ of the student's on a two-mode target, \dmd collapses in $14/20$ and $19/20$ of the runs across two learning rates, a random projection in $16/20$ and $19/20$, and \methodname{} in $3/20$ and $12/20$ (\cref{fig:2d-collapse}; \suppref{sec:supp-2d-exp2}). \methodname{} typically recovers from an early single-mode state, whereas collapsed \dmd runs remain locked.

\section{Experiments}
\label{sec:experiments}

\subsection{Experimental Setup}
\label{sec:experimental-setup}

\paragraph{Wan2.1-T2V-1.3B.}
\dmdtwodag is our reimplementation of \dmdtwo~\citep{yin2024improved} on Wan2.1-T2V-1.3B at 4 NFE. DMD$^{\dagger}$ is our DMD reimplementation with one critic update per student update and no GAN loss. \methodname differs from DMD$^{\dagger}$ only by the projection in \cref{eq:method-pdmd-projection}. All training hyperparameters are unchanged. We fine-tune all weights. For \rcm~\citep{zheng2026rcm}, ADV~\citep{you2026adaptive}, and \anyflow~\citep{gu2026anyflow}, we use their official checkpoints.

\newsavebox{\wantabbox}
\sbox{\wantabbox}{\scalebox{0.8}{%
    \small
    \setlength{\tabcolsep}{5pt}%
    \begin{tabular}{@{}lccc@{\hspace{4pt}}cc*{3}{>{\centering\arraybackslash}p{11.6mm}}@{}}%
        \toprule
        & & \multicolumn{4}{c}{VBench} & \multicolumn{3}{c}{User study (win $-$ loss, \%)} \\
        \cmidrule(lr){3-6} \cmidrule(lr){7-9}
        Method & NFE & \textbf{Total} & Quality & Dynamic & Semantic & Text & Visual & Motion \\
        \midrule
        \deemph{Wan2.1-T2V-1.3B~\deemphcite{wan2025wan}} &
        \deemph{$50{\times}2$} & \deemph{83.06} & \deemph{85.02} &
        \deemph{68.61} & \deemph{75.23} & \deemph{$-0.4$} & \deemph{$+17.6$} & \deemph{$+24.3$} \\
        \deemph{Wan2.1-T2V-1.3B~\deemphcite{wan2025wan}}& \deemph{$4{\times}2$}& \deemph{68.54}& \deemph{73.60}& \deemph{18.61}& \deemph{48.30} & \deemph{--} & \deemph{--} & \deemph{--} \\
        rCM~\citep{zheng2026rcm} & 4 & 83.13 & 85.14 & 78.06 & 75.10 & $-2.1$ & $+33.8$ & $+17.7$ \\
        ADV~\citep{you2026adaptive} & 4 & 81.47 & 83.61 & 63.89 & 72.91 & $+2.9$ & $+34.6$ & $+19.2$ \\
        AnyFlow~\citep{gu2026anyflow} & 4 & 83.54 & 85.28 & 59.44 & \textbf{76.57} & $+0.6$ & $+24.0$ & $+28.9$ \\
        DMD$^{\dagger}$~\citep{yin2024onestep} & 4 &
        82.70 & 85.06 & 86.39 & 73.24 & $+6.5$ & $+53.6$ & $+35.2$ \\
        \dmdtwodag~\citep{yin2024improved} & 4 &
        83.44 & 85.84 & 76.67 & 73.84 & $+7.5$ & $+39.5$ & $+30.9$ \\
        \textbf{\methodname{} (ours)} & 4 &
        \cellcolor{bytedancebluebg}\textbf{83.73} &
        \cellcolor{bytedancebluebg}\textbf{85.89} &
        \cellcolor{bytedancebluebg}\textbf{89.72} &
        \cellcolor{bytedancebluebg}75.09 & -- & -- & -- \\
        \bottomrule
    \end{tabular}}}
\begin{table}[t]
    \centering
    \caption{\textbf{Quantitative comparisons for Wan2.1-T2V-1.3B at 4 NFE on VBench.}
    $^{\dagger}$ marks our best reimplemented checkpoints; other baselines use official releases. All methods follow the AnyFlow protocol. \methodname{} leads in quality, dynamic degree, and total score. User-study win--loss shares show substantial preferences for \methodname{} over all compared methods in visual and motion quality. }
    \label{tab:wan13b}
    \usebox{\wantabbox}
\end{table}
\paragraph{MiniMax-H3-33B.}
We distill the 33B joint video--audio DiT MiniMax-H3~\citep{minimaxh3} to 4 NFE. Student and critic use rank-128 LoRA with scaling 128 on every block's attention projections and feed-forward layers. We tune learning rates for DMD$^{\dagger}$, \dmdtwodag, \rcm$^{\dagger}$, and \anyflow$^{\dagger}$, reporting each method's best run; \suppref{tab:supp-h3-baseline-recipes} details the two-stage \rcm$^{\dagger}$ and \anyflow$^{\dagger}$ recipes. We also evaluate the community's 4-step H3 Turbo LoRA~\citep{larry2026h3turbo} (\suppref{sec:h3-eval-protocol}).

\paragraph{Evaluation.}
Wan2.1 uses the 944 AnyFlow-augmented VBench prompts~\citep{huang2023vbench,gu2026anyflow}, at 480p with five seeds per prompt. MiniMax-H3 uses 387 VideoGen-Eval prompts~\citep{yang2025videogeneval}, at 544p with mixed aspect ratios and a fixed seed. Each H3 prompt is rewritten once with Qwen3.8-27B~\citep{qwen2026qwen38} following official guidance~\citep{minimaxh3promptguide}, then shared across methods. Video evaluation follows VBench; audio uses six established metrics. In randomized side-by-side comparisons on matched prompts, 20 annotators choose the better clip or a tie for text alignment, visual and motion quality, plus audio and overall quality for H3. \Cref{tab:wan13b,tab:h3} report win-minus-loss shares for \methodname against each baseline (\suppref{sec:experimental-details}).

\subsection{Four-Step Text-to-Video Generation on Wan2.1-T2V-1.3B}
\label{sec:wan-results}

In \cref{tab:wan13b}, \methodname{} scores 83.73 total, above 83.54 for \anyflow~\citep{gu2026anyflow}, 83.06 for the 50-step teacher, and 82.70 for matched \dmd$^{\dagger}$. \dmd$^{\dagger}$ peaks at 1{,}500 iterations, then degrades to 63.04 at 5{,}000 and 56.09 at 7{,}500; \methodname{} scores 83.44 at 5{,}000 and stays above 83.5 through 10{,}000. \methodname{} is also 0.29 points above \dmdtwodag without its additional GAN loss and achieves a higher dynamic-degree score (89.72 versus 76.67). \Cref{fig:wan_grid_main} qualitatively shows \methodname{} produces higher-quality videos with natural textures and less oversaturation. Annotators prefer \methodname{} to every compared method, the 50-step teacher included, on visual and motion quality; text alignment is mostly ties (\cref{tab:wan13b}; \suppref{tab:supp-wan-userstudy}).

\begin{figure}[!t]
    \centering
    \setlength{\tabcolsep}{0pt}
    \renewcommand{\arraystretch}{0}
    \newcommand{\wgl}[1]{\makebox[0.16in][c]{\raisebox{-0.5\height}[0pt][0pt]{\rotatebox{90}{\scriptsize Prompt #1}}}}
    \newcommand{\wgw}{\dimexpr(\textwidth-0.16in-9pt)/8\relax}
    \newcommand{\wgh}[2]{\parbox[b]{\wgw}{\centering\scriptsize #1\\[1pt]\scriptsize #2}}
    \newcommand{\wg}[1]{\includegraphics[width=\wgw]{figures/src/wan_grid/#1.jpg}}
    \newcommand{\wgblock}[2]{%
        \wgl{#1} & \wg{t50_#1_#2} & \wg{t4_#1_#2} & \wg{rcm_#1_#2} & \wg{adv_#1_#2} & \wg{af_#1_#2} & \wg{dmd_#1_#2} & \wg{dmd2_#1_#2} & \wg{pdmd_#1_#2} \\
        & \wg{t50_#1_80} & \wg{t4_#1_80} & \wg{rcm_#1_80} & \wg{adv_#1_80} & \wg{af_#1_80} & \wg{dmd_#1_80} & \wg{dmd2_#1_80} & \wg{pdmd_#1_80} \\[2pt]}
    \begin{tabular}{@{}r@{\hspace{2pt}}c@{\hspace{1pt}}c@{\hspace{1pt}}c@{\hspace{1pt}}c@{\hspace{1pt}}c@{\hspace{1pt}}c@{\hspace{1pt}}c@{\hspace{1pt}}c@{}}
        & \wgh{Wan2.1}{$50{\times}2$ NFE}
        & \wgh{Wan2.1}{$4{\times}2$ NFE}
        & \wgh{rCM}{4 NFE}
        & \wgh{ADV}{4 NFE}
        & \wgh{AnyFlow}{4 NFE}
        & \wgh{DMD$^{\dagger}$}{4 NFE}
        & \wgh{\dmdtwodag}{4 NFE}
        & \wgh{\textbf{\methodname{} (Ours)}}{\textbf{4 NFE}} \\[2pt]
        \wgblock{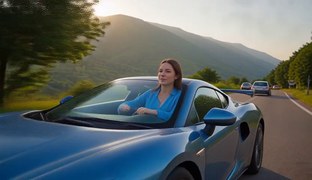}{30}
        \wgblock{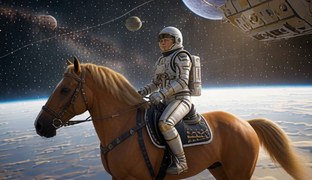}{30}
        \wgblock{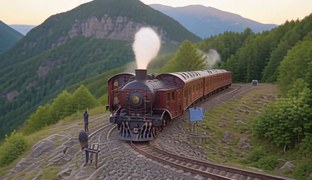}{30}
    \end{tabular}
    \caption{\textbf{Qualitative results on four-step Wan2.1-T2V-1.3B.} Each example shows early and late frames from the same VBench prompt and seed across methods, following the AnyFlow evaluation protocol~\citep{gu2026anyflow}. In these examples, \methodname{} produces natural textures and less oversaturation than the matched DMD and DMD2 models. See \suppref{fig:wan_grid_supp} for more visual results.}
    \label{fig:wan_grid_main}
\end{figure}

\WFclear

\vspace{-5pt}
\subsection{Four-Step Joint Video--Audio Generation on MiniMax-H3-33B}
\label{sec:h3-results}

\begin{table*}[!tb]
    \centering
    \caption{\textbf{Quantitative 4-NFE MiniMax-H3-33B comparisons on VideoGen-Eval.} $^{\dagger}$ marks rows reporting the best reimplemented checkpoint. \methodname{} reaches the highest visual total and performs best across all audio metrics among the distilled models, and the user study prefers it to 4-NFE baselines on visual, motion, and audio quality. Text-alignment judgments are mostly ties (\suppref{tab:supp-h3-userstudy}). }
    \label{tab:h3}
    \tiny
    \setlength{\tabcolsep}{1.1pt}
    \begin{tabular}{@{}lccc@{\hspace{3pt}}ccccccccccccc@{}}%
        \toprule
        & & \multicolumn{4}{c}{Video} & \multicolumn{6}{c}{Audio} &
        \multicolumn{5}{c}{User study (win $-$ loss, \%)} \\
        \cmidrule(lr){3-6} \cmidrule(lr){7-12} \cmidrule(lr){13-17}
        Method & NFE & \textbf{Total} & Quality & Dynamic & Semantic &
        PQ & CE & CU & IS & IB & DeSync $\downarrow$ &
        \textbf{Overall} & Text & Visual & Motion & Audio \\
        \midrule
        \deemph{MiniMax-H3-33B~\deemphcite{minimaxh3}} & \deemph{50} &
        \deemph{82.41} & \deemph{82.22} & \deemph{66.67} &
        \deemph{83.20} & \deemph{6.567} & \deemph{4.188} &
        \deemph{6.213} & \deemph{5.15} & \deemph{0.229} & \deemph{0.797} & \deemph{$-37.0$} & \deemph{$-7.5$} & \deemph{$-35.4$} & \deemph{$-11.9$} & \deemph{$-3.9$} \\
        \deemph{MiniMax-H3-33B~\deemphcite{minimaxh3}} & \deemph{4} &
        \deemph{79.48} & \deemph{79.54} & \deemph{44.44} & \deemph{79.23} & \deemph{6.056} & \deemph{3.167} & \deemph{5.580} & \deemph{3.35} & \deemph{0.129} & \deemph{0.932} & \deemph{$+70.3$} & \deemph{$+16.5$} & \deemph{$+72.6$} & \deemph{$+44.7$} & \deemph{$+47.5$} \\
        H3 Turbo LoRA~\citep{larry2026h3turbo} & 4 &
        81.57 & 81.29 & 54.78 & 82.69 & 6.406 & 3.917 & 6.015 & 4.52 & 0.183 & 0.839 & $+34.6$ & $+2.8$ & $+38.0$ & $+19.4$ & $+9.3$ \\
        \rcm$^{\dagger}$~\citep{zheng2026rcm} & 4 &
        81.18 & 80.98 & 58.40 & 81.95 & 6.063 & 3.711 & 5.446 & 3.69 & 0.194 & 0.916 & $+43.7$ & $+3.1$ & $+51.2$ & $+31.5$ & $+17.1$ \\
        \anyflow$^{\dagger}$~\citep{gu2026anyflow} & 4 &
        81.97 & 81.82 & 64.60 & 82.60 & 6.092 & 3.544 & 5.591 & 4.35 & 0.161 & 0.905 & $+59.2$ & $-1.6$ & $+66.4$ & $+27.9$ & $+25.8$ \\
        \dmd$^{\dagger}$~\citep{yin2024onestep} & 4 &
        82.76 & 82.68 & 61.76 & \textbf{83.05} & 6.381 & 3.801 & 5.931 & 4.79 & 0.176 & 0.905 & $+29.7$ & $+3.6$ & $+25.1$ & $+12.1$ & $+10.9$ \\
        \dmdtwo$^{\dagger}$~\citep{yin2024improved} & 4 &
        82.27 & 82.51 & 59.95 & 81.34 & 6.305 & 3.434 & 5.871 & 4.06 & 0.153 & 0.921 & $+35.1$ & $+9.0$ & $+28.7$ & $+19.4$ & $+22.2$ \\
        \textbf{\methodname{} (ours)} & 4 &
        \cellcolor{bytedancebluebg}\textbf{83.17} & \cellcolor{bytedancebluebg}\textbf{83.25} & \cellcolor{bytedancebluebg}\textbf{71.83} & \cellcolor{bytedancebluebg}82.86 & \cellcolor{bytedancebluebg}\textbf{6.530} & \cellcolor{bytedancebluebg}\textbf{4.062} & \cellcolor{bytedancebluebg}\textbf{6.180} & \cellcolor{bytedancebluebg}\textbf{4.98} & \cellcolor{bytedancebluebg}\textbf{0.195} & \cellcolor{bytedancebluebg}\textbf{0.802} & -- & -- & -- & -- & -- \\
        \bottomrule
    \end{tabular}
\end{table*}

\paragraph{Video.}
\methodname{} scores 83.17 total, above every 4-NFE baseline; \dmd and \anyflow reach 82.76 and 81.97, respectively. \dmd$^{\dagger}$ peaks at 5{,}500 iterations with a tenfold lower learning rate; at \methodname{}'s learning rate, it oversaturates from iteration 500 and subsequently collapses. \Cref{fig:h3_grid_main,fig:h3_motion_main} show more realistic textures, more dynamic motion, and sharper details with \methodname{}.

\begin{figure}[!t]
    \centering
    \setlength{\tabcolsep}{0pt}
    \renewcommand{\arraystretch}{0}
    \newcommand{\hgml}[1]{\makebox[0.16in][c]{\raisebox{-0.5\height}[0pt][0pt]{\rotatebox{90}{\tiny Prompt #1}}}}
    \newcommand{\hgmw}{\dimexpr(\textwidth-0.16in-9pt)/8\relax}
    \newcommand{\hgmh}[2]{\parbox[b]{\hgmw}{\centering\scriptsize #1\\[1pt]\scriptsize #2}}
    \newcommand{\hgm}[1]{\includegraphics[width=\hgmw]{figures/src/h3_grid/#1.jpg}}
    \newcommand{\hgmrow}[3]{%
        \hgml{#1} & \hgm{t50_#1_f#2} & \hgm{t4_#1_f#2} & \hgm{lora_#1_f#2} & \hgm{rcm_#1_f#2}
        & \hgm{af_#1_f#2} & \hgm{dmd_#1_f#2} & \hgm{dmd2_#1_f#2} & \hgm{pdmd_#1_f#2} \\
        & \hgm{t50_#1_f#3} & \hgm{t4_#1_f#3} & \hgm{lora_#1_f#3} & \hgm{rcm_#1_f#3}
        & \hgm{af_#1_f#3} & \hgm{dmd_#1_f#3} & \hgm{dmd2_#1_f#3} & \hgm{pdmd_#1_f#3} \\[2pt]}
    \begin{tabular}{@{}r@{\hspace{2pt}}c@{\hspace{1pt}}c@{\hspace{1pt}}c@{\hspace{1pt}}c@{\hspace{1pt}}c@{\hspace{1pt}}c@{\hspace{1pt}}c@{\hspace{1pt}}c@{}}
        & \hgmh{MiniMax-H3}{50 NFE}
        & \hgmh{MiniMax-H3}{4 NFE}
        & \hgmh{H3 Turbo LoRA}{4 NFE}
        & \hgmh{\rcm$^{\dagger}$}{4 NFE}
        & \hgmh{\anyflow$^{\dagger}$}{4 NFE}
        & \hgmh{DMD$^{\dagger}$}{4 NFE}
        & \hgmh{\dmdtwodag}{4 NFE}
        & \hgmh{\textbf{\methodname{} (Ours)}}{\textbf{4 NFE}} \\[2pt]
        \hgmrow{751}{0}{25}
        \hgmrow{743}{68}{113}
        \hgmrow{895}{5}{82}
    \end{tabular}
    \caption{\textbf{Qualitative results for 4-NFE MiniMax-H3 on VideoGen-Eval.} \methodname{} shows fewer artifacts, more realistic textures, and better motion. Each prompt shows an earlier and a later frame of the same clip; \suppref{sec:supp-h3-qualitative} shows more prompts.}
    \label{fig:h3_grid_main}
\vspace{-10pt}
\end{figure}

\begin{figure}[!t]
    \centering
    \setlength{\tabcolsep}{0pt}
    \renewcommand{\arraystretch}{0}
    \newcommand{\hmw}{\dimexpr(\textwidth-9pt)/4\relax}
    \newcommand{\hmf}[2]{\includegraphics[width=\hmw]{figures/src/motion969/h3_#1_969_#2.jpg}}
    \newcommand{\hmt}[1]{\parbox[b]{\hmw}{\centering\scriptsize #1}}
    \begin{tabular}{@{}c@{\hspace{3pt}}c@{\hspace{3pt}}c@{\hspace{3pt}}c@{}}
        \hmt{\anyflow$^{\dagger}$~\citep{gu2026anyflow}} &
        \hmt{DMD$^{\dagger}$~\citep{yin2024onestep}} &
        \hmt{\dmdtwodag~\citep{yin2024improved}} &
        \hmt{\textbf{\methodname{} (Ours)}} \\[2pt]
        \hmf{af}{t075} & \hmf{dmd}{t075} & \hmf{dmd2}{t075} & \hmf{pdmd}{t075} \\
        \hmf{af}{t200} & \hmf{dmd}{t200} & \hmf{dmd2}{t200} & \hmf{pdmd}{t200} \\
    \end{tabular}
    \caption{\textbf{Motion on 4-NFE MiniMax-H3} (VideoGen-Eval prompt 760).
    The compared baselines show translucent ghosting artifacts and severely blur the animals, while \methodname{} keeps them sharp.}
    \label{fig:h3_motion_main}
\vspace{-10pt}
\end{figure}

\vspace{-16pt}
\paragraph{Audio.}
\methodname{} leads the compared 4-NFE models on all six audio metrics (\cref{tab:h3}), coming within $0.04$ of the 50-step teacher on production quality (PQ). Metric definitions are in \suppref{sec:supp-h3-audio-crossmodal}.

\vspace{-8pt}
\paragraph{User study.}
Annotators prefer \methodname{} to every compared 4-NFE model in overall, visual, motion, and audio quality (\cref{tab:h3}). Text-alignment judgments are mostly ties (64--80\%; \suppref{tab:supp-h3-userstudy}). Unlike on Wan2.1, the 50-step H3 teacher remains preferred by 37.0 percentage points overall and 35.4 in visual quality. This larger residual gap may reflect the frontier teacher's higher quality ceiling, making four-step distillation more challenging than on Wan2.1.

\vspace{-8pt}
\section{Ablations}
\label{sec:ablations}

\ifdefined\pzbase\else\newlength{\pzbase}\fi
\ifdefined\pzcellw\else\newlength{\pzcellw}\fi
\ifdefined\pzcellh\else\newlength{\pzcellh}\fi
\newcommand{\pzsetup}{%
    \setlength{\tabcolsep}{0pt}%
    \renewcommand{\arraystretch}{0}%
    \global\setlength{\pzbase}{\dimexpr(\textwidth-5pt)/6\relax}%
    \global\setlength{\pzcellw}{\pzbase}%
    \global\setlength{\pzcellh}{\pzbase}%
}
\newcommand{\pzp}[1]{\makebox[0.17in][c]{%
    \raisebox{\dimexpr0.5\pzcellh-0.5\height\relax}[0pt][0pt]{%
    \rotatebox{90}{\scriptsize Prompt #1}}}}
\newcommand{\pzi}[1]{\makebox[0.15in][r]{%
    \raisebox{\dimexpr0.5\pzcellh-0.5\height\relax}[0pt][0pt]{\tiny #1}}}
\newcommand{\pzh}[2]{\makebox[\pzbase][c]{%
    \shortstack{\scriptsize #1\\[1pt]\scriptsize #2}}}
\newcommand{\pz}[1]{\includegraphics[width=\pzcellw]%
    {figures/src/projection_ablation_387/#1}}
\newcommand{\pzrow}[2]{\pz{dmd_#1_#2} & \pz{rand_#1_#2} & \pz{cscore_#1_#2} &
    \pz{tres_#1_#2} & \pz{rpar_#1_#2} & \pz{pdmd_#1_#2}}
\newcommand{\pzhead}{%
    \pzh{DMD$^{\dagger}$}{(no projection)}
    & \pzh{Random}{direction $\perp$}
    & \pzh{Critic}{score $\perp$}
    & \pzh{Teacher endpoint}{residual $\perp$}
    & \pzh{Critic endpoint}{residual $\parallel$}
    & \pzh{\textbf{Critic endpoint}}{\textbf{residual $\perp$ (\methodname{})}}}

\begin{figure}[!t]
    \centering
    \pzsetup
    \begin{tabular}{@{}c@{\hspace{1pt}}c@{\hspace{1pt}}c@{\hspace{1pt}}c@{\hspace{1pt}}c@{\hspace{1pt}}c@{}}
        \pzhead \\[2pt]
        \global\setlength{\pzcellw}{\pzbase}\global\setlength{\pzcellh}{\dimexpr\pzbase*640/832\relax}%
        \pzrow{882}{1500} \\
    \end{tabular}
    \caption{\textbf{Qualitative ablation of the projection direction on VideoGen-Eval} (prompt 882). Results at 1,500 training iterations. In this example, projecting out the update component parallel to the student--critic endpoint residual yields less saturation than the other variants.}
    \label{fig:projection-ablation-main}
\end{figure}

\vspace{-8pt}
\paragraph{Stability compared with alternative projections.}
\label{sec:ablation-dynamics}
Six MiniMax-H3 variants differ only in the projection applied to the \dmd update. At 1,500 iterations, only \methodname avoids saturation or high-frequency artifacts in \cref{fig:projection-ablation-main}; \suppref{fig:projection-ablation} follows three prompts through 3,000 iterations. \Cref{fig:h3-score-curves} compares training dynamics quantitatively on all 387 VideoGen-Eval prompts: from 500 to 3,000 iterations, \methodname maintains its total score while the other variants decline.
\begin{figure}[!t]
    \centering
    \includegraphics[width=\linewidth]{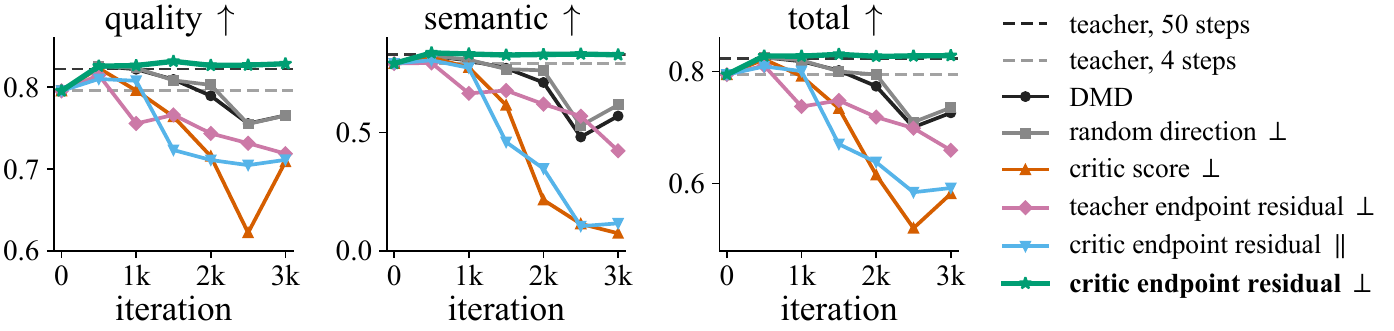}
    \caption{\textbf{Quantitative ablation study: training dynamics of MiniMax-H3 on VideoGen-Eval.} \textcolor{pdmdgreen}{\methodname{}} holds its total score; the others decline.}
    \label{fig:h3-score-curves}
\end{figure}

\paragraph{Direction matters more than magnitude.}
Random projection removes almost no update norm and behaves like \dmd. \methodname removes a median $12\%$ and remains stable over the evaluated interval. The residual-parallel variant keeps only $11\%$ and degrades fastest, consistent with the parallel component carrying destabilizing error (\suppref{sec:h3-norm-ratio}). Restoring half of the removed component also leads to degradation, but later than \dmd (\suppref{sec:h3-partial-retention}). Raising the \methodname student learning rate by $1.25\times$ preserves stability through 5,000 iterations, ending at 83.18 total (\suppref{sec:h3-larger-step}). \dmd at one tenth of the learning rate still oversaturates and peaks at 82.76 total at 5,500 iterations, below \methodname (\suppref{tab:supp-sat}). Together, these controls show that update shrinkage alone does not explain the stability gain. In addition, with one critic update per student update, \methodname scores 82.90 total, versus 82.37 for \dmd at the same update ratio, and exceeds every 4-NFE baseline in \cref{tab:h3} (see \suppref{tab:supp-h3-critic-ratio}).

\section{Limitations}
\label{sec:limitations}

\begin{wrapfigure}[6]{r}{0.50\textwidth}
        \vspace{-\intextsep}
        \centering
        \makebox[0.33\linewidth]{\scriptsize \methodname\ (4 NFE)}%
        \makebox[0.34\linewidth]{\scriptsize \methodname\ (2 NFE)}%
        \makebox[0.33\linewidth]{\scriptsize \methodname\ (1 NFE)}\\[2pt]
        \includegraphics[width=\linewidth]{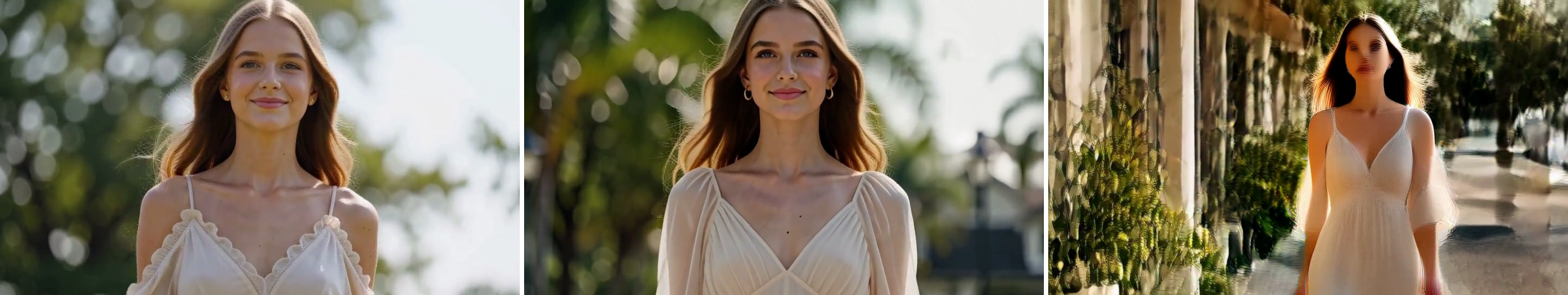}
        \caption{\textbf{Single-step limitation.}}
        \label{fig:limitation}
\end{wrapfigure}

Single-step quality remains limited (\cref{fig:limitation}) and may require an additional objective, such as a discriminator loss. See \cref{sec:h3-two-step,fig:h3_nfe2_samples} for more 2-NFE results and \cref{sec:failure-cases} for further limitations in the supplement.

\section{Conclusion}
\label{sec:conclusion}

We introduced \methodname{}, a one-line DMD modification that projects out the component of the student update parallel to the student--critic endpoint residual, an unbiased estimate of critic error at a fixed query. Under high-dimensional assumptions, the projection removes a constant fraction of critic error while discarding a vanishing fraction of useful signal. \methodname{} requires no additional objective, network, model pass, data, or training stage. Across Wan2.1 and MiniMax-H3, \methodname{} remains stable in settings where DMD degrades, improves video scores, is preferred to the compared 4-NFE baselines in user studies, and performs best on all audio metrics in \cref{tab:h3}. These results highlight \methodname as a promising approach to stable few-step video distillation. \methodname{} further opens up opportunities to combine critic-error filtering with existing distillation methods as a future work.\label{sec:main-text-end}

\subsection*{Ethics Statement}

This work distills two released text-to-video models using existing training prompts and public evaluation benchmarks, without real-video data. The only newly collected data are human preference judgments for the user studies. A distilled student inherits the capabilities and biases of its teacher, and faster generation lowers the cost of producing misleading video. We have not evaluated whether distillation preserves the safety behavior of the base models.

\subsection*{Reproducibility Statement}

We will release the training and evaluation code of the video and 2D experiments, the distilled Wan2.1-T2V-1.3B student weights, and the MiniMax-H3 distillation LoRA. \Cref{alg:pdmd-main} and \cref{eq:method-pdmd-projection} give the complete student update; the projection is the only change from \dmd, so the method adds one line to an existing \dmd implementation. \Suppref{sec:experimental-details} lists every training setting of the video experiments, including the trainable parameters, data, clip size, batch, optimizer, learning rates, critic update ratio and samplers (\suppref{tab:supp-train-config}), and describes the baselines and the evaluation protocols of MiniMax-H3 (\suppref{sec:h3-eval-protocol}) and Wan2.1-T2V-1.3B (\suppref{sec:supp-wan-protocol}), with the prompt sets, seeds, samplers and metric definitions. \Suppref{sec:supp-2d-experiments} specifies the 2D experiments, and \suppref{sec:proofs} gives the full proofs of the theoretical results.

\subsection*{AI use statement}

Generative AI tools assisted with theoretical analysis, implementation, and language polishing. The authors reviewed all AI-assisted content, checked the derivations and proofs, conducted and validated the experiments, and take responsibility for the final content.

\bibliography{main}
\bibliographystyle{iclr2027_conference}

\clearpage
\appendix
\addtocontents{toc}{\protect\setcounter{tocdepth}{2}}
\numberwithin{figure}{section}
\numberwithin{table}{section}
\numberwithin{equation}{section}
\numberwithin{theorem}{section}
\renewcommand{\thetheorem}{\thesection\arabic{theorem}}
\renewcommand{\thelemma}{\thesection\arabic{theorem}}
\renewcommand{\theproposition}{\thesection\arabic{theorem}}
\renewcommand{\thecorollary}{\thesection\arabic{theorem}}
\renewcommand{\thefigure}{\thesection\arabic{figure}}
\renewcommand{\thetable}{\thesection\arabic{table}}
\renewcommand{\theequation}{\thesection\arabic{equation}}

\renewcommand{\contentsname}{Appendix}
\setcounter{tocdepth}{2}
\tableofcontents

\section{Proofs}
\label{sec:proofs}

\subsection{\texorpdfstring{$\ell_2$}{L2}-Optimal Critic Endpoint}
\label{sec:proof-l2-optimal-critic}

Let $(Q,\bm S)$ denote the critic-training query--endpoint pair.  We first establish the conditional-moment identities induced by the critic's $\ell_2$ endpoint-training objective.

\begin{lemma}[$\ell_2$-optimal critic endpoint]
\label{lem:supp-l2-conditional-mean}
Assume $\mathbb E[\|\bm S\|^2\mid Q=q]<\infty$.  The unique minimizer of the conditional squared-error risk
\begin{equation}
    \mathcal L_q(\bm v)
    :=
    \mathbb E[\|\bm v-\bm S\|^2\mid Q=q]
    \label{eq:supp-conditional-risk}
\end{equation}
is
\begin{equation}
    \bm c^\star
    =
    \arg\min_{\bm v}\mathcal L_q(\bm v)
    =
    \mathbb E[\bm S\mid Q=q].
    \label{eq:supp-l2-optimal-endpoint}
\end{equation}
Consequently, for
\begin{equation}
    \bm\xi=\bm S-\bm c^\star,
    \qquad
    \bm\Sigma=\operatorname{Cov}(\bm S\mid Q=q),
    \label{eq:supp-conditional-residual}
\end{equation}
we have
\begin{equation}
    \mathbb E[\bm\xi\mid Q=q]=\bm 0,
    \qquad
    \mathbb E[\bm\xi\bm\xi^{\mathsf T}\mid Q=q]=\bm\Sigma,
    \qquad
    \mathbb E[\|\bm\xi\|^2\mid Q=q]=\operatorname{tr}\bm\Sigma.
    \label{eq:supp-residual-moments}
\end{equation}
\end{lemma}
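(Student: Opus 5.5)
The plan is the standard bias--variance decomposition of the conditional squared-error risk. First, the finite second-moment assumption $\mathbb E[\|\bm S\|^2\mid Q=q]<\infty$ gives finiteness of the conditional mean $\bm c^\star=\mathbb E[\bm S\mid Q=q]$: by Jensen, or by Cauchy--Schwarz coordinatewise, $\|\mathbb E[\bm S\mid Q=q]\|^2\le\mathbb E[\|\bm S\|^2\mid Q=q]$. As a consequence $\mathcal L_q(\bm v)$ is finite for every fixed $\bm v$.

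Next, for an arbitrary deterministic $\bm v$, I insert and subtract $\bm c^\star$ and expand:
\begin{equation*}
    \mathcal L_q(\bm v)
    =
    \|\bm v-\bm c^\star\|^2
    +2(\bm v-\bm c^\star)^{\mathsf T}\,\mathbb E[\bm c^\star-\bm S\mid Q=q]
    +\mathbb E[\|\bm c^\star-\bm S\|^2\mid Q=q].
\end{equation*}
The middle term vanishes because $\bm v-\bm c^\star$ is fixed given $Q=q$, so it can be pulled outside the expectation, and $\mathbb E[\bm S\mid Q=q]=\bm c^\star$. This leaves
\begin{equation*}
    \mathcal L_q(\bm v)=\|\bm v-\bm c^\star\|^2+\mathcal L_q(\bm c^\star).
\end{equation*}
Hence $\mathcal L_q(\bm v)\ge\mathcal L_q(\bm c^\star)$, with equality iff $\bm v=\bm c^\star$. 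That proves both existence and uniqueness of the minimizer.

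The moment identities then follow from the definitions. By linearity of conditional expectation, $\mathbb E[\bm\xi\mid Q=q]=\mathbb E[\bm S\mid Q=q]-\bm c^\star=\bm 0$. Since $\bm\xi$ is $\bm S$ centered at its conditional mean, $\mathbb E[\bm\xi\bm\xi^{\mathsf T}\mid Q=q]$ is, by definition, $\operatorname{Cov}(\bm S\mid Q=q)=\bm\Sigma$. Its entries are finite by the second-moment assumption. Finally, taking the trace and exchanging trace with expectation (both are linear) gives $\mathbb E[\|\bm\xi\|^2\mid Q=q]=\operatorname{tr}\bm\Sigma$. The same computation also identifies the Bayes risk $\mathcal L_q(\bm c^\star)$ with $\operatorname{tr}\bm\Sigma$, which is the fact quoted in the main text.

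No step is genuinely hard. The only care point is measure-theoretic: conditioning on $Q=q$ should be read via a regular conditional distribution of $\bm S$ given $Q$. With that reading, all the statements are ordinary expectations under a fixed probability measure on endpoints, and the argument above applies verbatim. I would state this convention explicitly at the start of the proof.
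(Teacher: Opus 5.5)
Your proposal is correct and follows the paper's proof essentially step for step. Both insert and subtract the conditional mean, drop the cross term to obtain $\mathcal L_q(\bm v)=\mathcal L_q(\bm c^\star)+\|\bm v-\bm c^\star\|^2$, and read off the moment identities from the definitions and the linearity of trace and conditional expectation; your remarks on the finiteness of $\bm c^\star$ via Jensen and on reading the conditioning through a regular conditional distribution are sensible refinements that the paper leaves implicit.
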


\begin{proof}
Let $\bm m=\mathbb E[\bm S\mid Q=q]$.  For any deterministic prediction $\bm v$, write
\begin{equation}
    \bm v-\bm S
    =
    (\bm v-\bm m)+(\bm m-\bm S).
\end{equation}
Expanding its squared norm and taking the conditional expectation gives
\begin{align}
    \mathcal L_q(\bm v)
    &=
    \|\bm v-\bm m\|^2
    +\mathbb E[\|\bm S-\bm m\|^2\mid Q=q] \notag\\
    &\quad
    +2(\bm v-\bm m)^{\mathsf T}
      \mathbb E[\bm m-\bm S\mid Q=q].
    \label{eq:supp-risk-expansion}
\end{align}
By the definition of $\bm m$,
\begin{equation}
    \mathbb E[\bm m-\bm S\mid Q=q]
    =
    \bm m-\mathbb E[\bm S\mid Q=q]
    =
    \bm 0.
\end{equation}
Hence
\begin{equation}
    \mathcal L_q(\bm v)
    =
    \mathcal L_q(\bm m)+\|\bm v-\bm m\|^2.
    \label{eq:supp-risk-pythagorean}
\end{equation}
The second term is nonnegative and vanishes if and only if $\bm v=\bm m$, which proves both optimality and uniqueness in \cref{eq:supp-l2-optimal-endpoint}.

Substituting $\bm c^\star=\bm m$ into the definition of $\bm\xi$ gives
\begin{equation}
    \mathbb E[\bm\xi\mid Q=q]
    =
    \mathbb E[\bm S\mid Q=q]-\bm c^\star
    =
    \bm 0.
\end{equation}
Because $\bm c^\star$ is the conditional mean, the definition of conditional covariance becomes
\begin{equation}
    \bm\Sigma
    =
    \mathbb E\!\left[
        (\bm S-\bm c^\star)(\bm S-\bm c^\star)^{\mathsf T}
        \mid Q=q
    \right]
    =
    \mathbb E[\bm\xi\bm\xi^{\mathsf T}\mid Q=q].
\end{equation}
Finally, using $\|\bm\xi\|^2=\operatorname{tr} (\bm\xi\bm\xi^{\mathsf T})$ and linearity of conditional expectation and trace,
\begin{align}
    \mathbb E[\|\bm\xi\|^2\mid Q=q]
    &=
    \mathbb E[\operatorname{tr}(\bm\xi\bm\xi^{\mathsf T})\mid Q=q]\\
    &=
    \operatorname{tr}\mathbb E[\bm\xi\bm\xi^{\mathsf T}\mid Q=q]
    =
    \operatorname{tr}\bm\Sigma.
\end{align}
This proves \cref{eq:supp-residual-moments}.
\end{proof}

\subsection{High-Dimensional Error Removal and Signal Retention}
\label{sec:proof-high-dimensional-filtering}

We make precise the distinction between the structural alignment of critic error with the \methodname{} residual and the non-structural alignment of an ideal update with a rank-one direction.  Throughout this subsection, we condition on a fixed query $Q=q$ and suppress this conditioning from the notation.  We use the critic-training pair $(Q,\bm S)$ and the corresponding quantities $\bm c^\star$, $\bm\xi$, and $\bm\Sigma$ defined in \cref{lem:supp-l2-conditional-mean}.  The endpoint of the pair is the student endpoint, $\bm S=\bm x_0^s$, so the residual below is the $\bm r$ that \methodname{} projects along, read at a fixed query (\cref{sec:analysis-critic-average}).

For the learned critic endpoint $\bm x_0^c$, define
\begin{equation}
    \bm e=\bm x_0^c-\bm c^\star,
    \qquad
    \bm R=\bm x_0^c-\bm S=\bm e-\bm\xi.
    \label{eq:supp-structural-residual}
\end{equation}
Thus, unlike a generic direction, $\bm R$ contains the critic error $\bm e$ explicitly.

For $\bm e\neq\bm 0$ and $\bm R\neq\bm 0$, let $\bm P_{\bm R}$ denote the orthogonal projector onto $\bm R$ and define the critic-error removal ratio
\begin{equation}
    \gamma_e(\bm R)
    :=
    \frac{\|\bm P_{\bm R}\bm e\|^2}{\|\bm e\|^2}
    =
    \frac{(\bm e^{\mathsf T}\bm R)^2}
         {\|\bm e\|^2\|\bm R\|^2}.
    \label{eq:supp-removal-ratio-expanded}
\end{equation}
Set $\gamma_e(\bm 0)=0$ when the residual vanishes.

We next characterize this ratio for a typical realization under explicit high-dimensional assumptions.  Define the covariance effective dimension
\begin{equation}
    d_{\mathrm{eff}}
    :=
    \frac{(\operatorname{tr}\bm\Sigma)^2}
         {\operatorname{tr}(\bm\Sigma^2)}.
    \label{eq:supp-effective-dimension}
\end{equation}

\begin{theorem}[High-dimensional critic-error removal]
\label{thm:supp-typical-removal}
Consider a sequence of conditional problems for which $d_{\mathrm{eff}}\to\infty$.  Write
\begin{equation}
    a=\|\bm e\|^2,
    \qquad
    \tau=\operatorname{tr}\bm\Sigma.
\end{equation}
Assume $a,\tau>0$ and the concentration conditions
\begin{equation}
    \frac{\|\bm\xi\|^2-\tau}{\tau}
    =O_{\mathbb P}(d_{\mathrm{eff}}^{-1/2}),
    \qquad
    \frac{\bm e^{\mathsf T}\bm\xi}{\sqrt{a\tau}}
    =O_{\mathbb P}(d_{\mathrm{eff}}^{-1/2}).
    \label{eq:supp-removal-assumptions}
\end{equation}
Then
\begin{equation}
    \gamma_e(\bm R)
    =
    \frac{a}{a+\tau}
    +O_{\mathbb P}\!\left(d_{\mathrm{eff}}^{-1/2}\right).
    \label{eq:supp-typical-removal}
\end{equation}
No balance assumption on $a/\tau$ is required.  If additionally $a/\tau=\Omega(1)$, then $\gamma_e(\bm R)=\Theta_{\mathbb P}(1)$.
\end{theorem}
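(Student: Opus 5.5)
We will substitute the decomposition $\bm R=\bm e-\bm\xi$ from \cref{eq:supp-structural-residual} into the explicit form of $\gamma_e(\bm R)$ in \cref{eq:supp-removal-ratio-expanded}. This writes both numerator and denominator in terms of three scalars, $a$, $\bm e^{\mathsf T}\bm\xi$, and $\|\bm\xi\|^2$. Set $\varepsilon=d_{\mathrm{eff}}^{-1/2}$ and introduce the normalized fluctuations
\[
U=\frac{\|\bm\xi\|^2-\tau}{\tau},\qquad V=\frac{\bm e^{\mathsf T}\bm\xi}{\sqrt{a\tau}}.
\]
By assumption \cref{eq:supp-removal-assumptions}, both $U$ and $V$ are $O_{\mathbb P}(\varepsilon)$. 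The two pieces then read
\[
\bm e^{\mathsf T}\bm R=a-\sqrt{a\tau}\,V,\qquad
\|\bm R\|^2=a+\tau-2\sqrt{a\tau}\,V+\tau U.
\]

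The next step is to normalize so that no balance assumption on $a/\tau$ is needed. Let $p=a/(a+\tau)\in(0,1)$, so that $1-p=\tau/(a+\tau)$ and $\sqrt{a\tau}/(a+\tau)=\sqrt{p(1-p)}\le 1/2$. Dividing both pieces by $a+\tau$ and using $a/(a+\tau)=p$ gives
\[
\gamma_e(\bm R)=\frac{\bigl(p-\sqrt{p(1-p)}V\bigr)^2}{p\,\bigl(1-2\sqrt{p(1-p)}V+(1-p)U\bigr)}.
\]

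We then expand the numerator and divide through by $p$. This produces $p-2\sqrt{p(1-p)}\,V+(1-p)V^2$, an expression of the form $p+O_{\mathbb P}(\varepsilon)$ with implicit constants uniform in $p$. The $p$ in the denominator cancels cleanly against the leading $a^2$ behaviour, so no factor $1/p$ survives. This cancellation is exactly why no assumption on $a/\tau$ is required. The remaining denominator is $1+D$ with
\[
D=-2\sqrt{p(1-p)}\,V+(1-p)U,
\]
and $|D|\le|V|+|U|=O_{\mathbb P}(\varepsilon)$.

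The one point needing care, and the main (mild) obstacle, is controlling $1/(1+D)$ uniformly. On the event $\{|D|\le 1/2\}$, whose probability tends to one because $d_{\mathrm{eff}}\to\infty$, we have $1/(1+D)=1+O(|D|)$. Off that event nothing is claimed, which is permitted for $O_{\mathbb P}$ statements; the event $\bm R=\bm 0$ is likewise contained in its complement. Multiplying the two expansions gives $\gamma_e=p+O_{\mathbb P}(\varepsilon)$, which is \cref{eq:supp-typical-removal}.

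For the final claim, $a/\tau=\Omega(1)$ means $p$ is bounded away from $0$. Combined with $\gamma_e\le 1$, this shows $\gamma_e(\bm R)$ lies with high probability in $[p/2,1]$, so $\gamma_e(\bm R)=\Theta_{\mathbb P}(1)$.
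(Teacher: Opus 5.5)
Your proof is correct and follows essentially the same route as the paper. Both arguments substitute $\bm R=\bm e-\bm\xi$, rewrite $\gamma_e$ in terms of the normalized fluctuations of $\|\bm\xi\|^2$ and $\bm e^{\mathsf T}\bm\xi$, and control the denominator on an event of probability tending to one, with constants uniform in $a/\tau$. The only difference is cosmetic: you normalize by $a+\tau$ via $p=a/(a+\tau)$, so the bounds $\sqrt{p(1-p)}\le 1/2$ and $1-p\le 1$ come for free, whereas the paper normalizes by $\tau$ via $\kappa=a/\tau$ and invokes $\sqrt{\kappa}/(\kappa+1)\le 1/2$ and $\kappa/(\kappa+1)^2\le 1/4$.
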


\begin{proof}
Let
\begin{equation}
    v=\frac{\|\bm\xi\|^2-\tau}{\tau},
    \qquad
    b=\frac{\bm e^{\mathsf T}\bm\xi}{\sqrt{a\tau}},
    \qquad
    \kappa=\frac{a}{\tau}.
    \label{eq:supp-normalized-fluctuations}
\end{equation}
By assumption, $v,b=O_{\mathbb P}(d_{\mathrm{eff}}^{-1/2})$.  Substituting $\bm R=\bm e-\bm\xi$ into \cref{eq:supp-removal-ratio-expanded} gives
\begin{equation}
    \gamma_e(\bm R)
    =
    \frac{(a-\bm e^{\mathsf T}\bm\xi)^2}
         {a(a+\|\bm\xi\|^2-2\bm e^{\mathsf T}\bm\xi)}
    =
    \frac{(\sqrt{\kappa}-b)^2}
         {\kappa+1+v-2\sqrt{\kappa}b}.
    \label{eq:supp-normalized-removal}
\end{equation}
Subtracting $\kappa/(\kappa+1)$ exactly yields
\begin{equation}
    \gamma_e(\bm R)-\frac{\kappa}{\kappa+1}
    =
    \frac{-2\sqrt{\kappa}b+(\kappa+1)b^2-\kappa v}
         {(\kappa+1)(\kappa+1+v-2\sqrt{\kappa}b)}.
    \label{eq:supp-removal-difference}
\end{equation}
The perturbation of the second denominator factor relative to $\kappa+1$ satisfies
\begin{equation}
    \frac{|v-2\sqrt{\kappa}b|}{\kappa+1}
    \leq |v|+|b|=o_{\mathbb P}(1),
\end{equation}
where $2\sqrt{\kappa}/(\kappa+1)\leq1$.  Hence that factor is comparable to $\kappa+1$ with probability tending to one.  On this event, using $\sqrt{\kappa}/(\kappa+1)\leq1/2$ and $\kappa/(\kappa+1)^2\leq1/4$, the absolute value of \cref{eq:supp-removal-difference} is bounded by a constant multiple of
\begin{equation}
    |b|+b^2+|v|
    =O_{\mathbb P}(d_{\mathrm{eff}}^{-1/2}),
\end{equation}
uniformly over every $\kappa>0$.  Since $\kappa/(\kappa+1)=a/(a+\tau)$, this proves \cref{eq:supp-typical-removal}.  If $a/\tau=\Omega(1)$, the leading term is bounded away from zero, which gives the final statement.
\end{proof}

\begin{corollary}[Gaussian sufficient conditions]
\label{cor:supp-gaussian-removal}
Suppose $\bm\xi\sim\mathcal N(\bm 0,\bm\Sigma)$ and, for a constant $K<\infty$ independent of dimension,
\begin{equation}
    \widehat{\bm e}^{\mathsf T}\bm\Sigma\widehat{\bm e}
    \leq K\frac{\tau}{d_{\mathrm{eff}}},
    \qquad
    \widehat{\bm e}=\frac{\bm e}{\|\bm e\|}.
    \label{eq:supp-gaussian-directional-condition}
\end{equation}
Then the concentration conditions in \cref{eq:supp-removal-assumptions} hold.
\end{corollary}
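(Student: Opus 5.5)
We need to verify the two conditions of \cref{eq:supp-removal-assumptions}: the norm fluctuation $(\|\bm\xi\|^2-\tau)/\tau$ and the cross term $\bm e^{\mathsf T}\bm\xi/\sqrt{a\tau}$ are both $O_{\mathbb P}(d_{\mathrm{eff}}^{-1/2})$. The approach is to compute the second moments of both quantities exactly under Gaussianity and then apply Chebyshev's inequality. A second-moment bound of order $d_{\mathrm{eff}}^{-1}$ yields the required $O_{\mathbb P}(d_{\mathrm{eff}}^{-1/2})$ rate directly, so no finer concentration, such as Hanson--Wright, is needed.

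\textbf{Norm condition.} Write $\bm\xi=\bm\Sigma^{1/2}\bm g$ with $\bm g\sim\mathcal N(\bm 0,\bm I)$. Diagonalize $\bm\Sigma=\bm U\operatorname{diag}(\lambda_i)\bm U^{\mathsf T}$, so that
\begin{equation*}
\|\bm\xi\|^2=\sum_i\lambda_i h_i^2,
\end{equation*}
where the $h_i$ are i.i.d. standard normal. Then $\mathbb E\|\bm\xi\|^2=\tau$, and independence together with $\operatorname{Var}(h_i^2)=2$ gives
\begin{equation*}
\operatorname{Var}\|\bm\xi\|^2=2\sum_i\lambda_i^2=2\operatorname{tr}(\bm\Sigma^2).
\end{equation*}
Hence
\begin{equation*}
\mathbb E\!\left[\left(\frac{\|\bm\xi\|^2-\tau}{\tau}\right)^{2}\right]=\frac{2\operatorname{tr}(\bm\Sigma^2)}{\tau^2}=\frac{2}{d_{\mathrm{eff}}}.
\end{equation*}
By Chebyshev, for any $M>0$ the probability that the normalized deviation exceeds $M d_{\mathrm{eff}}^{-1/2}$ is at most $2/M^2$, uniformly along the sequence. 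This is exactly the statement $O_{\mathbb P}(d_{\mathrm{eff}}^{-1/2})$. This part uses only Gaussianity and no directional assumption.

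\textbf{Cross condition.} The scalar $\bm e^{\mathsf T}\bm\xi$ is centered Gaussian with variance
\begin{equation*}
\bm e^{\mathsf T}\bm\Sigma\bm e=a\,\widehat{\bm e}^{\mathsf T}\bm\Sigma\widehat{\bm e}\le aK\tau/d_{\mathrm{eff}},
\end{equation*}
where the inequality is \cref{eq:supp-gaussian-directional-condition}. Therefore $b=\bm e^{\mathsf T}\bm\xi/\sqrt{a\tau}$ has $\mathbb E b^2\le K/d_{\mathrm{eff}}$, and Chebyshev again gives $b=O_{\mathbb P}(d_{\mathrm{eff}}^{-1/2})$ with constants depending only on $K$. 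Because $b$ is exactly Gaussian, one could also get sub-Gaussian tails, but this is unnecessary.

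\textbf{Remarks and main obstacle.} The argument takes $\bm e$ as deterministic, which holds under the conditioning on $Q=q$ adopted throughout the subsection, and uses $a,\tau>0$ only for the normalizations. The only point requiring care is interpreting $O_{\mathbb P}$ along a sequence of problems whose dimension and $\bm\Sigma$ vary. The relevant fact is that both Chebyshev bounds have constants ($2$ and $K$) independent of the problem index, which gives tightness of $d_{\mathrm{eff}}^{1/2}v$ and $d_{\mathrm{eff}}^{1/2}b$. Once this is noted, \cref{thm:supp-typical-removal} applies directly.
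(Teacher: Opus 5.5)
Your proposal is correct and matches the paper's proof essentially step for step. The norm condition uses the exact variance $\operatorname{Var}(\|\bm\xi\|^2)=2\operatorname{tr}(\bm\Sigma^2)=2\tau^2/d_{\mathrm{eff}}$ obtained by diagonalization, followed by Chebyshev; the cross condition uses the Gaussian variance bound $\widehat{\bm e}^{\mathsf T}\bm\Sigma\widehat{\bm e}/\tau\le K/d_{\mathrm{eff}}$, again followed by Chebyshev.
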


\begin{proof}
Diagonalizing $\bm\Sigma$ and using the moments of independent standard Gaussian coordinates gives
\begin{equation}
    \operatorname{Var}(\|\bm\xi\|^2)
    =2\operatorname{tr}(\bm\Sigma^2)
    =\frac{2\tau^2}{d_{\mathrm{eff}}}.
    \label{eq:supp-norm-variance}
\end{equation}
Chebyshev's inequality proves the first concentration condition.  Moreover, $\bm e^{\mathsf T}\bm\xi$ is centered Gaussian and
\begin{equation}
    \operatorname{Var}\!\left(
        \frac{\bm e^{\mathsf T}\bm\xi}{\sqrt{a\tau}}
    \right)
    =
    \frac{\widehat{\bm e}^{\mathsf T}\bm\Sigma\widehat{\bm e}}{\tau}
    \leq\frac{K}{d_{\mathrm{eff}}},
\end{equation}
which proves the second condition.
\end{proof}

The preceding result is structural: $\bm R=\bm e-\bm\xi$ contains $\bm e$ explicitly.  Small signal removal requires a different incoherence assumption on the fixed ideal signal direction (the ideal signal is weakly aligned with the residual).

\begin{proposition}[Fixed-signal high-dimensional retention]
\label{prop:supp-signal-retention}
At fixed $q$, let $\bm s_{\mathrm{critic}}^\star(q)$ denote the population-optimal critic score and define the ideal DMD update $\bm d^\star=\bm s_{\mathrm{critic}}^\star(q) -\bm s_{\mathrm{teacher}}(q)$.  Assume $\bm d^\star\neq\bm 0$, and let
\begin{equation}
    \bm s=\frac{\bm d^\star}{\|\bm d^\star\|},
    \qquad
    \gamma_s(\bm R)
    :=
    \frac{\|\bm P_{\bm R}\bm d^\star\|^2}
         {\|\bm d^\star\|^2}
    =
    \frac{(\bm s^{\mathsf T}\bm R)^2}{\|\bm R\|^2},
    \label{eq:supp-signal-directions}
\end{equation}
with $\gamma_s(\bm 0)=0$.  Suppose the concentration conditions in \cref{eq:supp-removal-assumptions} hold and, for a constant $K_s<\infty$ independent of dimension,
\begin{equation}
    (\bm s^{\mathsf T}\bm e)^2
    \leq K_s\frac{a}{d_{\mathrm{eff}}},
    \qquad
    \bm s^{\mathsf T}\bm\Sigma\bm s
    \leq K_s\frac{\tau}{d_{\mathrm{eff}}}.
    \label{eq:supp-signal-incoherence}
\end{equation}
Then
\begin{equation}
    \gamma_s(\bm R)=O_{\mathbb P}(d_{\mathrm{eff}}^{-1}),
    \qquad
    1-\gamma_s(\bm R)=1-O_{\mathbb P}(d_{\mathrm{eff}}^{-1}).
    \label{eq:supp-generic-signal-retention}
\end{equation}
\end{proposition}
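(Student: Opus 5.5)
We split $\gamma_s(\bm R)=(\bm s^{\mathsf T}\bm R)^2/\|\bm R\|^2$ into its numerator and denominator and bound them separately. Since $\bm R=\bm e-\bm\xi$, the numerator is $(\bm s^{\mathsf T}\bm e-\bm s^{\mathsf T}\bm\xi)^2\le 2(\bm s^{\mathsf T}\bm e)^2+2(\bm s^{\mathsf T}\bm\xi)^2$.

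The first term is deterministic at fixed $q$. By the first incoherence condition in \cref{eq:supp-signal-incoherence} it is at most $2K_s a/d_{\mathrm{eff}}$.

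For the second term we use only second moments. By \cref{lem:supp-l2-conditional-mean}, $\mathbb E[(\bm s^{\mathsf T}\bm\xi)^2]=\bm s^{\mathsf T}\bm\Sigma\bm s\le K_s\tau/d_{\mathrm{eff}}$. Markov's inequality then gives $(\bm s^{\mathsf T}\bm\xi)^2=O_{\mathbb P}(\tau/d_{\mathrm{eff}})$, with no Gaussianity needed. Together,
\begin{equation*}
(\bm s^{\mathsf T}\bm R)^2=O_{\mathbb P}\!\left(\frac{a+\tau}{d_{\mathrm{eff}}}\right).
\end{equation*}

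For the denominator, expand $\|\bm R\|^2=a+\|\bm\xi\|^2-2\bm e^{\mathsf T}\bm\xi$. In the normalized variables $v,b,\kappa$ of the proof of \cref{thm:supp-typical-removal}, this reads $\tau(\kappa+1+v-2\sqrt{\kappa}b)$. That proof already shows
\begin{equation*}
\frac{|v-2\sqrt{\kappa}b|}{\kappa+1}\le|v|+|b|=o_{\mathbb P}(1),
\end{equation*}
uniformly in $\kappa$, using the concentration conditions \cref{eq:supp-removal-assumptions}. Hence, with probability tending to one, $\|\bm R\|^2\ge \tfrac12(a+\tau)$. On that event $\bm R\neq\bm 0$, so the convention $\gamma_s(\bm 0)=0$ plays no role asymptotically.

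Dividing the two bounds gives $\gamma_s(\bm R)=O_{\mathbb P}(d_{\mathrm{eff}}^{-1})$, and the retention statement $1-\gamma_s=1-O_{\mathbb P}(d_{\mathrm{eff}}^{-1})$ follows immediately.

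The main point to check is that the scale $a+\tau$ cancels in the ratio uniformly in $\kappa$. This matters because neither $a$ nor $\tau$ need dominate. The cancellation works because the numerator bound is proportional to $a+\tau$ and the denominator lower bound is too.

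A minor formal step is combining $O_{\mathbb P}$ bounds with a high-probability event. For any $\varepsilon>0$, choose $M$ so that the Markov event $\{(\bm s^{\mathsf T}\bm\xi)^2\le M\tau/d_{\mathrm{eff}}\}$ has probability at least $1-\varepsilon/2$. Then intersect it with the denominator event, whose probability tends to one.
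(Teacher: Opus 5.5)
Your proof is correct and follows the paper's argument essentially step for step. The paper likewise bounds $(\bm s^{\mathsf T}\bm R)^2=O_{\mathbb P}((a+\tau)/d_{\mathrm{eff}})$ using the deterministic incoherence bound on $\bm s^{\mathsf T}\bm e$ and a second-moment (Chebyshev) bound on $\bm s^{\mathsf T}\bm\xi$. It then shows $\|\bm R\|^2/(a+\tau)=1+o_{\mathbb P}(1)$ through the normalized variables $v,b$ and the inequalities $\tau/(a+\tau)\le 1$ and $2\sqrt{a\tau}/(a+\tau)\le 1$; your explicit treatment of the $\bm R=\bm 0$ convention and of intersecting the $O_{\mathbb P}$ event with the denominator event only makes precise what the paper leaves implicit.
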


\begin{proof}
Because $\mathbb E[\bm\xi]=\bm 0$ and $\mathbb E[\bm\xi\bm\xi^{\mathsf T}]=\bm\Sigma$ under the fixed-query conditioning,
\begin{equation}
    \mathbb E[(\bm s^{\mathsf T}\bm\xi)^2]
    =\bm s^{\mathsf T}\bm\Sigma\bm s
    \leq K_s\frac{\tau}{d_{\mathrm{eff}}}.
\end{equation}
Chebyshev's inequality and \cref{eq:supp-signal-incoherence} therefore give
\begin{equation}
    \bm s^{\mathsf T}\bm e
    =O\!\left(\sqrt{\frac{a}{d_{\mathrm{eff}}}}\right),
    \qquad
    \bm s^{\mathsf T}\bm\xi
    =O_{\mathbb P}\!\left(
        \sqrt{\frac{\tau}{d_{\mathrm{eff}}}}
    \right).
\end{equation}
Since $\bm R=\bm e-\bm\xi$, it follows that
\begin{equation}
    (\bm s^{\mathsf T}\bm R)^2
    =O_{\mathbb P}\!\left(
        \frac{a+\tau}{d_{\mathrm{eff}}}
    \right).
    \label{eq:supp-signal-numerator}
\end{equation}
Writing $v$ and $b$ as in \cref{eq:supp-normalized-fluctuations},
\begin{equation}
    \frac{\|\bm R\|^2}{a+\tau}
    =
    1+\frac{\tau v-2\sqrt{a\tau}b}{a+\tau}
    =1+o_{\mathbb P}(1),
    \label{eq:supp-residual-norm-concentration}
\end{equation}
where we used $\tau/(a+\tau)\leq1$ and $2\sqrt{a\tau}/(a+\tau)\leq1$.  Dividing \cref{eq:supp-signal-numerator} by \cref{eq:supp-residual-norm-concentration} proves the result.
\end{proof}

\begin{corollary}[High-dimensional error--signal separation]
\label{cor:supp-high-dimensional-separation}
Under the conditions of \cref{thm:supp-typical-removal,prop:supp-signal-retention}, if $a/\tau=\Omega(1)$, then
\begin{equation}
    \boxed{
        \gamma_e=\Theta_{\mathbb P}(1),
        \qquad
        \gamma_s=O_{\mathbb P}(d_{\mathrm{eff}}^{-1})
    }.
    \label{eq:supp-order-separation}
\end{equation}
\end{corollary}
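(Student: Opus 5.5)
This corollary follows by combining the two results just proved, with no new estimates. The plan is to treat the two boxed claims separately, since each comes directly from one earlier statement.

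For the first claim, invoke \cref{thm:supp-typical-removal}. Its concentration conditions in \cref{eq:supp-removal-assumptions} are assumed here, so it gives $\gamma_e(\bm R)=a/(a+\tau)+O_{\mathbb P}(d_{\mathrm{eff}}^{-1/2})$. Under $a/\tau=\Omega(1)$, write $\kappa=a/\tau\geq c>0$ for all large indices in the sequence. Then the leading term $\kappa/(\kappa+1)$ lies in $[c/(c+1),1]$, because $x\mapsto x/(x+1)$ is increasing. Since $d_{\mathrm{eff}}\to\infty$, the fluctuation is $o_{\mathbb P}(1)$. Hence with probability tending to one, $\gamma_e$ lies in $[c/(2(c+1)),1]$. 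The upper bound $\gamma_e\leq1$ holds deterministically, because a projector is a contraction. This gives $\gamma_e=\Theta_{\mathbb P}(1)$; it is exactly the last sentence of \cref{thm:supp-typical-removal}.

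For the second claim, invoke \cref{prop:supp-signal-retention}. It uses the same concentration conditions together with the incoherence bounds \cref{eq:supp-signal-incoherence}, and these are assumed here. It yields $\gamma_s(\bm R)=O_{\mathbb P}(d_{\mathrm{eff}}^{-1})$ directly. The extra hypothesis $a/\tau=\Omega(1)$ is not needed for this part.

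The only point needing care is the meaning of $\Theta_{\mathbb P}(1)$: bounded in probability and bounded away from zero with probability tending to one. The main check is that the lower bound is uniform along the sequence. That requires reading $a/\tau=\Omega(1)$ as a uniform positive lower bound on $\kappa$ for all large indices. With this reading, both parts hold simultaneously on an event of probability tending to one, which gives the displayed separation \cref{eq:supp-order-separation}.
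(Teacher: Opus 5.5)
Your proposal is correct and follows the paper's proof, which likewise just combines the high-dimensional removal theorem (whose last sentence already gives $\gamma_e=\Theta_{\mathbb P}(1)$ when $a/\tau=\Omega(1)$) with the signal-retention proposition for $\gamma_s=O_{\mathbb P}(d_{\mathrm{eff}}^{-1})$. Your extra details make explicit what the paper leaves implicit: the monotonicity of $\kappa/(\kappa+1)$, the deterministic bound $\gamma_e\le 1$, and reading $\Omega(1)$ as a uniform positive lower bound on $\kappa$.
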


\begin{proof}
The conclusion follows immediately from \cref{thm:supp-typical-removal,prop:supp-signal-retention} and $a/\tau=\Omega(1)$.
\end{proof}

We next translate the fractional separation into an error guarantee for the projected update.  At a fixed noise level, let $\bm e'$ denote the score-space critic error, so that $\bm d=\bm d^\star+\bm e'$, and define $\widetilde{\bm d}=\bm P_{\bm R}^{\perp}\bm d$.  The score--endpoint conversion makes $\bm e'$ a scalar multiple of $\bm e$, so its projected energy fraction is also $\gamma_e$.  Orthogonality gives the exact identity
\begin{equation}
    \|\widetilde{\bm d}-\bm d^\star\|^2
    =
    \|\bm e'\|^2(1-\gamma_e)
    +\|\bm d^\star\|^2\gamma_s,
    \label{eq:supp-projected-correction-error}
\end{equation}
and therefore
\begin{equation}
    \|\bm d-\bm d^\star\|^2
    -\|\widetilde{\bm d}-\bm d^\star\|^2
    =
    \|\bm e'\|^2\gamma_e-\|\bm d^\star\|^2\gamma_s.
    \label{eq:supp-net-correction-improvement}
\end{equation}

\begin{corollary}[Net improvement over the DMD update]
\label{cor:supp-net-correction-improvement}
Under the conditions of \cref{cor:supp-high-dimensional-separation}, assume $\bm e'\neq\bm 0$ and
\begin{equation}
    \frac{\|\bm d^\star\|^2}{\|\bm e'\|^2}
    =o(d_{\mathrm{eff}}).
    \label{eq:supp-signal-error-energy-condition}
\end{equation}
Then
\begin{equation}
    \frac{
        \|\bm d-\bm d^\star\|^2
        -\|\widetilde{\bm d}-\bm d^\star\|^2
    }{\|\bm e'\|^2}
    =
    \gamma_e-
    \frac{\|\bm d^\star\|^2}{\|\bm e'\|^2}\gamma_s
    =\Theta_{\mathbb P}(1).
    \label{eq:supp-asymptotic-net-improvement}
\end{equation}
In particular, $\|\widetilde{\bm d}-\bm d^\star\|^2 <\|\bm d-\bm d^\star\|^2$ with probability tending to one.
\end{corollary}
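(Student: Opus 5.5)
Starting from the exact identity \cref{eq:supp-net-correction-improvement}, the plan is to divide both sides by $\|\bm e'\|^2>0$. This immediately gives the first equality in \cref{eq:supp-asymptotic-net-improvement}, with ratio $\rho:=\|\bm d^\star\|^2/\|\bm e'\|^2$. For completeness I would briefly re-derive that identity. Write $\bm d=\bm d^\star+\bm e'$, so that $\widetilde{\bm d}-\bm d^\star=\bm P_{\bm R}^{\perp}\bm e'-\bm P_{\bm R}\bm d^\star$. The two pieces lie in orthogonal subspaces, so the squared norm splits. Because $\bm e'$ is a scalar multiple of $\bm e$, we have $\|\bm P_{\bm R}^{\perp}\bm e'\|^2=\|\bm e'\|^2(1-\gamma_e)$, and by definition $\|\bm P_{\bm R}\bm d^\star\|^2=\|\bm d^\star\|^2\gamma_s$. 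Subtracting from $\|\bm d-\bm d^\star\|^2=\|\bm e'\|^2$ yields the identity. The degenerate case $\bm R=\bm 0$ has probability tending to zero, since $\|\bm R\|^2/(a+\tau)\to1$ by \cref{eq:supp-residual-norm-concentration}.

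Next I would bound the two terms separately. By \cref{cor:supp-high-dimensional-separation}, $\gamma_e=\Theta_{\mathbb P}(1)$. Concretely, \cref{thm:supp-typical-removal} gives $\gamma_e=\kappa/(\kappa+1)+o_{\mathbb P}(1)$, and $\kappa=a/\tau=\Omega(1)$ keeps $\kappa/(\kappa+1)\ge c_0>0$ eventually. Since also $\gamma_e\le1$, this yields constants $0<c_0/2\le\gamma_e\le1$ with probability tending to one. For the subtracted term, \cref{prop:supp-signal-retention} gives $\gamma_s=O_{\mathbb P}(d_{\mathrm{eff}}^{-1})$. Combined with \cref{eq:supp-signal-error-energy-condition}, $\rho=o(d_{\mathrm{eff}})$, this gives $\rho\gamma_s=o(d_{\mathrm{eff}})\cdot O_{\mathbb P}(d_{\mathrm{eff}}^{-1})=o_{\mathbb P}(1)$. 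This product of a deterministic $o$ and a stochastic $O_{\mathbb P}$ is the one point needing care. Given $\varepsilon$, pick $M$ with $\mathbb P(d_{\mathrm{eff}}\gamma_s>M)<\varepsilon$; then $\rho\gamma_s\le M\rho/d_{\mathrm{eff}}\to0$ on the complementary event.

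Finally, I would combine the two bounds. The difference $\gamma_e-\rho\gamma_s$ lies in $[c_0/4,1]$ with probability tending to one, which is $\Theta_{\mathbb P}(1)$. The normalized improvement is therefore strictly positive with probability tending to one. Multiplying back by $\|\bm e'\|^2>0$ gives $\|\widetilde{\bm d}-\bm d^\star\|^2<\|\bm d-\bm d^\star\|^2$ on that event.

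I expect no serious obstacle. The main subtlety is making the $\Theta_{\mathbb P}(1)$ lower bound uniform: we need $\gamma_e$ bounded below with high probability, not just in order of magnitude. I would obtain this directly from the explicit limit $\kappa/(\kappa+1)$ in \cref{thm:supp-typical-removal} rather than from the boxed summary.
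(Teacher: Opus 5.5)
Your proposal is correct and follows the paper's route. The paper divides the identity \cref{eq:supp-net-correction-improvement} by $\|\bm e'\|^2$, uses that $\gamma_e$ is bounded away from zero with probability tending to one, and gets $(\|\bm d^\star\|^2/\|\bm e'\|^2)\gamma_s=o_{\mathbb P}(1)$ from $\gamma_s=O_{\mathbb P}(d_{\mathrm{eff}}^{-1})$ and \cref{eq:supp-signal-error-energy-condition}; your extra details (re-deriving the identity, the explicit $o\cdot O_{\mathbb P}$ argument, and the negligible event $\bm R=\bm 0$) fill in steps the paper leaves implicit.
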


\begin{proof}
By \cref{cor:supp-high-dimensional-separation} and \cref{eq:supp-signal-error-energy-condition},
\begin{equation}
    \frac{\|\bm d^\star\|^2}{\|\bm e'\|^2}\gamma_s
    =o_{\mathbb P}(1),
\end{equation}
whereas $\gamma_e$ is bounded away from zero with probability tending to one. Substitution into \cref{eq:supp-net-correction-improvement} proves both claims.
\end{proof}

Thus, under the stated assumptions, error alignment is structural, whereas signal alignment is asymptotically weak: \methodname{} removes a constant fraction of critic-error energy, while the fraction of ideal-signal energy removed along the same rank-one direction vanishes with effective dimension.

\paragraph{Evidence at scale.}
Although $\bm d^\star$ and $\bm e$ are not observable on MiniMax-H3, the removed component itself can be decoded and seen. The same update target, decoded with and without the projection, loses its speckle and keeps its content (\cref{fig:dmd_vectors}; \cref{fig:supp-proj-endpoints}; \cref{sec:h3-proj-endpoints}). Three further measurements point the same way. The kept component carries more than $80\%$ of the update norm for most of training (\cref{sec:h3-norm-ratio}). Removing any of four other directions instead does not stabilize training (\cref{sec:ablations}, \cref{sec:h3-norm-ratio}). Putting half of the removed component back brings the degradation back (\cref{sec:h3-partial-retention}). The removed direction carries the destabilizing part of the update, and training with the projected update stays stable where \dmd degrades (\cref{fig:degradation}).

\paragraph{Projecting versus subtracting an error estimate.}
Since $\mathbb E[\bm R]=\bm e$, subtracting an estimate of the error is an alternative to deleting a direction. Write $\bm e'=\kappa\bm e$ with $\kappa>0$ and consider the family
\begin{equation}
    \widetilde{\bm d}_{\lambda}
    =
    \bm d-\lambda\kappa\bm R,
    \qquad
    \lambda\geq 0,
    \label{eq:supp-subtraction-family}
\end{equation}
which is the \dmd update at $\lambda=0$ and subtracts the unbiased estimate at $\lambda=1$. Since $\bm d-\lambda\kappa\bm R =\bm d^\star+(1-\lambda)\bm e'+\lambda\kappa\bm\xi$ and $\mathbb E[\bm\xi]=\bm 0$,
\begin{equation}
    \mathbb E\|\widetilde{\bm d}_{\lambda}-\bm d^\star\|^2
    =
    \kappa^2\bigl[(1-\lambda)^2a+\lambda^2\tau\bigr],
    \label{eq:supp-subtraction-risk}
\end{equation}
minimized at $\lambda^\star=a/(a+\tau)$ with value $\kappa^2a\tau/(a+\tau)$.

The unbiased choice $\lambda=1$ has expected squared error $\kappa^2\tau$: it removes the error in mean but injects the full conditional noise. The optimal fixed coefficient $\lambda^\star$ achieves $\kappa^2a\tau/(a+\tau)$ but depends on the unobservable $a$ and $\tau$. The projection needs neither; its realized error is given exactly by \cref{eq:supp-projected-correction-error}. Its coefficient $\langle\bm d,\bm R\rangle/\|\bm R\|^2$ equals $\kappa\lambda^\star(1+o_{\mathbb P}(1))$ under the conditions of \cref{cor:supp-net-correction-improvement}, so it asymptotically estimates the optimal fixed subtraction coefficient from the two available vectors. This convergence in probability does not by itself establish equality of the expected risks.

Three further properties favor the projection. The projection never reverses the \dmd direction: $\langle\bm P_{\bm R}^{\perp}\bm d,\bm d\rangle =\|\bm P_{\bm R}^{\perp}\bm d\|^2\geq 0$ for every realization, so the angle between $\bm P_{\bm R}^{\perp}\bm d$ and $\bm d$ never exceeds $90^\circ$, whereas $\langle\bm d-\lambda\kappa\bm R,\bm d\rangle =\|\bm d\|^2-\lambda\kappa\langle\bm R,\bm d\rangle$ turns negative once the subtracted term is large and aligned with $\bm d$. The projection never lengthens the update, $\|\bm P_{\bm R}^{\perp}\bm d\|\leq\|\bm d\|$, whereas $\bm d-\lambda\kappa\bm R$ can be longer than $\bm d$. The projection is invariant to any nonzero rescaling of $\bm R$, whereas subtraction needs $\kappa$ at every noise level, and an overestimate flips the remaining error. \Cref{sec:h3-partial-retention} evaluates this family at half the projection's own coefficient, and that run degrades as \dmd does.

\paragraph{Biased critic-error estimates with smaller deviation.}
\label{par:supp-biased-critic-error}
In graphics, a biased estimator with smaller variance often replaces an unbiased one to lower the mean squared error~\citep{zwicker2015recent}. A biased critic-error estimate with smaller mean-squared deviation offers a related tradeoff in our analysis. The proofs use $\bm\xi$ only through $\|\bm\xi\|^2$ and $\bm e^{\mathsf T}\bm\xi$. Let the subtracted endpoint deviate from $\bm c^\star$ by $\bm\xi'=\bm m+\bm\zeta$ with a fixed $\bm m$ and a centered $\bm\zeta$, so $\bm R'=\bm e-\bm\xi'$ has bias $-\bm m$, and write $\tau'=\|\bm m\|^2+\operatorname{tr}\operatorname{Cov}(\bm\zeta)$. If $\bm\xi'$ satisfies \cref{eq:supp-removal-assumptions} with $\tau'$ in place of $\tau$, \cref{thm:supp-typical-removal} gives $\gamma_e=a/(a+\tau')+O_{\mathbb P}(d_{\mathrm{eff}}^{-1/2})$: a bias costs the same as an equal amount of variance, and a deviation with smaller mean square removes more error. The fixed $\bm m$ may align with the signal, but it adds at most $\|\bm m\|^2/(a+\tau')$ to $\gamma_s$ up to a constant: the smaller the bias, the less its direction matters.

\subsection{Critic Lag and Allocation Stability}
\label{sec:proof-stability}

Distribution matching leaves the allocation of mass between separated modes weakly constrained, and the online critic sees that allocation only as it was some updates earlier.  This section models the resulting feedback loop and shows that the student--critic endpoint residual observes the delayed error itself. Lag is one component of the critic error $\bm e$ of \cref{eq:critic-endpoint-error}; \cref{eq:conditional-removal-bound} and \cref{sec:proof-high-dimensional-filtering} bound the removal of the total error, and this section models the lag component alone.

We keep the notation of \cref{eq:critic-conditional-statistics}: $(Q,\bm S)$ is the critic-training query--endpoint pair, $\bm c^\star(q)=\mathbb E[\bm S\mid Q=q]$, $\bm\xi=\bm S-\bm c^\star(q)$, and $\bm\Sigma(q)=\operatorname{Cov}(\bm S\mid Q=q)$.  Critic training re-diffuses the student's own endpoints (\cref{eq:dmd-noising}), so the critic-training endpoint is the student endpoint, $\bm S=\bm x_0^s$.  The residual that \methodname{} projects along, $\bm r=\bm x_0^c-\bm x_0^s$ of \cref{eq:perpendicular-projector}, is therefore the conditional endpoint residual $\bm R=\bm e-\bm\xi$ of \cref{sec:analysis-critic-average}, and $\bm P_{\bm r}=\bm P_{\bm R}$.

\paragraph{Allocation slice.}
At a fixed external condition, let the target have two separated mode regions with masses $\pi^\star$ and $1-\pi^\star$ for $\pi^\star\in(0,1)$, and otherwise arbitrary within-mode distributions.  Let $\mathcal M(\bm x_0)\in\{+,-\}$ denote the mode assignment and let $p_+$ and $p_-$ be the student's mode-conditioned endpoint distributions.  We isolate the local slice in which these distributions are fixed while their allocation varies,
\begin{equation}
    p_\theta(\bm x_0)
    =
    \pi(\theta)p_+(\bm x_0)
    +(1-\pi(\theta))p_-(\bm x_0),
    \label{eq:supp-allocation-slice}
\end{equation}
where the student induces the latent basins $A_\pm(\theta)=\{\bm z:\mathcal M(G_\theta(\bm z))=\pm\}$ and hence the allocation $\pi(\theta)=\mathbb P_{\bm z}[\bm z\in A_+(\theta)]$.  Write $u$ for training time, $\pi(u)$ for the allocation of the student at time $u$, and
\begin{equation}
    \Delta(u)=\pi(u)-\pi^\star
    \label{eq:supp-allocation-deviation}
\end{equation}
for its deviation from the matched allocation.

Fix the noise level $t$ and the external condition.  The query is then the noisy point itself, $Q_t=\alpha_t\bm S+\sigma_t\bm\epsilon$ of \cref{eq:dmd-noising}, so conditioning on $Q=q$ and on $Q_t=q$ coincide and we use whichever is clearer.  With $M=\mathcal M(\bm S)$, define the mode posterior, the mode-conditioned posterior means, and their contrast,
\begin{equation}
\begin{gathered}
    w_\pi(q)=\mathbb P(M=+\mid Q_t=q),
    \qquad
    \bm m_\pm(q)=\mathbb E[\bm S\mid Q_t=q,M=\pm],\\
    \Delta\bm m(q)=\bm m_+(q)-\bm m_-(q).
\end{gathered}
    \label{eq:supp-mode-posterior}
\end{equation}
Let $\bm c_\pi^\star(q)=\mathbb E[\bm S\mid Q_t=q]$ denote the population-optimal critic endpoint when the student allocation is $\pi$, so that $\bm c^\star(q)=\bm c^\star_{\pi(u)}(q)$ at training time $u$.

\begin{lemma}[Allocation moves the critic target along the mode contrast]
\label{lem:supp-allocation-barycenter}
For every allocation $\pi$, $\bm c_\pi^\star(q)=\bm m_-(q)+w_\pi(q)\Delta\bm m(q)$, and for any two allocations $\pi$ and $\pi'$,
\begin{equation}
    \bm c_{\pi'}^\star(q)-\bm c_\pi^\star(q)
    =
    \bigl(w_{\pi'}(q)-w_\pi(q)\bigr)\Delta\bm m(q).
    \label{eq:supp-allocation-shift}
\end{equation}
\end{lemma}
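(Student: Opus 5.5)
The plan is to use the law of total expectation over the binary mode variable $M=\mathcal M(\bm S)$, conditioned on $Q_t=q$. By \cref{lem:supp-l2-conditional-mean}, $\bm c_\pi^\star(q)=\mathbb E[\bm S\mid Q_t=q]$. Splitting on $M\in\{+,-\}$ gives
\begin{equation*}
    \bm c_\pi^\star(q)
    =
    w_\pi(q)\,\bm m_+(q)+\bigl(1-w_\pi(q)\bigr)\bm m_-(q)
    =
    \bm m_-(q)+w_\pi(q)\Delta\bm m(q).
\end{equation*}
This is the first claim.

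The key point is that $\bm m_\pm(q)$ do not depend on $\pi$. Under the allocation slice \cref{eq:supp-allocation-slice}, conditioning on $M=\pm$ gives $\bm S\sim p_\pm$, which is fixed. The noise $\bm\epsilon$ is independent of $\bm S$, so the joint law of $(\bm S,Q_t)$ given $M=\pm$ is $p_\pm(\bm x_0)\,\mathcal N(q;\alpha_t\bm x_0,\sigma_t^2\bm I)$, independent of $\pi$. Hence $\bm m_\pm(q)$ is a ratio of $\pi$-free integrals, and only $w_\pi(q)$ carries the allocation. By Bayes' rule,
\begin{equation*}
    w_\pi(q)=\frac{\pi\,p_{+,t}(q)}{\pi\,p_{+,t}(q)+(1-\pi)p_{-,t}(q)},
\end{equation*}
where $p_{\pm,t}$ are the mode-conditioned noisy marginals.

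With these two facts, \cref{eq:supp-allocation-shift} follows by subtracting the barycentric formula at $\pi$ from the one at $\pi'$. The common $\bm m_-(q)$ cancels, and the $\pi$-free contrast $\Delta\bm m(q)$ factors out.

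The only step that needs care is the $\pi$-independence of $\bm m_\pm(q)$. To make it rigorous, I will write the conditional means as explicit integrals against the Gaussian kernel. I will also assume $p_{\pm,t}(q)>0$, which always holds since the Gaussian kernel is positive, so that the conditionals are well defined. Boundary allocations $\pi\in\{0,1\}$ can be handled directly: $w_\pi$ is then $0$ or $1$, and the formula reduces to $\bm m_\mp$ or $\bm m_\pm$ as appropriate.
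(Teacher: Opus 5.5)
Your proof is correct and follows the paper's own argument: condition on $M$ to obtain $\bm c_\pi^\star(q)=w_\pi(q)\bm m_+(q)+(1-w_\pi(q))\bm m_-(q)$, observe that $\bm m_\pm(q)$ does not depend on $\pi$ because the mode-conditioned laws are held fixed in the allocation slice, and subtract the two instances. Your explicit Gaussian-kernel justification of that $\pi$-independence, the Bayes formula for $w_\pi(q)$, and the boundary cases only fill in details that the paper states in one line.
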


\begin{proof}
Conditioning on $M$ gives $\bm c_\pi^\star(q)=w_\pi(q)\bm m_+(q)+(1-w_\pi(q))\bm m_-(q)$, which is the first identity.  The mode-conditioned distributions are held fixed in \cref{eq:supp-allocation-slice}, so $\bm m_\pm(q)$ does not depend on $\pi$ and only $w_\pi(q)$ moves with the allocation; subtracting the first identity at $\pi'$ and at $\pi$ proves \cref{eq:supp-allocation-shift}.
\end{proof}

\paragraph{Delayed allocation feedback.}
Near the matched allocation we treat $\Delta$ as a scalar channel.  Against a critic that is population-optimal for the current student, we model the population update as restoring the match at a rate $\lambda>0$, $\dot\Delta(u)=-\lambda\Delta(u)$.  The online critic is instead trained against the student it saw earlier: writing $\tau>0$ for that delay and keeping the same channel, the critic responds to $\pi(u-\tau)$ rather than $\pi(u)$, and the allocation obeys the delay equation
\begin{equation}
    \dot\Delta(u)=-\lambda\Delta(u-\tau).
    \label{eq:supp-delayed-allocation}
\end{equation}

\begin{lemma}[Critical delay]
\label{lem:supp-critical-delay}
Let $\lambda,\tau>0$.  Every root of the characteristic equation
\begin{equation}
    z+\lambda e^{-z\tau}=0
    \label{eq:supp-characteristic}
\end{equation}
has negative real part if and only if $\lambda\tau<\pi/2$; at $\lambda\tau=\pi/2$ the pair $z=\pm i\lambda$ lies on the imaginary axis.  The zero solution of \cref{eq:supp-delayed-allocation} is therefore asymptotically stable for $\lambda\tau<\pi/2$ and unstable for $\lambda\tau>\pi/2$.
\end{lemma}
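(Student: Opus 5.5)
Substituting the exponential ansatz $\Delta(u)=e^{zu}$ into \cref{eq:supp-delayed-allocation} yields \cref{eq:supp-characteristic}. The first step is to rescale time by $\tau$: with $w=z\tau$ and $\mu=\lambda\tau$, the equation becomes $w+\mu e^{-w}=0$. All claims then depend only on $\mu$, so it suffices to show that every root lies in $\operatorname{Re}w<0$ if and only if $\mu<\pi/2$.

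\textbf{Step 1: imaginary-axis crossings.} Put $w=i\omega$ with $\omega$ real. Separating real and imaginary parts gives $\mu\cos\omega=0$ and $\omega=\mu\sin\omega$. Since $\mu>0$, we need $\cos\omega=0$ and hence $\sin\omega=\pm1$, so $\omega=\pm\mu$ with $\sin\mu=1$. The candidates are therefore $\mu=\pi/2+2k\pi$, and $w=0$ is never a root. The smallest crossing value is $\mu=\pi/2$, which gives $z=\pm i\mu/\tau=\pm i\lambda$ as claimed.

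\textbf{Step 2: small $\mu$.} All roots lie in the open left half-plane for small $\mu$. If $\operatorname{Re}w\ge0$, then $|w|=\mu|e^{-w}|\le\mu$, so any such root must satisfy $|w|\le\mu$. But for small $\mu$ the only root in that disc is near $w=-\mu$, which is real and negative; this follows from Rouché or simply from real analysis of $w+\mu e^{-w}$. Equivalently, one can note that for $\mu\to0^+$ the roots either approach $-\mu$ or escape to $\operatorname{Re}w\to-\infty$ along the asymptotic chains $w\approx\log\mu-\log|w|\pm i(\cdots)$.

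\textbf{Step 3: continuity in $\mu$.} Roots depend continuously on $\mu$. Moreover, in any right half-plane $\operatorname{Re}w\ge-c$ there are finitely many roots and they cannot come from infinity, because $|w|\le\mu e^{c}$ there. Hence the number of roots with $\operatorname{Re}w\ge0$ can change only when a root crosses the imaginary axis. By Step 1, no crossing occurs for $\mu\in(0,\pi/2)$, which gives the "if" direction.

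\textbf{Step 4: transversality.} At $\mu=\pi/2$ the pair lies on the axis. For instability when $\mu>\pi/2$, we need the crossing to be transversal toward the right. Implicit differentiation gives $dw/d\mu=-e^{-w}/(1-\mu e^{-w})=-e^{-w}/(1+w)$, using $\mu e^{-w}=-w$. At $w=i\pi/2$ we have $e^{-w}=-i$, so $dw/d\mu=i/(1+i\pi/2)$, whose real part is $(\pi/2)/(1+\pi^2/4)>0$. The same computation at every later crossing $\mu=\pi/2+2k\pi$ also gives positive real part, so roots only ever enter the right half-plane as $\mu$ grows. Therefore at least one pair has $\operatorname{Re}w>0$ for all $\mu>\pi/2$, which proves the "only if" direction and the instability claim.

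\textbf{Conclusion.} Standard linear DDE theory then converts the root locations into stability of the zero solution. If all characteristic roots satisfy $\operatorname{Re}z\le-\delta<0$, solutions decay exponentially. A root with positive real part yields an exponentially growing solution.

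The main obstacle is Step 3: rigorously excluding roots arriving from infinity in the right half-plane, and justifying the counting argument for this transcendental equation. The bound $|w|\le\mu e^{c}$ handles the first point. For the counting, I would invoke the argument principle on a large right half-disc.
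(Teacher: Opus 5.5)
Your argument is correct, but it follows a different route from the paper's. The paper does no continuation in the parameter. For a root $z=x+iy$ with $x\ge0$ (and $y\ge0$ by conjugate symmetry), it reads off from $z=-\lambda e^{-z\tau}$ that $|z|=\lambda e^{-x\tau}\le\lambda$ and $x=-\lambda e^{-x\tau}\cos(y\tau)$. The second identity forces $\cos(y\tau)\le0$, hence $y\tau\ge\pi/2$, while the first gives $y\tau\le\lambda\tau$. So a root in the closed right half-plane requires $\lambda\tau\ge\pi/2$, which settles the stable direction for all $\lambda\tau<\pi/2$ at once.

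Your Step~2 already contains the key bound ($\operatorname{Re}w\ge0\Rightarrow|w|\le\mu$). Pushing it one line further removes the need for the small-$\mu$ base case and the counting machinery of Step~3 in this direction: $|\operatorname{Im}w|\le\mu<\pi/2$ gives $\cos(\operatorname{Im}w)>0$, so $\operatorname{Re}w=-\mu e^{-\operatorname{Re}w}\cos(\operatorname{Im}w)<0$.

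In the other direction the trade reverses. The paper handles $\lambda\tau>\pi/2$ only by citing Hayes (1950) for the crossing of $\pm i\lambda$ into the right half-plane. Your D-subdivision argument actually proves it:
\begin{itemize}
\item axis roots occur only at $\mu=\pi/2+2k\pi$, and they are simple since $1+w\neq0$;
\item the right-half-plane count is constant between crossings, by the a priori bound $|w|\le\mu e^{c}$ and the argument principle;
\item $\operatorname{Re}\,dw/d\mu=\mu/(1+\mu^2)>0$ at every crossing.
\end{itemize}
It even gives exactly $2(k+1)$ unstable roots for $\mu\in(\pi/2+2k\pi,\,\pi/2+2(k+1)\pi)$.

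One small point in your conclusion: asymptotic stability needs a uniform gap $\sup\operatorname{Re}z<0$, not merely $\operatorname{Re}z<0$ root by root. You can supply it from your Step~3 observation that any half-plane $\operatorname{Re}w\ge-c$ contains only finitely many roots.
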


\begin{proof}
Substituting $\Delta(u)=e^{zu}$ in \cref{eq:supp-delayed-allocation} gives $ze^{zu}=-\lambda e^{z(u-\tau)}$, which is \cref{eq:supp-characteristic}. Let $z=x+iy$ be a root with $x\geq0$; roots occur in conjugate pairs, so we may take $y\geq0$.  From $z=-\lambda e^{-z\tau}$,
\begin{equation}
    |z|=\lambda e^{-x\tau}\leq\lambda,
    \qquad
    x=-\lambda e^{-x\tau}\cos(y\tau),
    \qquad
    y=\lambda e^{-x\tau}\sin(y\tau).
    \label{eq:supp-root-components}
\end{equation}
Since $\lambda e^{-x\tau}>0$, the middle identity and $x\geq0$ force $\cos(y\tau)\leq0$, and the smallest $y\tau\geq0$ with $\cos(y\tau)\leq0$ is $\pi/2$, so $y\tau\geq\pi/2$.  The first identity gives $y\leq|z|\leq\lambda$ and hence $y\tau\leq\lambda\tau$.  A root with nonnegative real part therefore requires $\lambda\tau\geq\pi/2$, so $\lambda\tau<\pi/2$ leaves every root in the open left half plane.  At $\lambda\tau=\pi/2$, substituting $z=i\lambda$ in \cref{eq:supp-characteristic} gives $i\lambda+\lambda e^{-i\pi/2}=i\lambda-i\lambda=0$, so $\pm i\lambda$ are roots; for $\lambda\tau>\pi/2$ this pair has crossed into the right half plane~\citep{hayes1950roots}.
\end{proof}

\paragraph{The residual observes the lag.}
Model a lagging critic as one whose only error is staleness: at training time $u$ its endpoint is population-optimal for the student of time $u-\tau$, $\bm x_0^c=\bm c^\star_{\pi(u-\tau)}(q)$.  Its endpoint-space error \cref{eq:critic-endpoint-error} is then the lag error, and \cref{lem:supp-allocation-barycenter} places it on the mode contrast,
\begin{equation}
    \bm e
    =
    \bm e_{\mathrm{lag}}^{x_0}(q)
    :=
    \bm c^\star_{\pi(u-\tau)}(q)-\bm c^\star_{\pi(u)}(q)
    =
    \bigl(w_{\pi(u-\tau)}(q)-w_{\pi(u)}(q)\bigr)\Delta\bm m(q).
    \label{eq:supp-lag-endpoint-error}
\end{equation}
At a fixed noise level, the mixture-level Tweedie identity $\bm s_\pi(q)=(\alpha_t\bm c_\pi^\star(q)-q)/\sigma_t^2$, where $\bm s_\pi=\nabla_q\log p_{\pi,t}$ is the score of the density $p_{\pi,t}$ of $Q_t$ at allocation $\pi$, converts this into the score-space lag error
\begin{equation}
    \bm e_{\mathrm{lag}}^{s}(q)
    =
    \bm s_{\pi(u-\tau)}(q)-\bm s_{\pi(u)}(q)
    =
    \frac{\alpha_t}{\sigma_t^2}\bm e_{\mathrm{lag}}^{x_0}(q),
    \label{eq:supp-lag-score-error}
\end{equation}
which is the error this critic contributes to the DMD update \cref{eq:dmd-difference}.  In the notation of \cref{sec:proof-high-dimensional-filtering}, the pure-lag model is the case $\bm e'=\bm e_{\mathrm{lag}}^{s}(q)$ of $\bm d=\bm d^\star+\bm e'$. Because the critic-training endpoint is the student's own endpoint, the residual carries this error explicitly,
\begin{equation}
    \bm r=\bm x_0^c-\bm x_0^s=\bm e_{\mathrm{lag}}^{x_0}(q)-\bm\xi,
    \qquad
    \mathbb E[\bm r\mid Q=q]=\bm e_{\mathrm{lag}}^{x_0}(q),
    \label{eq:supp-residual-lag-mean}
\end{equation}
where the second identity uses $\mathbb E[\bm\xi\mid Q=q]=\bm 0$ from \cref{lem:supp-l2-conditional-mean}.  The conditional mean of the observable residual is the stale-score error itself, up to the scalar $\sigma_t^2/\alpha_t$.

\begin{corollary}[Conditional attenuation of the lag error]
\label{cor:supp-lag-attenuation}
Let $\bm e_{\mathrm{lag}}^{x_0}(q)\neq\bm 0$.  Then
\begin{equation}
    \mathbb E\!\left[
        \frac{\|\bm P_{\bm r}\bm e_{\mathrm{lag}}^{s}\|^2}
             {\|\bm e_{\mathrm{lag}}^{s}\|^2}
        \,\middle|\,Q=q
    \right]
    \geq
    \frac{\|\bm e_{\mathrm{lag}}^{x_0}(q)\|^2}
         {\|\bm e_{\mathrm{lag}}^{x_0}(q)\|^2+\operatorname{tr}\bm\Sigma(q)},
    \label{eq:supp-lag-removal-bound}
\end{equation}
and the part of the lag error that survives the projection \cref{eq:method-pdmd-projection} obeys
\begin{equation}
    \mathbb E\bigl[
        \|\bm P_{\bm r}^{\perp}\bm e_{\mathrm{lag}}^{s}\|^2
        \mid Q=q
    \bigr]
    \leq
    \frac{\operatorname{tr}\bm\Sigma(q)}
         {\|\bm e_{\mathrm{lag}}^{x_0}(q)\|^2+\operatorname{tr}\bm\Sigma(q)}
    \,\|\bm e_{\mathrm{lag}}^{s}(q)\|^2
    <
    \|\bm e_{\mathrm{lag}}^{s}(q)\|^2.
    \label{eq:supp-lag-attenuation}
\end{equation}
\end{corollary}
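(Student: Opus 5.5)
The plan is to reduce the statement to the generic conditional removal bound \cref{eq:conditional-removal-bound}, applied with $\bm e=\bm e_{\mathrm{lag}}^{x_0}(q)$, and then pass from endpoint space to score space by scalar invariance.

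First, by \cref{eq:supp-lag-score-error}, $\bm e_{\mathrm{lag}}^{s}=(\alpha_t/\sigma_t^2)\,\bm e_{\mathrm{lag}}^{x_0}$ with a nonzero scalar at the fixed noise level. Hence $\|\bm P_{\bm r}\bm e_{\mathrm{lag}}^{s}\|^2/\|\bm e_{\mathrm{lag}}^{s}\|^2=\gamma_e(\bm r)$ computed with $\bm e=\bm e_{\mathrm{lag}}^{x_0}$. The convention $\gamma_e(\bm 0)=0$ and the rule leaving the update unchanged when $\bm r=\bm 0$ (i.e.\ $\bm P_{\bm 0}=\bm 0$) are consistent.

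Second, I invoke \cref{eq:supp-residual-lag-mean}. This gives $\bm r=\bm e-\bm\xi$ with $\mathbb E[\bm\xi\mid Q=q]=\bm 0$ and $\mathbb E[\|\bm\xi\|^2\mid Q=q]=\operatorname{tr}\bm\Sigma(q)$, by \cref{lem:supp-l2-conditional-mean}. It yields the moments $\mathbb E[\bm e^{\mathsf T}\bm r\mid Q=q]=\|\bm e\|^2$ and $\mathbb E[\|\bm r\|^2\mid Q=q]=\|\bm e\|^2+\operatorname{tr}\bm\Sigma(q)$; the cross term vanishes because $\bm e$ is fixed given $q$.

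Third, I apply Cauchy--Schwarz:
\[
\|\bm e\|^4=\bigl(\mathbb E[\bm e^{\mathsf T}\bm r]\bigr)^2\le \mathbb E\!\left[\frac{(\bm e^{\mathsf T}\bm r)^2}{\|\bm r\|^2}\right]\mathbb E[\|\bm r\|^2].
\]
On the event $\bm r=\bm 0$ the integrand $\bm e^{\mathsf T}\bm r$ is zero, so restricting to $\{\bm r\ne\bm 0\}$ loses nothing. Dividing by $\|\bm e\|^2\,\mathbb E\|\bm r\|^2>0$ gives \cref{eq:supp-lag-removal-bound}.

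Finally, for \cref{eq:supp-lag-attenuation} I use the Pythagorean identity $\|\bm P_{\bm r}^{\perp}\bm v\|^2=\|\bm v\|^2-\|\bm P_{\bm r}\bm v\|^2$, which also holds when $\bm r=\bm 0$. Taking conditional expectations and using the first bound gives the upper bound with factor $1-\|\bm e\|^2/(\|\bm e\|^2+\operatorname{tr}\bm\Sigma)=\operatorname{tr}\bm\Sigma/(\|\bm e\|^2+\operatorname{tr}\bm\Sigma)$. The strict inequality follows because this factor is $<1$ whenever $\bm e_{\mathrm{lag}}^{x_0}\neq\bm 0$, and the bound is finite because $\operatorname{tr}\bm\Sigma<\infty$ under the moment assumption of \cref{lem:supp-l2-conditional-mean}.

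The only delicate point is the Cauchy--Schwarz step: the degenerate event $\bm r=\bm 0$ must be handled, and one must confirm that the needed second moments are finite. Everything else is bookkeeping.
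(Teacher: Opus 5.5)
Your proposal is correct and follows the paper's proof. You cancel the positive scalar $\alpha_t/\sigma_t^2$ to identify the left side with $\mathbb E[\gamma_e(\bm r)\mid Q=q]$, and you feed the conditional moments of $\bm r=\bm e-\bm\xi$ into the Cauchy--Schwarz removal bound. You then pass to the surviving part via $\|\bm P_{\bm r}^{\perp}\bm v\|^2=\|\bm v\|^2-\|\bm P_{\bm r}\bm v\|^2$. The only difference is that you re-derive the Cauchy--Schwarz step and treat the event $\bm r=\bm 0$ and the finiteness of $\operatorname{tr}\bm\Sigma(q)$ explicitly, whereas the paper simply cites the main-text bound.
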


\begin{proof}
By \cref{eq:supp-lag-score-error} the two lag errors differ by the positive scalar $\alpha_t/\sigma_t^2$, which cancels in the ratio, so the left-hand side of \cref{eq:supp-lag-removal-bound} is $\mathbb E[\gamma_e(\bm r)\mid Q=q]$ of \cref{eq:conditional-removal-ratio} with $\bm e=\bm e_{\mathrm{lag}}^{x_0}(q)$.  By \cref{eq:supp-residual-lag-mean}, $\bm r=\bm e-\bm\xi$ with $\mathbb E[\bm\xi\mid Q=q]=\bm 0$ and $\mathbb E[\|\bm\xi\|^2\mid Q=q]=\operatorname{tr}\bm\Sigma(q)$ (\cref{lem:supp-l2-conditional-mean}), which are exactly the conditional moments \cref{eq:critic-residual-moments} behind \cref{eq:conditional-removal-bound}; that bound is \cref{eq:supp-lag-removal-bound}.  Since $\bm P_{\bm r}$ and $\bm P_{\bm r}^{\perp}$ are complementary orthogonal projectors, $\|\bm P_{\bm r}^{\perp}\bm e_{\mathrm{lag}}^{s}\|^2 =\|\bm e_{\mathrm{lag}}^{s}\|^2-\|\bm P_{\bm r}\bm e_{\mathrm{lag}}^{s}\|^2$, and taking conditional expectations turns \cref{eq:supp-lag-removal-bound} into \cref{eq:supp-lag-attenuation}; the last inequality is strict because $\|\bm e_{\mathrm{lag}}^{x_0}(q)\|>0$.
\end{proof}

The delayed feedback of \cref{eq:supp-delayed-allocation} reaches the student through $\bm e_{\mathrm{lag}}^{s}$. \dmd applies it in full. By \cref{cor:supp-lag-attenuation}, \methodname{} passes on, in conditional mean square, at most the fraction
\begin{equation}
    \rho(q)
    =
    \frac{\operatorname{tr}\bm\Sigma(q)}
         {\|\bm e_{\mathrm{lag}}^{x_0}(q)\|^2+\operatorname{tr}\bm\Sigma(q)}
    <1.
    \label{eq:supp-lag-surviving-fraction}
\end{equation}
Treating this fraction as a constant gain on the scalar channel gives a one-parameter model of the projected feedback, $\dot\Delta(u)=-\rho\lambda\Delta(u-\tau)$, in which the stability condition of \cref{lem:supp-critical-delay} becomes $\rho\lambda\tau<\pi/2$. The model is a heuristic: $\rho$ bounds an energy fraction in expectation, the constant-gain step stands in for a quantity that varies with the query and with the critic, and the projection also acts on the ideal update, so $\rho$ scales the lag error rather than the whole channel. The two-mode results in \cref{tab:supp-twomode} are consistent with this qualitative interpretation. The gain $\rho$ tracks the state of the critic. When the critic is stale, the lag error dominates the conditional spread of the student's endpoints, $\rho\to0$, and the delayed feedback is suppressed. When the critic has caught up, $\bm e_{\mathrm{lag}}=\bm 0$, the residual is pure conditional noise, and the update passes through up to the signal-removal term of \cref{eq:supp-projected-correction-error}. The projection damps the allocation feedback when the critic is stale and releases it when the critic is current. A fixed reduction of the step, such as the random direction in \cref{tab:supp-twomode}, has no such dependence.

\section{2D Experiments}
\label{sec:supp-2d-experiments}

This section tests two theoretical predictions in a minimal, fully observable setting. Experiment~1 measures whether the projection preferentially removes critic error while retaining the distribution-matching signal (\cref{eq:conditional-removal-bound} and \cref{cor:supp-high-dimensional-separation}). Experiment~2 measures mode allocation under critic lag and tests the attenuation of \cref{cor:supp-lag-attenuation}.

\paragraph{Shared protocol.}
Both experiments use a variance-exploding (VE) diffusion toy in $\mathbb R^2$ in which the critic error and ideal update can be estimated from student samples. The teacher score is \emph{analytic} (the exact posterior-mean denoiser of the Gaussian-mixture target), so the online critic is the only learned score in the system. Experiment~1 measures its deviation from the population-optimal student score. The student is a one-step generator $\bm x_0^s=G_\theta(\bm z)$, $\bm z\sim\sigma_{\max}\mathcal N(\bm 0,\bm I)$, and the critic is an EDM-parameterized denoiser trained by denoising score matching on the student's own re-diffused samples (\cref{eq:dmd-noising} with $\alpha_t=1$); both are 3-layer MLPs of width 128.  Training uses Adam ($\beta_1{=}0$, $\beta_2{=}0.999$), a learning rate of $2\times10^{-3}$ for both networks unless stated otherwise, a batch size of 1{,}024, 4{,}000 iterations in Experiment~1 and 6{,}000 iterations in Experiment~2, and noise levels $\sigma\sim\mathrm{LogUniform}[0.02,5]$ with the standard DMD weighting.  \methodname{} differs from the DMD variant \emph{only} by the per-sample projection of \cref{eq:method-pdmd-projection}; seeds, data, and schedules are shared. We report the \emph{energy distance} between 2{,}048 generated and target samples (a rotation-invariant two-sample distance, zero iff the distributions match); \emph{modes covered}, the number of mixture components whose share of on-mode samples exceeds $25\%$ of its target share; the \emph{on-mode fraction}, the share of samples within $3\sigma_{\mathrm{mode}}$ of a component center (a precision proxy); and the \emph{removed fraction}, the relative reduction in update norm, $1-\|\bm d_{\mathrm{kept}}\|/\|\bm d\|$.

\subsection{Experiment 1: Error Filtering on a Multi-Mode Target}
\label{sec:supp-2d-exp1}

\paragraph{Setup.}
The target distribution is a ring of eight Gaussians (radius 2, standard deviation $\sigma_{\mathrm{mode}}=0.12$). We train five variants. Four remove one direction from the DMD update $\bm d$, and the unprojected baseline provides a reference: (a) \textbf{DMD}: no direction is removed; (b) \textbf{random direction $\perp$}: remove a fresh random direction for each sample. In 2D, this projection removes $1-2/\pi\approx36\%$ of the update norm on average, close to the share \methodname{} removes. Variant (b) therefore tests whether reducing the update norm alone explains the gain; (c) \textbf{critic score $\perp$}: remove the direction of $\bm s_{\mathrm{critic}}(\bm x_t)$.  The critic-score direction is the closest alternative to the \methodname{} direction; the two differ only by the query displacement $\sigma_t\bm\epsilon$; (d) \textbf{student--teacher endpoint residual $\perp$}: remove $\bm r_{\mathrm{teacher}}=\bm x_0^{\mathrm{teacher}}-\bm x_0^s$.  The teacher endpoint $\bm x_0^{\mathrm{teacher}}$ comes from $\bm s_{\mathrm{teacher}}$ through the standard time-dependent rescaling and costs no extra forward pass.  The teacher is exact, so $\bm r_{\mathrm{teacher}}$ contains no critic error; (e) \textbf{student--critic endpoint residual $\perp$}: the \methodname{} update, which removes $\bm r=\bm x_0^c-\bm x_0^s$. The error-removal analysis predicts an ordering based on the information carried by each direction. The student--critic endpoint residual has the structural form $\bm R=\bm e-\bm\xi$ targeted by the removal bound.  The critic-score direction also contains critic error, but mixes it with the full marginal-score geometry rather than isolating $\bm R$; variants (b) and (d) do not contain critic error, and variant (c) can also remove useful signal. All variants share the seed, data, and schedule.  The figures show one run per variant with 200 recorded snapshots.

\paragraph{Distribution metrics.}
\Cref{fig:2d-curves} in the main text shows the training curves. \Cref{fig:supp-2d-grid} shows the generated samples of every variant at iterations $0,1{,}000,\ldots,4{,}000$. We average each metric over the last quarter of training (iterations $3{,}000$ to $4{,}000$). \methodname{} reaches energy distance $0.0080$ and on-mode fraction $0.903$.  DMD reaches $0.0273$ and $0.838$.  The random direction reaches $0.0180$ and $0.795$.  The critic-score projection reaches $0.0332$ and $0.423$.  The teacher-residual projection reaches $2.04$ and $0.413$; the large error spike near iteration $3{,}000$, visible in both figures, dominates the average for this variant. Averaged over the shown $4{,}000$ iterations, the random direction removes $36\%$ of the update norm, the critic-score projection $39\%$, the teacher-residual projection $43\%$, and \methodname{} $42\%$; the four deletions shrink the step by a similar amount. The ordering is consistent with the predicted error geometry. Among the tested directions, removing the student--critic endpoint residual is the only deletion that improves both metrics over DMD.  Removing a random direction or the error-free teacher residual does not reproduce the gain.  Since a rank-one deletion in $d=2$ removes one of only two available directions, this experiment is a stringent signal-retention test; \methodname{} still improves both metrics over DMD.

\paragraph{Direct measurement of the removal ratios.}
The metrics above read the projection through its consequences. To look at the mechanism itself, \cref{fig:supp-2d-gamma} measures $\gamma_e$ and $\gamma_s$ of \cref{sec:proof-high-dimensional-filtering} directly along the \methodname{} run. In 2D we estimate the population-optimal critic endpoint by a kernel average over a bank of $2^{17}$ student samples, $\bm c^\star(q)=\sum_i\bm x_i\,\mathcal N(q;\bm x_i,\sigma^2\bm I) /\sum_i\mathcal N(q;\bm x_i,\sigma^2\bm I)$; the same weights give $\operatorname{tr}\bm\Sigma(q)$. With the analytic teacher, this gives a Monte Carlo estimate of $\bm d^\star=(\bm c^\star(q)-\bm x_0^{\mathrm{teacher}}(q))/\sigma^2$; the critic error is $\bm e=\bm x_0^c(q)-\bm c^\star(q)$ and the residual is $\bm r=\bm x_0^c(q)-\bm x_0^s$ for the probe's own endpoint. Every 100 iterations, 512 probes at each of 12 log-spaced noise levels in $[0.02,5]$ give the per-sample $\gamma_e(\bm r)$, $\gamma_s(\bm r)$, and the lower bound $\|\bm e\|^2/(\|\bm e\|^2+\operatorname{tr}\bm\Sigma(q))$ of \cref{eq:conditional-removal-bound}; the diagnostics use a separate random stream, so the measured run reproduces the stored \methodname{} run bit for bit. In $d=2$ a uniformly random rank-one deletion removes half of the energy of any fixed vector on average, and $d_{\mathrm{eff}}\leq2$, so the high-dimensional limit $\gamma_s\to0$ is not available in the plane. What the plane can test is preferential error removal: $\gamma_e$ exceeds $\gamma_s$ on average in the evaluated noise band (\cref{tab:supp-2d-gamma}). \Cref{fig:supp-2d-gamma} shows exactly that split: $\gamma_e$ peaks at $0.75$ where the critic carries error, both ratios return to $1/2$ once the critic is accurate, and the lower bound of \cref{eq:conditional-removal-bound} holds at every snapshot and noise level. \Cref{tab:supp-2d-gamma} repeats the measurement for the other three deleted directions on the same probes: the random direction sits at $1/2$ for both ratios, the student--teacher endpoint residual removes more signal than error, and only the student--critic endpoint residual removes clearly more error than signal, $0.18\pm0.01$ more error energy than the random direction and above the theoretical bound in both blocks.

\Cref{tab:supp-2d-net} reports the net effect of each deletion on the same probes, in the two forms of \cref{eq:supp-net-correction-improvement,cor:supp-net-correction-improvement}. With $\widetilde{\bm d}=\bm P_{\bm b}^{\perp}\bm d$ the update after deleting the direction $\bm b$, the error ratio $\nu=\mathbb E\|\widetilde{\bm d}-\bm d^\star\|^2/ \mathbb E\|\bm d-\bm d^\star\|^2$ is the distance of the projected update to the ideal update relative to that of the \dmd update, and the improved fraction $\varphi$ is the share of probes on which the projected update is closer to $\bm d^\star$, the empirical form of the probability in \cref{cor:supp-net-correction-improvement}.

In this 2D setting, deleting the student--critic endpoint residual reduces the average error in the evaluated noise band: in the band $\sigma\in[0.15,1.1]$ our projection makes the error fall by $28\%$. Deleting a random direction lowers it by $16\%$, the critic score by $20\%$, and the student--teacher endpoint residual by $8\%$; over all twelve noise levels the student--teacher endpoint residual raises the error by $8\%$. Paired over the same snapshot and noise level, the student--critic endpoint residual has a lower error ratio than the random direction in $82\%$ of the cells in the band, than the critic score in $68\%$, and than the teacher endpoint residual in $99\%$. The same holds on a run trained without the projection: probes along the \dmd run, where each deletion is computed at every snapshot but never used to update the student, give the same ordering, $\nu=0.90$ for the student--critic endpoint residual against $0.98$, $0.94$ and $1.21$.

\begin{table}[t]
\centering
\caption{\textbf{Removal ratios of the four deleted directions on the ring of eight.} Mean $\gamma_e$ (critic error removed) and $\gamma_s$ (ideal signal removed) over the same probes, noise draws, and snapshots as \cref{fig:supp-2d-gamma}, along the \methodname{} run. Left: the noise band $\sigma\in[0.15,1.1]$ in which the critic carries error.  Right: all 12 noise levels. In the plane a uniformly random rank-one deletion removes $1/2$ of any fixed vector's energy on average, so $1/2$ is the reference for both ratios. The last column of each block is the mean lower bound $\|\bm e\|^2/(\|\bm e\|^2+\operatorname{tr}\bm\Sigma(q))$ of \cref{eq:conditional-removal-bound} on the same probes; it is a bound on $\gamma_e$ for the student--critic endpoint residual and is listed in that row.}
\label{tab:supp-2d-gamma}
\small
\setlength{\tabcolsep}{4pt}
\begin{tabular}{lcccc@{\hspace{1.2em}}cccc}
\toprule
& \multicolumn{4}{c}{$\sigma\in[0.15,1.1]$} & \multicolumn{4}{c}{all levels} \\
\cmidrule(lr){2-5}\cmidrule(lr){6-9}
Deleted direction & $\gamma_e$ & $\gamma_s$ & $\gamma_e-\gamma_s$ & $\gamma_e$ bound
& $\gamma_e$ & $\gamma_s$ & $\gamma_e-\gamma_s$ & $\gamma_e$ bound \\
\midrule
Random direction & $0.50$ & $0.50$ & $0.00$ & -- & $0.50$ & $0.50$ & $0.00$ & -- \\
Critic score & $0.52$ & $0.49$ & $0.03$ & -- & $0.52$ & $0.49$ & $0.03$ & -- \\
Student--teacher residual & $0.56$ & $0.61$ & $-0.05$ & -- & $0.55$ & $0.59$ & $-0.04$ & -- \\
\textbf{Student--critic residual (\methodname{})} & $\bm{0.68}$ & $0.57$ & $\bm{0.11}$ & $0.25$
& $\bm{0.61}$ & $0.56$ & $\bm{0.04}$ & $0.21$ \\
\bottomrule
\end{tabular}
\end{table}

\begin{table}[t]
\centering
\caption{\textbf{Net effect of the four deletions on the ring of eight.} Same probes, noise draws and snapshots as \cref{tab:supp-2d-gamma}, along the \methodname{} run. $\nu$ ($\downarrow$) is the error ratio $\mathbb E\|\widetilde{\bm d}-\bm d^\star\|^2/ \mathbb E\|\bm d-\bm d^\star\|^2$: below $1$ the deletion moves the update closer to the ideal update, above $1$ farther away. $\varphi$ ($\uparrow$) is the improved fraction, the share of probes on which the projected update is closer to $\bm d^\star$ than the \dmd update. Left: the noise band $\sigma\in[0.15,1.1]$ in which the critic carries error.  Right: all 12 noise levels. Best value per column in bold.}
\label{tab:supp-2d-net}
\small
\setlength{\tabcolsep}{6pt}
\begin{tabular}{lcc@{\hspace{1.6em}}cc}
\toprule
& \multicolumn{2}{c}{$\sigma\in[0.15,1.1]$} & \multicolumn{2}{c}{all levels} \\
\cmidrule(lr){2-3}\cmidrule(lr){4-5}
Deleted direction & $\nu$ $\downarrow$ & $\varphi$ $\uparrow$ & $\nu$ $\downarrow$ & $\varphi$ $\uparrow$ \\
\midrule
Random direction & $0.84$ & $0.68$ & $0.95$ & $0.63$ \\
Critic score & $0.81$ & $0.71$ & $\bm{0.88}$ & $0.64$ \\
Student--teacher residual & $0.92$ & $0.67$ & $1.08$ & $0.62$ \\
\textbf{Student--critic residual (\methodname{})} & $\bm{0.72}$ & $\bm{0.74}$ & $0.89$ & $\bm{0.66}$ \\
\bottomrule
\end{tabular}
\end{table}

\begin{figure}[t]
\centering
\includegraphics[width=0.6\linewidth]{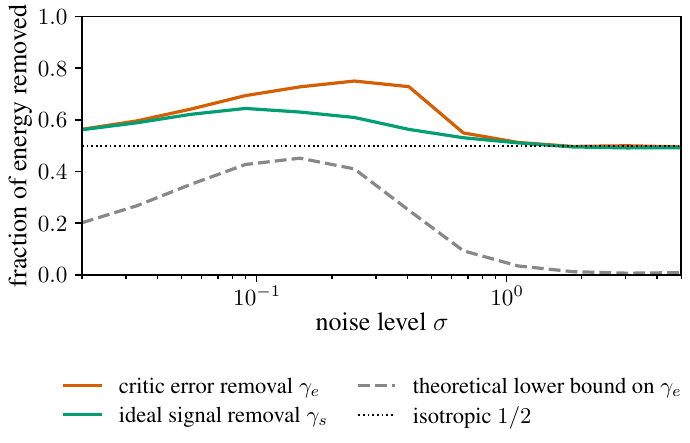}
\caption{\textbf{Direct measurement of the removal ratios on the ring of eight.} Along the \methodname{} run of \cref{sec:supp-2d-exp1}: the fraction of critic-error energy $\gamma_e$ and of ideal-signal energy $\gamma_s$ that the projection removes, against the noise level. The grey dashed line is the theoretical lower bound $\|\bm e\|^2/(\|\bm e\|^2+\operatorname{tr}\bm\Sigma(q))$ of \cref{eq:conditional-removal-bound} on $\gamma_e$. Each curve is a mean over the snapshots from iteration 100 to 4{,}000. The dotted line marks the mean energy fraction $1/2$ removed by a uniformly random rank-one deletion from a fixed vector in the plane.}
\label{fig:supp-2d-gamma}
\end{figure}

\begin{figure}[p]
\centering
\includegraphics[width=0.92\linewidth]{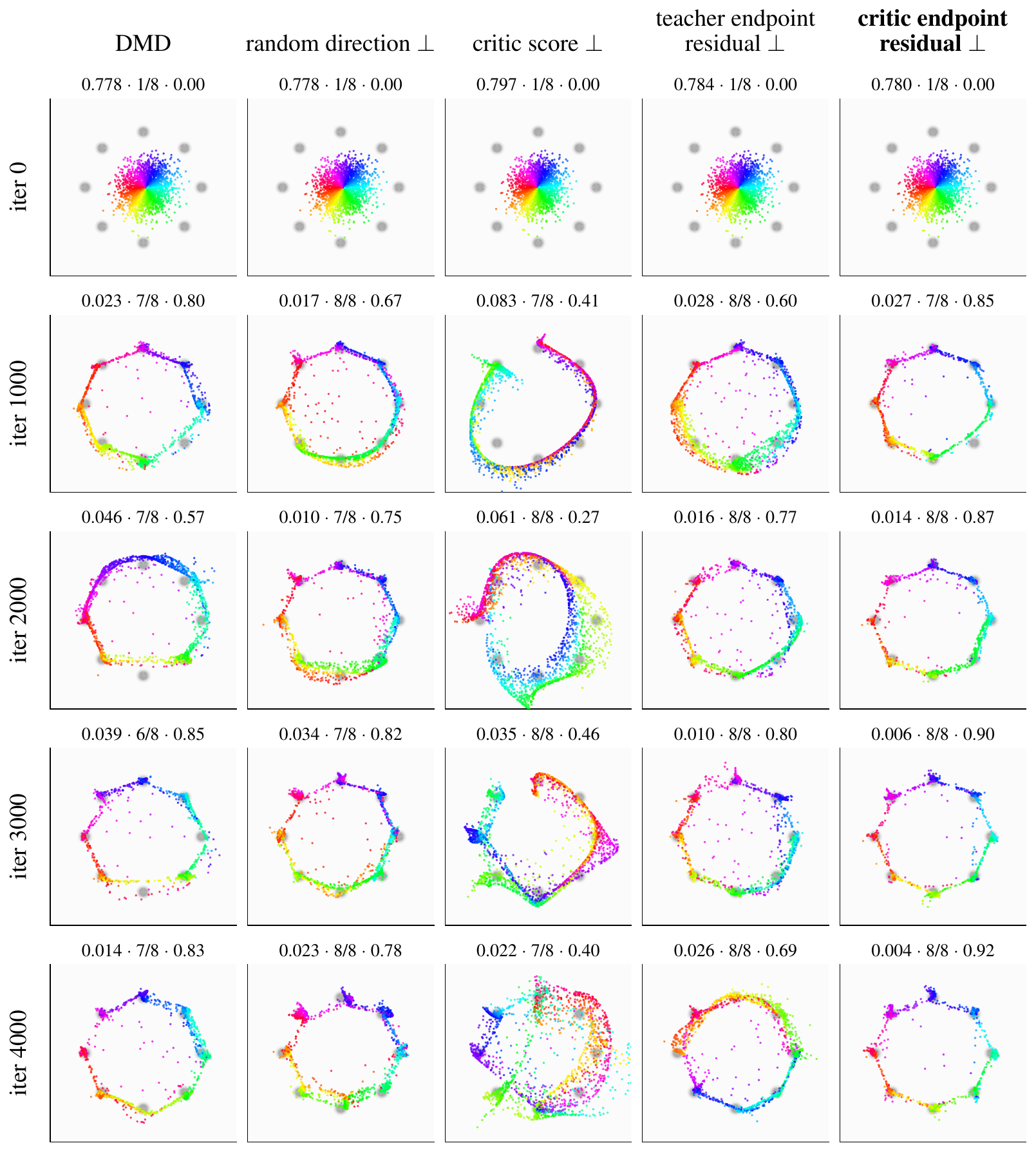}
\caption{\textbf{Generated samples of the five 2D variants over training.} Columns are the five variants of \cref{sec:supp-2d-exp1}.  Rows are iterations $0,1{,}000,\ldots,4{,}000$ of the same runs as \cref{fig:2d-curves}. Point color encodes the angle of the latent $\bm z$, visualizing how latent directions map to target modes. Each panel lists the energy distance ($\downarrow$, lower is better), the number of covered modes ($\uparrow$), and the on-mode fraction ($\uparrow$, higher is better) at the shown iteration.  The student--critic-endpoint-residual column (\methodname{}) covers all eight modes from iteration $2{,}000$ onward and stays tight. The critic-score column covers the modes but does not concentrate.  The teacher-residual column shows the instability near iteration $3{,}000$.}
\label{fig:supp-2d-grid}
\end{figure}

\subsection{Experiment 2: Two-Mode Allocation Under Critic Lag}
\label{sec:supp-2d-exp2}

\paragraph{Setup.}
The target is the symmetric two-mode mixture $\tfrac12\mathcal N(+\bm\mu,\sigma_{\mathrm{mode}}^2\bm I) +\tfrac12\mathcal N(-\bm\mu,\sigma_{\mathrm{mode}}^2\bm I)$ with $\bm\mu=(2,0)$ and $\sigma_{\mathrm{mode}}=0.2$, a Gaussian instance of the allocation slice in \cref{eq:supp-allocation-slice}. The observable of interest is the antisymmetric mode imbalance $a_k=\mathrm{share}(\text{samples in mode }+)-\tfrac12$; the symmetric solution has $a=0$ and total collapse has $|a|=\tfrac12$. A critic that lags behind the student motivates the delayed-feedback model of \cref{eq:supp-delayed-allocation}, and the observable residual has conditional mean equal to the resulting tracking error (\cref{eq:supp-residual-lag-mean}).  We stress this interaction by running the critic at a learning rate $20\times$ smaller than the student's and repeat the entire comparison at two step sizes to test whether the effect is specific to one setting: student $2\times10^{-3}$ with critic $1\times10^{-4}$, and student $1\times10^{-3}$ with critic $5\times10^{-5}$. Three variants use the same 20 consecutive seeds ($0$--$19$) at each step size: plain DMD, the random-direction control~(b) of \cref{sec:supp-2d-exp1}, and \methodname{}; nothing else changes.  This slow-tracking regime mirrors the practical motivation for DMD2, which adds more critic updates (TTUR) to improve tracking of the moving student~\citep{yin2024improved}.

\paragraph{Results.}
    In the lag-stressed regime, \methodname{} yields the predicted reduction in allocation collapse (\cref{tab:supp-twomode}): plain DMD collapses to a single mode in $14/20$ seeds (mean final $|a|=0.355$), while \methodname{} collapses in $3/20$ (mean $0.138$) at equal precision (mean on-mode fraction $0.973$ vs.\ $0.970$) and reaches a mean energy distance of $0.386$ against DMD's $1.274$. The random-direction control remains close to DMD, collapsing in $16/20$ seeds, with a mean final $|a|=0.396$ and mean energy distance $1.449$, while removing a mean $0.363$ of the update norm against \methodname{}'s $0.514$: the reduction follows the direction that is removed rather than the amount. The two remaining directions of \cref{sec:supp-2d-exp1} also reduce collapse in this setting: the critic-score projection yields $0/20$ and the student--teacher-endpoint-residual projection yields $5/20$, but with lower on-mode fractions ($0.730$ and $0.871$ versus $0.973$); Experiment~1 rules out both alternatives on accuracy grounds. The smaller step size reproduces the comparison: DMD collapses in $19/20$ seeds and the random direction in $19/20$, while \methodname{} collapses in $12/20$. The trajectories are more informative than the counts (\cref{fig:supp-twomode-grid}).  DMD and \methodname{} both pass through a strongly imbalanced transient early in training (the one-step student initially covers a single mode, $|a|\approx0.5$).  Plain DMD then \emph{locks}: every collapsed seed remains at $|a|=0.500$ for the rest of training.  \methodname{} \emph{recovers}: a typical seed follows $|a|\colon0.50\to0.40\to0.15\to0.03$ and then fluctuates near balance. \Cref{cor:supp-lag-attenuation} provides a model for the first part of the transition: the projection attenuates the delayed component that drives the imbalance. Empirically, the remaining distribution-matching field transports mass back toward balance. Seventeen of the twenty \methodname{} seeds finish with both modes covered.

\begin{table*}[t]
\centering
\caption{\textbf{Two-mode collapse counts over 20 consecutive seeds at two step sizes.} Collapse is $\text{modes covered}<2$ at the end of training; $|a|$ is the final imbalance; $|a|$, on-mode fraction, and energy distance are means $\pm$ standard deviations over seeds. The critic learning rate is $20\times$ smaller than the student's in both blocks. The random direction is control~(b) from \cref{sec:supp-2d-exp1}: the same rank-one deletion along an unstructured direction.}
\label{tab:supp-twomode}
\footnotesize
\setlength{\tabcolsep}{4pt}
\begin{tabular*}{\textwidth}{@{\extracolsep{\fill}}llcccc@{}}
\toprule
Student lr & Variant & Collapsed $\downarrow$ &
Mean $|a|$ $\downarrow$ &
On-mode fraction & Mean energy $\downarrow$ \\
\midrule
$2\times10^{-3}$ & DMD & $14/20$ & $0.355\pm0.219$ & $0.970\pm0.045$ & $1.274\pm0.856$ \\
& Random direction $\perp$ & $16/20$ & $0.396\pm0.197$ & $0.975\pm0.065$ & $1.449\pm0.756$ \\
& \textbf{\methodname{} (ours)} & $\bm{3/20}$ & $\bm{0.138\pm0.162}$ & $0.973\pm0.039$ &
$\bm{0.386\pm0.689}$ \\
\midrule
$1\times10^{-3}$ & DMD & $19/20$ & $0.477\pm0.104$ & $0.913\pm0.231$ & $1.899\pm0.618$ \\
& Random direction $\perp$ & $19/20$ & $0.476\pm0.108$ & $0.991\pm0.018$ & $1.791\pm0.440$ \\
& \textbf{\methodname{} (ours)} & $\bm{12/20}$ & $\bm{0.331\pm0.215}$ & $0.925\pm0.224$ & $\bm{1.317\pm1.046}$ \\
\bottomrule
\end{tabular*}
\end{table*}

\begin{figure}[t]
\centering
\includegraphics[width=0.95\linewidth]{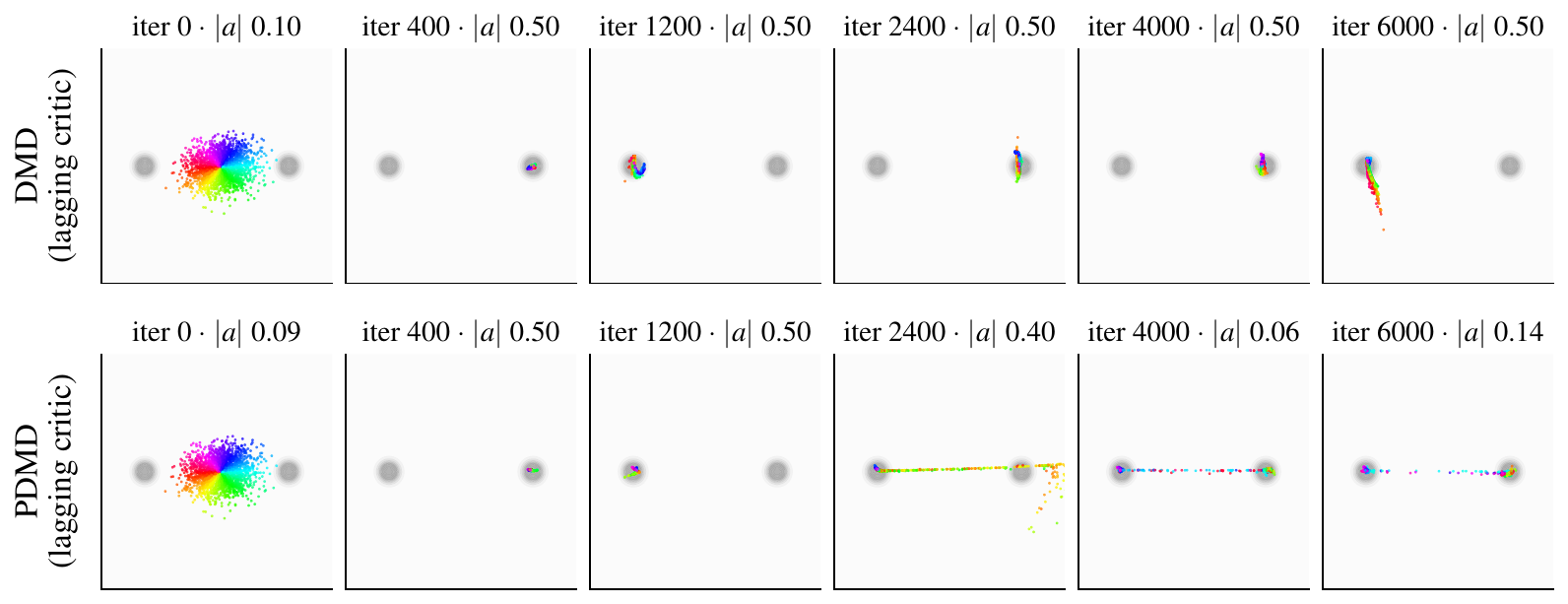}
\caption{\textbf{Two-mode collapse process under a lagging critic (seed 0).} Rows are plain DMD and \methodname{}; columns are training iterations. Each panel lists the iteration and the imbalance $|a|$; $|a|=0$ is the balanced solution and $|a|=0.5$ is full collapse.  Both variants start from a collapsed transient at iteration 400. Plain DMD stays locked on one mode for all 6{,}000 iterations. \methodname{} transports mass back between the modes (iteration 2{,}400 shows samples in transit) and settles near balance.  The main text shows the collapsed DMD endpoint in \cref{fig:2d-collapse}.}
\label{fig:supp-twomode-grid}
\end{figure}

\paragraph{Connection to critic tracking in practice.}
We realize delayed tracking by lowering the critic learning rate.  DMD2 addresses the same critic-tracking challenge in image distillation and increases the critic update frequency so that it follows the moving student more closely~\citep{yin2024improved}. Our two-mode experiment makes the associated allocation response directly observable, while the video results show its large-scale signature as degradation that grows with training (\cref{fig:degradation}).

\section{Experimental Details}
\label{sec:experimental-details}

This section describes the training and evaluation settings for the paper's three experimental domains and their baselines.

\begin{table}[t]
    \centering
    \caption{\textbf{Training configurations of the video experiments.} Our DMD, DMD2, and \methodname{} runs in \cref{tab:wan13b} share the Wan2.1 column and the \dmd$^{\dagger}$, \dmdtwodag and \methodname{} rows of \cref{tab:h3} share the MiniMax-H3 column, except where a baseline paragraph says otherwise.}
    \label{tab:supp-train-config}
    \footnotesize
    \setlength{\tabcolsep}{5pt}
    \begin{tabular}{@{}lll@{}}
        \toprule
        & Wan2.1-T2V-1.3B & MiniMax-H3-33B \\
        \midrule
        Trainable parameters & all & LoRA, rank and scaling 128 \\
        Training data & 42K captions, no real video & 248K prompts, no real video \\
        DMD2 discriminator positives & H3 teacher rollouts & H3 teacher rollouts \\
        Clip & 81 frames, $480{\times}832$ & 5\,s, 544p, mixed aspect ratios \\
        GPUs / global batch & 64 H100 / 64 & 16 H100 / 16 \\
        Optimizer & AdamW $(0, 0.999)$, wd 0.01 & AdamW $(0, 0.9)$, no wd \\
        Student / critic learning rate & $4{\times}10^{-7}$ / $8{\times}10^{-8}$ & $5{\times}10^{-5}$ / $1{\times}10^{-5}$ \\
        Critic updates per student update & 1 & 5 \\
        Teacher guidance in distillation & 5 & none \\
        Student sampler (4 NFE) & $\sigma_{\max}=320$, $t=0.934, 0.76, 0.11$ & released scheduler, shift 12 / 3 \\
        \bottomrule
    \end{tabular}
\end{table}

\begin{table}[t]
    \centering
    \caption{\textbf{Two-stage recipes of the \rcm$^{\dagger}$ and \anyflow$^{\dagger}$ ports on MiniMax-H3.} Both follow their reference: a trajectory stage first, then a stage that adds \dmd on top of it. Every other setting follows \cref{tab:supp-train-config}. Iterations count every update, as in \cref{sec:h3-eval-protocol}, so the student update count of a stage with a critic is a fraction of it. Stage 1 is listed at the checkpoint that initializes stage 2; training it further did not reach the reported model.}
    \label{tab:supp-h3-baseline-recipes}
    \footnotesize
    \setlength{\tabcolsep}{4pt}
    \begin{tabular}{@{}lll@{}}
        \toprule
        & Stage 1 & Stage 2 (the reported model) \\
        \midrule
        \multicolumn{3}{@{}l}{\textit{\rcm$^{\dagger}$: dCM, then dCM${}+{}$\dmd}} \\
        Learning rate & $5\times10^{-6}$ & $5\times10^{-5}$ (critic $2\times10^{-5}$) \\
        Iterations & 3{,}000 & 6{,}000, saved every 500 \\
        Initialization & LoRA from a full-parameter run & stage 1 \\
        Student:critic & no critic & $1{:}4$ \\
        Loss & $L_{\mathrm{dCM}}$ & $100\,L_{\mathrm{dCM}}+L_{\mathrm{DMD}}$ \\
        Reported & -- & 2{,}000 iterations, 400 student updates \\
        \midrule
        \multicolumn{3}{@{}l}{\textit{\anyflow$^{\dagger}$: flow map, then flow map${}+{}$\dmd}} \\
        Learning rate & $5\times10^{-5}$ & $5\times10^{-6}$ (critic $5\times10^{-6}$) \\
        Iterations & 500 & 3{,}000, saved every 500 \\
        Initialization & base model & stage 1 \\
        Student:critic & no critic & $1{:}1$ \\
        Loss & $L_{\mathrm{flowmap}}$ & $L_{\mathrm{flowmap}}+1.0\,L_{\mathrm{DMD}}$ \\
        Reported & -- & 3{,}000 iterations, 1{,}500 student updates \\
        \bottomrule
    \end{tabular}
\end{table}

\subsection{Wan2.1-T2V-1.3B}
\label{sec:supp-wan-protocol}

\paragraph{Training.}
We build training on the \rcm codebase~\citep{zheng2026rcm} with its consistency loss disabled and the \dmd implementation of Hyper-Bagel~\citep{lu2025hyperbagel}, so \dmd is the only objective, and train every parameter of the student from the Wan2.1-T2V-1.3B weights. Training uses 42K captions without real-video latents. The frozen Wan2.1 teacher uses guidance scale 5 throughout distribution matching. \cref{tab:supp-train-config} lists the optimizer, learning rates, batch and sampler. One critic step follows every student step.

\paragraph{Baselines.}
\dmdtwodag is our reimplementation of \dmdtwo on Wan2.1-T2V-1.3B, with the DMD2 discriminator head and five critic steps per student step. Its discriminator positives are clean samples generated by rolling out MiniMax-H3, chosen for their higher visual quality than Wan2.1-T2V-1.3B rollouts. These H3-generated samples provide the adversarial supervision; the distribution-matching teacher remains Wan2.1-T2V-1.3B. \dmd$^{\dagger}$ removes the discriminator head and the extra critic steps, so the critic and the student update once each. \methodname{} adds only the projection to the \dmd$^{\dagger}$ run. \cref{tab:wan13b} reports \dmd$^{\dagger}$ at 1{,}500 iterations, \dmdtwodag at 6{,}500 and \methodname{} at 5{,}500, the best checkpoint of each run under the evaluation below. \rcm, ADV~\citep{you2026adaptive} and \anyflow are the publicly released Wan2.1-T2V-1.3B checkpoints: \rcm samples with $\sigma_{\max}=1600$ and intermediate times $0.783$, $0.609$ and $0.406$, ADV with intermediate times $0.900$, $0.757$ and $0.522$, and \anyflow with its official 4-step bidirectional setting.

\paragraph{Evaluation.}
We reuse \anyflow's evaluation end to end: its 944 augmented VBench prompts with five samples each ($4{,}720$ clips), its per-sample seeds and video writer, and the same 16 VBench dimensions and weights for the quality, semantic and total scores. All distilled rows use 4 NFEs without classifier-free guidance. The two teacher rows use UniPC with 50 and 4 steps, a shift of 8, and guidance scale 5, which doubles their NFE. The dynamic-degree column is the VBench dimension of that name.

\paragraph{User study.}
The preference columns of \cref{tab:wan13b} follow the MiniMax-H3 protocol of \cref{sec:h3-eval-protocol} without the audio or overall-quality questions, with a pool of 20 annotators. The prompts are a uniform subset of VBench, every tenth prompt, 95 in all, and every clip uses seed 0. \methodname{} is paired with each of six baselines of \cref{tab:wan13b}: the 50-step teacher, rCM, ADV, AnyFlow, DMD$^{\dagger}$ and \dmdtwodag, 570 pairs in all, each judged on text alignment, visual quality and motion quality. \cref{tab:supp-wan-userstudy} gives the win, tie and loss shares. The undistilled 4-step model is left out because many of its clips are black frames.

\begin{table}[t]
    \centering
    \caption{\textbf{Vote distribution behind the Wan2.1 user study of \cref{tab:wan13b}.} Each row is \methodname{} against one baseline, on the three questions of that study. Win, tie and loss are shares of the judgments, in percent, and net is the win share minus the loss share. Text alignment draws ties on roughly $82$--$88\%$ of judgments throughout; the visual and motion questions separate them. The grey row is the 50-step teacher.}
    \label{tab:supp-wan-userstudy}
    \footnotesize
    \setlength{\tabcolsep}{2.7pt}
    \begin{tabular}{@{}lrrrrrrrrrrrr@{}}
        \toprule
        & \multicolumn{4}{c}{Text alignment} & \multicolumn{4}{c}{Visual quality}
        & \multicolumn{4}{c}{Motion quality} \\
        \cmidrule(lr){2-5}\cmidrule(lr){6-9}\cmidrule(lr){10-13}
        Baseline & \multicolumn{1}{c}{Win} & \multicolumn{1}{c}{Tie} & \multicolumn{1}{c}{Loss} & \multicolumn{1}{c}{Net} & \multicolumn{1}{c}{Win} & \multicolumn{1}{c}{Tie} & \multicolumn{1}{c}{Loss} & \multicolumn{1}{c}{Net}
        & \multicolumn{1}{c}{Win} & \multicolumn{1}{c}{Tie} & \multicolumn{1}{c}{Loss} & \multicolumn{1}{c}{Net} \\
        \midrule
        \deemph{Wan2.1-T2V-1.3B, $50{\times}2$ NFE} & \deemph{8.5} & \deemph{82.5} & \deemph{8.9} & \deemph{$-0.4$} & \deemph{36.7} & \deemph{44.3} & \deemph{19.1} & \deemph{$+17.6$} & \deemph{39.9} & \deemph{44.5} & \deemph{15.6} & \deemph{$+24.3$} \\
        \dmd$^{\dagger}$ & 9.3 & 87.8 & 2.8 & $+6.5$ & 58.1 & 37.4 & 4.5 & $+53.6$ & 37.7 & 59.8 & 2.5 & $+35.2$ \\
        \dmdtwodag & 10.1 & 87.2 & 2.7 & $+7.5$ & 49.9 & 39.7 & 10.4 & $+39.5$ & 35.9 & 59.0 & 5.1 & $+30.9$ \\
        ADV~\citep{you2026adaptive} & 10.7 & 81.5 & 7.8 & $+2.9$ & 46.6 & 41.4 & 12.0 & $+34.6$ & 33.3 & 52.7 & 14.1 & $+19.2$ \\
        \rcm~\citep{zheng2026rcm} & 5.7 & 86.6 & 7.8 & $-2.1$ & 46.7 & 40.4 & 12.9 & $+33.8$ & 29.6 & 58.4 & 11.9 & $+17.7$ \\
        \anyflow~\citep{gu2026anyflow} & 8.7 & 83.3 & 8.1 & $+0.6$ & 39.5 & 45.0 & 15.5 & $+24.0$ & 39.7 & 49.5 & 10.8 & $+28.9$ \\
        \bottomrule
    \end{tabular}
\end{table}

\subsection{MiniMax-H3-33B}
\label{sec:h3-eval-protocol}

\paragraph{Training.}
The student and critic use LoRA adapters with rank and scaling parameter 128 on the attention projections and both feed-forward layers of every DiT block; the teacher is frozen and, being guidance-distilled, is sampled without classifier-free guidance. Training uses the 248K prompt-only VidProM prompts~\citep{wang2024vidprom} released with Self-Forcing~\citep{huang2025selfforcing}, without rewriting and without real-video data. Each GPU holds one 5-second 544p clip; training batches contain mixed aspect ratios. \Cref{tab:supp-train-config} lists the optimizer, learning rates, and batch size. The projection of \cref{eq:method-pdmd-projection} is applied per sample and separately to the video and audio latents, each with its own residual $\bm r$. The remaining training choices follow Hyper-Bagel~\citep{lu2025hyperbagel}, a multi-step \dmd distillation framework. We also adopt the two-timescale update rule (TTUR, several critic updates per student update) of \dmdtwo~\citep{yin2024improved}: five critic updates follow every student update, and we count every update as one iteration.

\paragraph{Baselines.}
All \dmd-family rows of \cref{tab:h3} share the setup above. \methodname{} is reported at 3{,}500 iterations. \dmd$^{\dagger}$ is reported at 5{,}500 iterations of the run with both learning rates divided by ten, the best \dmd run of our learning-rate search; at the \methodname{} learning rate \dmd peaks at 500 iterations and degrades afterwards (\cref{fig:h3-score-curves}). \dmdtwodag adds the DMD2 discriminator head and is reported at 500 iterations at the \methodname{} learning rate. The discriminator's positive examples are clean samples generated by rolling out the frozen MiniMax-H3 teacher; no real-video data are used. The same search was run for \dmdtwodag, with both learning rates divided by ten and by one hundred over 0--6{,}000 iterations. No checkpoint of either run exceeded the DMD2 500-iteration score at the original learning rate. \rcm$^{\dagger}$ and \anyflow$^{\dagger}$ each port a two-stage reference recipe to MiniMax-H3; \cref{tab:supp-h3-baseline-recipes} gives both stages and \cref{sec:h3-baseline-notes} the two places where our port departs from the reference. \rcm$^{\dagger}$ is reported at 2{,}000 iterations of its second stage and \anyflow$^{\dagger}$ at 3{,}000, which at their update ratios are 400 and 1{,}500 student updates against \methodname{}'s 3{,}500 iterations. For both \rcm$^{\dagger}$ and \anyflow$^{\dagger}$ we trained three learning rates, the one in \cref{tab:supp-h3-baseline-recipes} and that rate multiplied and divided by five, and report the best of the three. We also varied the weight of the \dmd term of \rcm$^{\dagger}$ and saw no clear difference in the final scores. H3 Turbo LoRA~\citep{larry2026h3turbo} is the released v4-600 EMA adapter.

\subsubsection*{Where the ports depart from the reference}
\label{sec:h3-baseline-notes}
Two settings could not be carried over unchanged. First, both references obtain the time derivative of the student with a Jacobian--vector product, which MiniMax-H3 does not support, so we use the finite-difference tangent that the reference implementation also provides (its \texttt{fd\_type=2}) and approximate $\mathrm{d}u/\mathrm{d}t$ by finite differences in both ports. Second, the \anyflow$^{\dagger}$ flow-map stage uses $\epsilon=0.005$, a fresh coupling per step, flow-map ratios $0.5$ and $0.25$, and fp32 embedders; its second stage applies \dmd to the rollout endpoint with $N\in\{2,4,8,16,50\}$, and keeps the reference's rule that the critic and student learning rates are equal.

\paragraph{Evaluation.}
Every MiniMax-H3 score in the paper, including the per-checkpoint tracking of \cref{sec:ablation-dynamics}, is computed on all 387 prompts of VideoGen-Eval~\citep{yang2025videogeneval} with one harness; we do not reimplement any scoring rule. Each prompt is rewritten once with Qwen3.8-27B~\citep{qwen2026qwen38}, using the official MiniMax-H3 video-prompt writing guide~\citep{minimaxh3promptguide} as the system prompt, and the same rewritten prompt is given to every method. One clip per prompt is rendered at 4 NFEs with the released scheduler (shift 12 for video and 3 for audio), 124 frames at 24\,fps, seed 42, and 544p with a prompt-dependent aspect ratio: across the 387 prompts we cycle through $21{:}9$, $9{:}21$, $16{:}9$, $9{:}16$, $1{:}1$, $4{:}3$ and $3{:}4$. Quality is computed from the seven prompt-independent VBench~\citep{huang2023vbench} dimensions (subject consistency, background consistency, temporal flickering, motion smoothness, dynamic degree, aesthetic quality, imaging quality), normalized and weighted as in VBench. The nine semantic dimensions (object class, multiple objects, human action, color, spatial relationship, scene, appearance style, temporal style, overall consistency) follow each VBench dimension's own rule, with Qwen3.8-27B as the vision-language judge; the semantic score is their VBench-weighted average, and the total score is $(4\cdot\mathrm{quality}+\mathrm{semantic})/5$. Semantic dimensions differ in how many of the 387 prompts they apply to (spatial relationship 25, appearance style 30, overall consistency all 387), so we read them at the aggregate level.
\label{sec:vgeneval387}

\paragraph{Audio metrics.}
\label{sec:supp-h3-audio-crossmodal}
The first four audio columns of \cref{tab:h3} evaluate the generated audio alone: PQ, CE and CU are the production quality, content enjoyment and content usefulness of Audiobox Aesthetics~\citep{tjandra2025audiobox}, and IS is the Inception score~\citep{salimans2016improved} over PaSST~\citep{koutini2022passt} audio tags. Two additional metrics compare the audio with the video: IB is the ImageBind~\citep{girdhar2023imagebind} cosine between the audio and the video of the same clip, and DeSync is the audio--video offset in seconds predicted by Synchformer~\citep{iashin2024synchformer}; both come from the av-benchmark suite of MMAudio~\citep{cheng2025mmaudio}. \methodname{} leads both among the 4-NFE rows.

\paragraph{User study.}
The preference columns of \cref{tab:h3} come from a pairwise study on the same 387 VideoGen-Eval clips as the table, judged by 20 annotators. Each pair shows the \methodname{} clip and the clip of one baseline for the same prompt and seed, side by side in random left--right order, and the annotator picks the better clip or a tie on five questions: an overall judgment, text alignment, visual quality, motion quality, and audio quality. The seven baselines are the rows of \cref{tab:h3}, including the 50-step teacher and the undistilled 4-step model. For each baseline and question the table reports the share of votes for \methodname{} minus the share of votes for the baseline, in percent, so a positive number favors \methodname{} and ties count for neither side. \cref{tab:supp-h3-userstudy} gives the underlying win, tie and loss shares. All $387$ pairs were judged for each baseline.

\begin{table}[t]
    \centering
    \caption{\textbf{Vote distribution behind the MiniMax-H3 user study of \cref{tab:h3}.} Each row is \methodname{} against one baseline, on the five questions of that study. Win, tie and loss are shares of the judgments, in percent, and net is the win share minus the loss share, computed from the displayed percentages. Text alignment draws ties on roughly $64$--$80\%$ of judgments throughout. Grey rows are the undistilled base model at 50 and 4 sampling steps.}
    \label{tab:supp-h3-userstudy}
    \scriptsize
    \setlength{\tabcolsep}{1.8pt}
    \resizebox{\textwidth}{!}{%
    \begin{tabular}{@{}l*{20}{r}@{}}
        \toprule
        & \multicolumn{4}{c}{Overall quality} & \multicolumn{4}{c}{Text alignment} & \multicolumn{4}{c}{Visual quality} & \multicolumn{4}{c}{Motion quality} & \multicolumn{4}{c}{Audio quality} \\
        \cmidrule(lr){2-5}\cmidrule(lr){6-9}\cmidrule(lr){10-13}\cmidrule(lr){14-17}\cmidrule(lr){18-21}
        Baseline & \multicolumn{1}{c}{Win} & \multicolumn{1}{c}{Tie} & \multicolumn{1}{c}{Loss} & \multicolumn{1}{c}{Net} & \multicolumn{1}{c}{Win} & \multicolumn{1}{c}{Tie} & \multicolumn{1}{c}{Loss} & \multicolumn{1}{c}{Net} & \multicolumn{1}{c}{Win} & \multicolumn{1}{c}{Tie} & \multicolumn{1}{c}{Loss} & \multicolumn{1}{c}{Net} & \multicolumn{1}{c}{Win} & \multicolumn{1}{c}{Tie} & \multicolumn{1}{c}{Loss} & \multicolumn{1}{c}{Net} & \multicolumn{1}{c}{Win} & \multicolumn{1}{c}{Tie} & \multicolumn{1}{c}{Loss} & \multicolumn{1}{c}{Net} \\
        \midrule
        \deemph{MiniMax-H3-33B, 50 NFE} & \deemph{14.2} & \deemph{34.6} & \deemph{51.2} & \deemph{$-37.0$} & \deemph{10.6} & \deemph{71.3} & \deemph{18.1} & \deemph{$-7.5$} & \deemph{8.0} & \deemph{48.6} & \deemph{43.4} & \deemph{$-35.4$} & \deemph{8.0} & \deemph{72.1} & \deemph{19.9} & \deemph{$-11.9$} & \deemph{7.5} & \deemph{81.1} & \deemph{11.4} & \deemph{$-3.9$} \\
        \deemph{MiniMax-H3-33B, 4 NFE} & \deemph{79.3} & \deemph{11.6} & \deemph{9.0} & \deemph{$+70.3$} & \deemph{26.4} & \deemph{63.8} & \deemph{9.8} & \deemph{$+16.6$} & \deemph{78.3} & \deemph{16.0} & \deemph{5.7} & \deemph{$+72.6$} & \deemph{50.1} & \deemph{44.4} & \deemph{5.4} & \deemph{$+44.7$} & \deemph{49.6} & \deemph{48.3} & \deemph{2.1} & \deemph{$+47.5$} \\
        H3 Turbo LoRA & 51.2 & 32.3 & 16.5 & $+34.7$ & 13.4 & 76.0 & 10.6 & $+2.8$ & 49.9 & 38.2 & 11.9 & $+38.0$ & 26.9 & 65.6 & 7.5 & $+19.4$ & 13.7 & 81.9 & 4.4 & $+9.3$ \\
        \dmd$^{\dagger}$ & 45.2 & 39.3 & 15.5 & $+29.7$ & 15.2 & 73.1 & 11.6 & $+3.6$ & 34.9 & 55.3 & 9.8 & $+25.1$ & 19.9 & 72.4 & 7.8 & $+12.1$ & 15.5 & 79.8 & 4.7 & $+10.8$ \\
        \dmdtwodag & 51.7 & 31.8 & 16.5 & $+35.2$ & 19.1 & 70.8 & 10.1 & $+9.0$ & 39.0 & 50.6 & 10.3 & $+28.7$ & 27.6 & 64.1 & 8.3 & $+19.3$ & 26.6 & 69.0 & 4.4 & $+22.2$ \\
        \rcm$^{\dagger}$ & 60.2 & 23.3 & 16.5 & $+43.7$ & 16.3 & 70.5 & 13.2 & $+3.1$ & 62.8 & 25.6 & 11.6 & $+51.2$ & 39.8 & 51.9 & 8.3 & $+31.5$ & 27.9 & 61.2 & 10.9 & $+17.0$ \\
        \anyflow$^{\dagger}$ & 69.5 & 20.2 & 10.3 & $+59.2$ & 9.0 & 80.4 & 10.6 & $-1.6$ & 72.4 & 21.7 & 5.9 & $+66.5$ & 33.9 & 60.2 & 5.9 & $+28.0$ & 30.0 & 65.9 & 4.1 & $+25.9$ \\
        \bottomrule
    \end{tabular}%
    }
\end{table}

\section{Additional Controlled Experiments on MiniMax-H3}
\label{sec:additional-controls}

The ablations of \cref{sec:ablations} change the projection direction only and keep every other setting fixed. This section provides additional measurements and controls for those runs. \cref{sec:h3-proj-directions-training} follows the six projection variants on three prompts to 3{,}000 iterations. \cref{sec:h3-norm-ratio} compares retained update norms, and \cref{sec:h3-proj-endpoints} decodes one training batch to show what the projection removes. \cref{sec:h3-larger-step,sec:h3-partial-retention} address two questions left open by the ablations: whether \methodname{} stays stable only because the projection shrinks the update, and whether a partial projection still stabilizes training. \cref{sec:h3-two-step} repeats the comparison at 2 NFE, and \cref{sec:h3-critic-ratio} trains \methodname{} with one critic update per student update. Every run uses the 4-NFE MiniMax-H3 recipe of \cref{sec:ablations}: the same initialization, data, LoRA rank, TTUR schedule, and sampler, except where the subsection says otherwise. We save a checkpoint every 500 iterations and score each one on the 387 VideoGen-Eval prompts with the protocol of \cref{sec:vgeneval387}. Scores are in percent.

\subsection{Projection Directions Over Training}
\label{sec:h3-proj-directions-training}

\cref{fig:projection-ablation-main} shows one prompt at 1{,}500 iterations. \cref{fig:projection-ablation} follows the same six variants on three SoRA prompts to 3{,}000 iterations, with the frames at 500, 1{,}500 and 3{,}000 iterations stacked under each prompt. In these examples, all five control variants degrade in different ways, while \methodname{} continues to follow its prompt at 3{,}000 iterations.

\ifdefined\pacellw\else\newlength{\pacellw}\fi
\ifdefined\pacellh\else\newlength{\pacellh}\fi
\newcommand{\pasetup}{%
    \setlength{\tabcolsep}{0pt}%
    \renewcommand{\arraystretch}{0}%
    \setlength{\pacellw}{\dimexpr(\textwidth-0.32in-9pt)/6\relax}%
    \setlength{\pacellh}{\dimexpr\pacellw*4/7\relax}%
}
\newcommand{\pap}[1]{\makebox[0.17in][c]{%
    \raisebox{\dimexpr0.5\pacellh-0.5\height\relax}[0pt][0pt]{%
    \rotatebox{90}{\scriptsize Prompt #1}}}}
\newcommand{\pai}[1]{\makebox[0.15in][r]{%
    \raisebox{\dimexpr0.5\pacellh-0.5\height\relax}[0pt][0pt]{\tiny #1}}}
\newcommand{\pah}[2]{\makebox[\pacellw][c]{%
    \shortstack{\scriptsize #1\\[1pt]\scriptsize #2}}}
\newcommand{\pa}[1]{\includegraphics[width=\pacellw]%
    {figures/src/projection_ablation/#1}}
\newcommand{\parow}[2]{\pa{dmd_#1_#2} & \pa{rand_#1_#2} & \pa{cscore_#1_#2} &
    \pa{tres_#1_#2} & \pa{rpar_#1_#2} & \pa{pdmd_#1_#2}}
\newcommand{\pahead}{%
    & & \pah{DMD$^{\dagger}$}{(no projection)}
    & \pah{Random}{direction $\perp$}
    & \pah{Critic}{score $\perp$}
    & \pah{Teacher endpoint}{residual $\perp$}
    & \pah{Critic endpoint}{residual $\parallel$}
    & \pah{\textbf{Critic endpoint}}{\textbf{residual $\perp$ (\methodname{})}}}

\begin{figure}[p]
    \centering
    \pasetup
    \begin{tabular}{@{}r@{\hspace{2pt}}r@{\hspace{2pt}}c@{\hspace{1pt}}c@{\hspace{1pt}}c@{\hspace{1pt}}c@{\hspace{1pt}}c@{\hspace{1pt}}c@{}}
        \pahead \\[2pt]
        & \pai{500}  & \parow{166}{500} \\
        \pap{166} & \pai{1500} & \parow{166}{1500} \\
        & \pai{3000} & \parow{166}{3000} \\[10pt]
        & \pai{500}  & \parow{182}{500} \\
        \pap{182} & \pai{1500} & \parow{182}{1500} \\
        & \pai{3000} & \parow{182}{3000} \\[10pt]
        & \pai{500}  & \parow{196}{500} \\
        \pap{196} & \pai{1500} & \parow{196}{1500} \\
        & \pai{3000} & \parow{196}{3000} \\
    \end{tabular}
    \caption{\textbf{Effect of projection direction during 4-NFE MiniMax-H3 distillation.} Six variants that differ only in which direction is removed from the \dmd update $\bm d$, one per column, in the order of \cref{fig:projection-ablation-main}. Each row is one of three SoRA prompts, given by its prompt id, and every row is a tight vertical triple: the same run at 500, 1{,}500 and 3{,}000 iterations, top to bottom, so reading downwards follows one training trajectory. Every frame is taken at 30\% of the generated $1344{\times}768$ clip; prompt and seed are the same across variants. In these examples, all five control variants degrade in different ways. Retaining the student-critic endpoint-parallel component instead of removing it is the fastest failure: by 3{,}000 iterations it produces the same prompt-independent pebble texture for every prompt. \methodname{} continues to follow its prompt at 3{,}000 iterations. The five control variants show severe degradation on all three illustrated prompts.}
    \label{fig:projection-ablation}
\end{figure}

\subsection{Retained Update Norm}
\label{sec:h3-norm-ratio}

The four projection variants shown in \cref{fig:supp-h3-norm-ratio}, trained on MiniMax-H3 in the 4-NFE setup of \cref{sec:ablations}, each retain one component of the \dmd update $\bm d$ and discard its complement. The error-removal ratio $\gamma_e$ of \cref{tab:supp-2d-gamma} is not available here: it is defined against the population-optimal critic endpoint $\bm c^\star(q)$, which we estimate from a large student-sample bank in the 2D setting of \cref{sec:supp-2d-exp1}. In large-scale video generation we therefore measure what is observable, how much of the update each variant retains, and read the error question from three proxies: this norm ratio, the decoded removed component of \cref{sec:h3-proj-endpoints}, and the partial-projection control of \cref{sec:h3-partial-retention}. We log, at every student step, the ratio $\|\bm d_{\mathrm{kept}}\|/\|\bm d\|$ between the retained component and the full update, separately for the video and the audio latents. \cref{fig:supp-h3-norm-ratio} shows the ratio over training. The \dmd variant applies no projection, so its ratio is $1$ by definition and is not drawn.

Three observations follow. First, the random-direction variant keeps the full norm: a random direction in a high-dimensional latent space is nearly orthogonal to $\bm d$, so removing that component changes almost nothing, and the variant behaves like \dmd. Second, \methodname{} removes between $8\%$ and $17\%$ of the video update norm for over $90\%$ of training (median $12\%$). The critic-score variant removes a median of $6\%$ and still degrades; it removes more only late in training, after its scores have already collapsed (\cref{fig:h3-score-curves}). These comparisons suggest that a smaller update alone does not explain the stability of \methodname{}. Third, the residual-parallel variant keeps only $5$--$30\%$ of the video update (median $11\%$). This variant degenerates fastest, consistent with the parallel component containing a destabilizing part of the update. Taken together, \cref{fig:supp-h3-norm-ratio} shows that stability does not follow the amount removed: the random direction removes almost none of the update and degrades, the residual-parallel variant retains a median $11\%$ and degrades fastest, whereas \methodname{} retains $88\%$ and remains stable over the evaluated interval. What separates them is which component is removed, which is the ordering $\gamma_e$ measures directly in 2D.

\begin{figure}[t]
\centering
\includegraphics[width=0.92\linewidth]{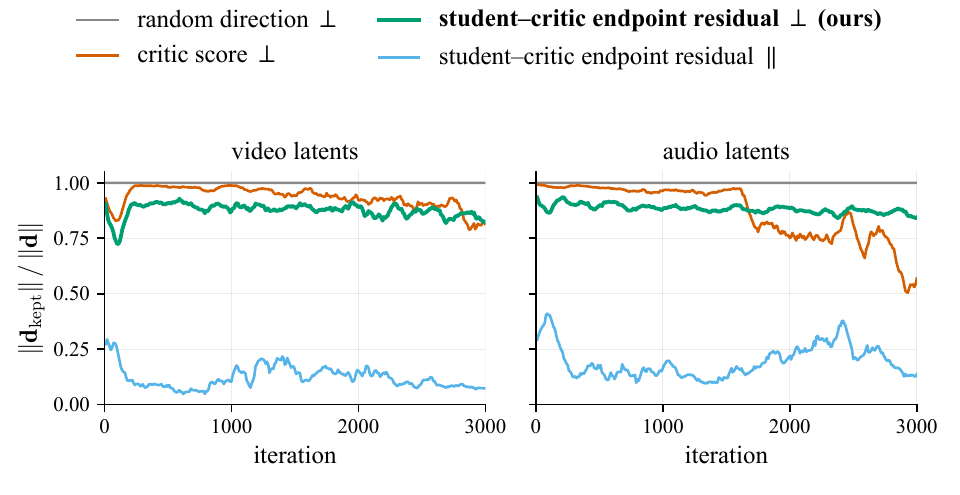}
\caption{\textbf{Retained fraction of the \dmd update norm on MiniMax-H3 at 4 NFE.}
$\bm d_{\mathrm{kept}}$ is the component the variant keeps, as a 100-iteration running mean.
Left: video latents.
Right: audio latents.}
\label{fig:supp-h3-norm-ratio}
\end{figure}

\subsection{Restoring Half of the Removed Component}
\label{sec:h3-partial-retention}

\methodname{} removes the component of $\bm d$ along the student--critic endpoint residual $\bm r$ entirely. To test whether a partial projection is enough, we interpolate between \dmd and \methodname{} with
\begin{equation}
    \bm d_\beta
    =
    \bm d_\perp + \beta\,\operatorname{proj}_{\bm r}(\bm d),
    \label{eq:partial-retention}
\end{equation}
where $\beta=0$ recovers \methodname{} and $\beta=1$ recovers plain \dmd. We train the midpoint $\beta=0.5$ for 5{,}000 iterations with the learning rates of \methodname{}. At $\beta=0.5$ half of the removed component is put back. The residual-parallel component scales linearly with $\beta$. If it drives degradation, restoring it should reduce the stability benefit; the analysis does not predict a quantitative degradation rate.

\cref{fig:supp-h3-extra-step-retain} is consistent with this prediction. The $\beta=0.5$ run stays at the level of the stable \methodname{} run for the first 2{,}000 iterations, with a total score between 82.64 and 82.98. The run crosses the 4-step base-model score at twice the iteration count of \dmd. \dmd drops below the 4-step base model (79.48 total) at 2{,}000 iterations and reaches 70.07 at 2{,}500; the $\beta=0.5$ run drops below the base model at 4{,}000 iterations and reaches 74.90 at 5{,}000. In this setting, restoring half of the residual-parallel component is enough to reproduce the degradation, while removing the full component remains stable over the evaluated interval.

\subsection{A Larger Student Step}
\label{sec:h3-larger-step}

\cref{sec:h3-norm-ratio} shows that \methodname{} removes a median $12\%$ of the video update norm. To test whether the projection helps only because its update is smaller than the \dmd update, we train \methodname{} with the student learning rate raised from $5\times10^{-5}$ to $6.25\times10^{-5}$, a factor of $1.25$, and keep the critic learning rate at $1\times10^{-5}$. The projected update keeps between $83\%$ and $92\%$ of the \dmd norm for most of training (\cref{sec:h3-norm-ratio}), motivating this scale factor: $1.25\times0.83>1$. This is a learning-rate stress test, not an exact match of parameter-update norms, because the generator Jacobian and AdamW also transform the update.

\cref{fig:supp-h3-extra-step-retain} plots the run next to \dmd at the base learning rate. The $1.25\times$ run does not exhibit a sustained decline over this interval. The total score of every one of the ten checkpoints lies between 82.82 and 83.48, above the 50-step teacher (82.41), and the last checkpoint at 5{,}000 iterations scores 83.18.
\dmd at the base learning rate, in the same figure, has fallen to 70.07 by 2{,}500 iterations. A larger step does not reproduce the degradation, suggesting that update direction, rather than magnitude alone, drives the stability gain. \cref{fig:supp-h3-larger-step-samples} shows frames from four checkpoints of the same run, and the frames remain visually consistent over this interval.

\begin{figure}[t]
    \centering
    \includegraphics[width=0.92\linewidth]{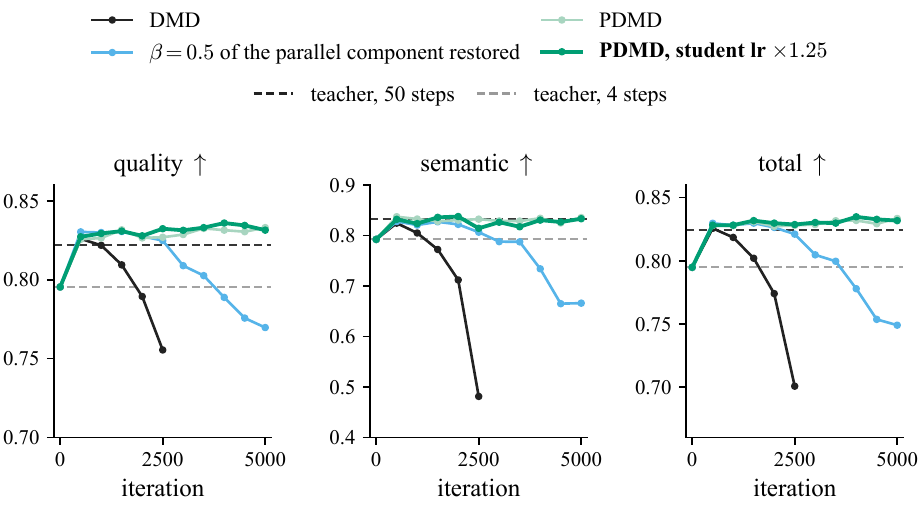}
    \caption{\textbf{A larger student step and a partial projection on MiniMax-H3 at 4 NFE.} Quality, semantic, and total scores of every 500-iteration checkpoint up to 5{,}000 iterations on the 387 VideoGen-Eval prompts. \dmd is the corresponding variant in \cref{fig:h3-score-curves}. The light green curve shows \methodname{} at the base learning rate, as far as it was scored, and the dark green curve shows \methodname{} trained with a $1.25\times$ larger student learning rate (\cref{sec:h3-larger-step}). The blue curve shows the run that restores half of the residual-parallel component removed by \methodname{}, using \cref{eq:partial-retention} with $\beta=0.5$ (\cref{sec:h3-partial-retention}). Iteration 0 is the undistilled base model at 4 NFE. Horizontal lines show the teacher scores at 50 and 4 sampling steps.}
    \label{fig:supp-h3-extra-step-retain}
\end{figure}

\begin{figure}[t]
    \centering
    \setlength{\tabcolsep}{0pt}
    \renewcommand{\arraystretch}{0}
    \scriptsize
    \begin{tabular}{@{}r@{\hspace{3pt}}c@{}c@{}c@{}c@{\hspace{5pt}}c@{}}
        \rule{0pt}{7pt}& 500 iterations & 1{,}000 iterations & 3{,}000 iterations
        & 5{,}000 iterations & best \dmd \\[2pt]
        \rule{0pt}{0.1063\linewidth}\raisebox{\dimexpr0.0532\linewidth-0.5\height\relax}[0pt][0pt]{\rotatebox{90}{Prompt 746}} &
          \includegraphics[width=0.1880\linewidth]{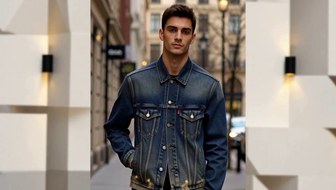} &
          \includegraphics[width=0.1880\linewidth]{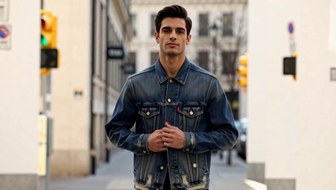} &
          \includegraphics[width=0.1880\linewidth]{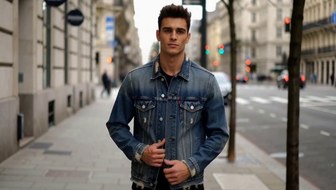} &
          \includegraphics[width=0.1880\linewidth]{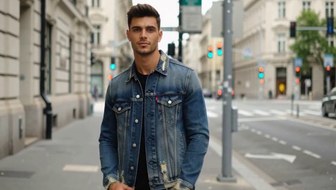} &
          \includegraphics[width=0.1880\linewidth]{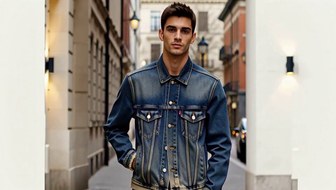} \\[6pt]
        \rule{0pt}{0.1063\linewidth}\raisebox{\dimexpr0.0532\linewidth-0.5\height\relax}[0pt][0pt]{\rotatebox{90}{Prompt 837}} &
          \includegraphics[width=0.1880\linewidth]{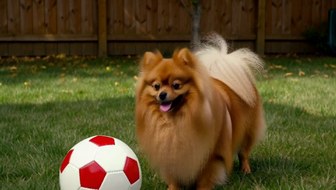} &
          \includegraphics[width=0.1880\linewidth]{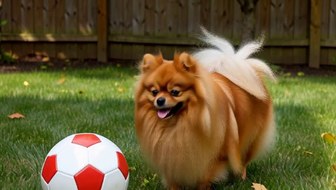} &
          \includegraphics[width=0.1880\linewidth]{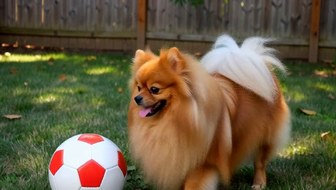} &
          \includegraphics[width=0.1880\linewidth]{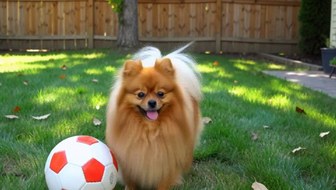} &
          \includegraphics[width=0.1880\linewidth]{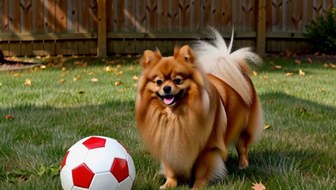} \\[6pt]
        \rule{0pt}{0.1444\linewidth}\raisebox{\dimexpr0.0722\linewidth-0.5\height\relax}[0pt][0pt]{\rotatebox{90}{Prompt 952}} &
          \includegraphics[width=0.1880\linewidth]{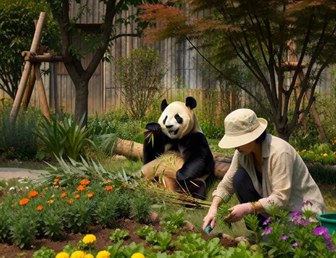} &
          \includegraphics[width=0.1880\linewidth]{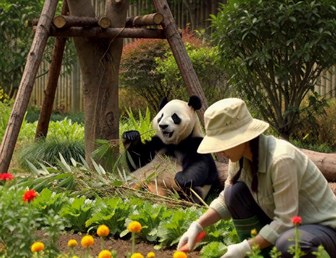} &
          \includegraphics[width=0.1880\linewidth]{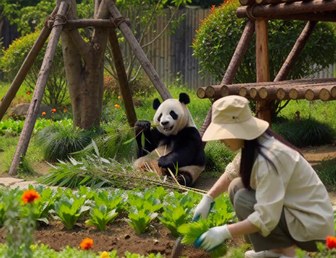} &
          \includegraphics[width=0.1880\linewidth]{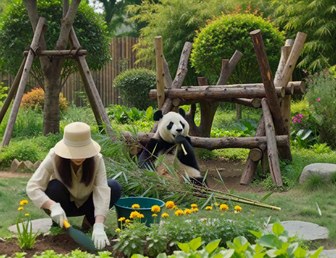} &
          \includegraphics[width=0.1880\linewidth]{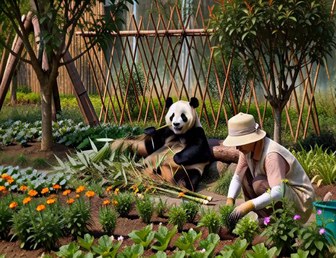} \\[6pt]
        \rule{0pt}{0.0806\linewidth}\raisebox{\dimexpr0.0403\linewidth-0.5\height\relax}[0pt][0pt]{\rotatebox{90}{Prompt 961}} &
          \includegraphics[width=0.1880\linewidth]{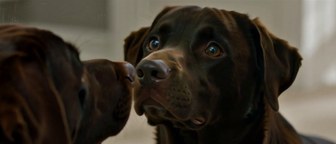} &
          \includegraphics[width=0.1880\linewidth]{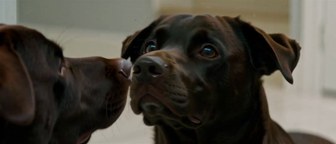} &
          \includegraphics[width=0.1880\linewidth]{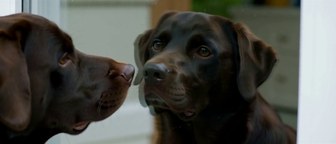} &
          \includegraphics[width=0.1880\linewidth]{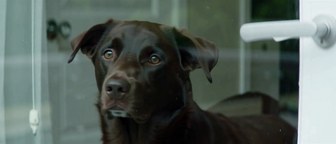} &
          \includegraphics[width=0.1880\linewidth]{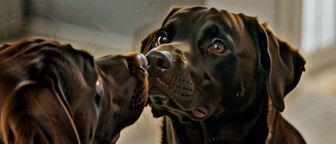} \\[6pt]
        \rule{0pt}{0.0806\linewidth}\raisebox{\dimexpr0.0403\linewidth-0.5\height\relax}[0pt][0pt]{\rotatebox{90}{Prompt 968}} &
          \includegraphics[width=0.1880\linewidth]{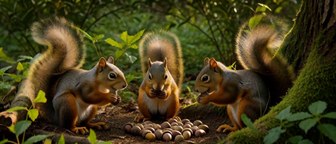} &
          \includegraphics[width=0.1880\linewidth]{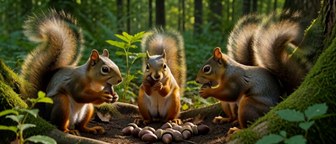} &
          \includegraphics[width=0.1880\linewidth]{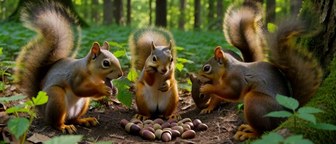} &
          \includegraphics[width=0.1880\linewidth]{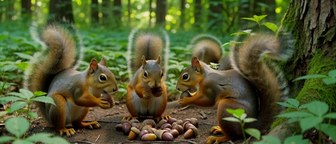} &
          \includegraphics[width=0.1880\linewidth]{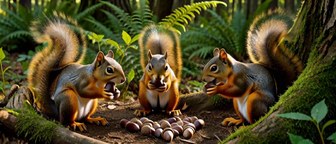} \\[6pt]
        \rule{0pt}{0.1880\linewidth}\raisebox{\dimexpr0.0940\linewidth-0.5\height\relax}[0pt][0pt]{\rotatebox{90}{Prompt 1095}} &
          \includegraphics[width=0.1445\linewidth]{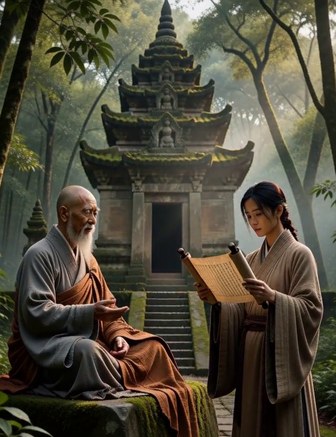} &
          \includegraphics[width=0.1445\linewidth]{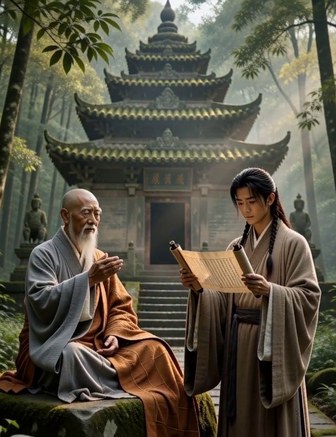} &
          \includegraphics[width=0.1445\linewidth]{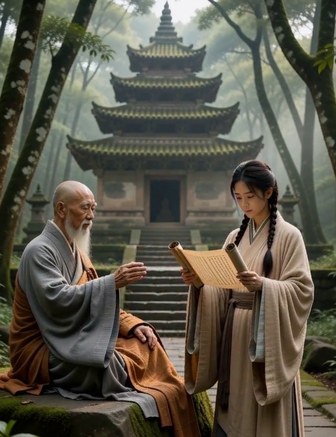} &
          \includegraphics[width=0.1445\linewidth]{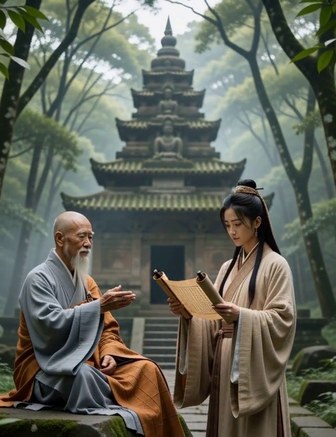} &
          \includegraphics[width=0.1445\linewidth]{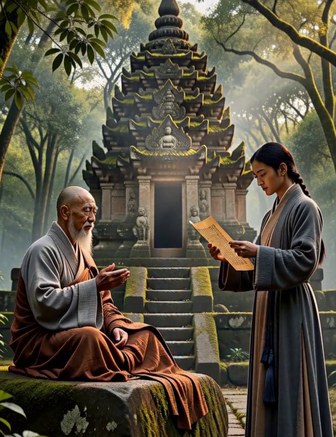} \\
    \end{tabular}
    \caption{\textbf{\methodname{} with a $1.25\times$ larger student step also does not degrade over 5{,}000 iterations.} The first four columns are checkpoints of the single \methodname{} run of \cref{sec:h3-larger-step}. The last column, set slightly apart, is the best \dmd checkpoint over a learning-rate search, the \dmd$^{\dagger}$ row of \cref{tab:h3}. Rows are labelled with the VideoGen-Eval prompt id; every panel is the frame at 80\% of the clip, with the same prompt, seed and sampler throughout.}
    \label{fig:supp-h3-larger-step-samples}
\end{figure}

\clearpage
\subsection{One Critic Update per Student Update on MiniMax-H3}
\label{sec:h3-critic-ratio}

The DMD-family MiniMax-H3 rows of \cref{tab:h3} take five critic updates per student update, the two-timescale update rule (TTUR) of \dmdtwo described in \cref{sec:h3-eval-protocol}. On Wan2.1, our \dmd and \methodname{} runs take one critic update per student update, while \dmdtwodag takes five. We repeat \methodname{} on MiniMax-H3 with one critic update per student update and every other setting of \cref{sec:h3-eval-protocol} unchanged. \Cref{tab:supp-h3-critic-ratio} compares the 1{,}000-iteration checkpoint of this run with the \dmd$^{\dagger}$ and \methodname{} rows of \cref{tab:h3}, which keep the five critic updates, and with \dmd trained with one critic update per student update, reported at 1{,}500 iterations, on the video and audio metrics. With one critic update per student update, \methodname{} scores 82.90 total, above \dmd$^{\dagger}$ with five and above every 4-NFE baseline of \cref{tab:h3}. Our projection does not depend on the extra critic updates. It is compatible with them.

\begin{table}[!htb]
    \centering
    \caption{\textbf{\methodname{} with one critic update per student update on MiniMax-H3.} VideoGen-Eval scores on the 387 prompts (percentages) and audio metrics of the same clips, as in \cref{tab:supp-h3-nfe2}. Student\,:\,critic is the ratio of student to critic updates; the best value per column is bold.}
    \label{tab:supp-h3-critic-ratio}
    \scriptsize
    \setlength{\tabcolsep}{2.5pt}
    \begin{tabular}{@{}lccccccccccc@{}}
        \toprule
        & & & \multicolumn{3}{c}{Video} & \multicolumn{4}{c}{Audio quality} & \multicolumn{2}{c}{Audio--video} \\
        \cmidrule(lr){4-6} \cmidrule(lr){7-10} \cmidrule(lr){11-12}
        Method & NFE & Student\,:\,critic & \textbf{Total} & Quality & Semantic & PQ & CE & CU & IS & IB & DeSync $\downarrow$ \\
        \midrule
        \dmd$^{\dagger}$ & 4 & 1\,:\,1 & 82.37 & 82.24 & 82.89 & 6.321 & 3.674 & 5.900 & 4.32 & 0.148 & 0.891 \\
        \dmd$^{\dagger}$ & 4 & 1\,:\,5 & 82.76 & 82.68 & \textbf{83.05} & 6.381 & 3.801 & 5.931 & 4.79 & 0.176 & 0.905 \\
        \methodname{} & 4 & 1\,:\,1 & 82.90 & 83.14 & 81.92 & \textbf{6.538} & \textbf{4.123} & \textbf{6.209} & \textbf{5.08} & 0.191 & 0.808 \\
        \methodname{} (\cref{tab:h3}) & 4 & 1\,:\,5 & \textbf{83.17} & \textbf{83.25} & 82.86 & 6.530 & 4.062 & 6.180 & 4.98 & \textbf{0.195} & \textbf{0.802} \\
        \bottomrule
    \end{tabular}
\end{table}

\subsection{Two-Step Generation}
\label{sec:h3-two-step}

We repeat the comparison at 2 NFE. The two runs share the setup of \cref{sec:h3-eval-protocol} with the student sampled in two steps; the projection is the only difference between them. Over the 6{,}000 iterations, every checkpoint of \methodname{} scores 80 total or above on the 387 VideoGen-Eval prompts, while the run without the projection falls below 60. \Cref{tab:supp-h3-nfe2} reports the best \dmd checkpoint at 1{,}000 iterations and the \methodname{} checkpoint at 2{,}000, with the audio metrics of the same two checkpoints: \methodname{} leads on all six. \Cref{fig:h3_nfe2} shows frames of these two checkpoints next to the four-step models. \Cref{fig:h3_nfe2_samples,fig:h3_nfe2_samples-2,fig:h3_nfe2_samples-3,fig:h3_nfe2_samples-4} provide eight additional prompts comparing \dmd at 2 NFE with \methodname at 2 and 4 NFE, with three uniformly sampled frames per clip.

\begin{table}[!ht]
    \centering
    \caption{\textbf{Best checkpoint of \dmd and \methodname{} at 2 NFE on MiniMax-H3.} VideoGen-Eval scores on the 387 prompts (percentages) and audio metrics of the same clips. PQ, CE, CU and IS are the audio-quality metrics of \cref{tab:h3}; IB and DeSync are the audio--video agreement metrics of \cref{tab:h3}. Arrows give the preferred direction; the best value per column is bold.}
    \label{tab:supp-h3-nfe2}
    \scriptsize
    \setlength{\tabcolsep}{3pt}
    \begin{tabular}{@{}lcccccccccc@{}}
        \toprule
        & & \multicolumn{3}{c}{Video} & \multicolumn{4}{c}{Audio quality} & \multicolumn{2}{c}{Audio--video} \\
        \cmidrule(lr){3-5} \cmidrule(lr){6-9} \cmidrule(lr){10-11}
        Method & NFE & \textbf{Total} & Quality & Semantic & PQ & CE & CU & IS & IB & DeSync $\downarrow$ \\
        \midrule
        \dmd$^{\dagger}$ & 2 & 81.58 & 81.87 & 80.39 & 4.529 & 2.380 & 3.781 & 1.88 & 0.085 & 1.022 \\
        \textbf{\methodname{} (ours)} & 2 & \textbf{82.88} & \textbf{82.83} & \textbf{83.05} & \textbf{6.061} & \textbf{3.719} & \textbf{5.597} & \textbf{4.16} & \textbf{0.177} & \textbf{0.838} \\
        \bottomrule
    \end{tabular}
\end{table}

\begingroup
\providecommand{\hmn}[4]{\rotatebox{90}{\parbox[c]{#3}{\centering #4 #1\\[0pt]#4 #2}}}
\providecommand{\hmf}[3]{\includegraphics[height=#3]{figures/src/motion/h3_#1_#2_strip.jpg}}
\providecommand{\hmi}[2]{\makebox[#2][c]{\tiny #1}}
\providecommand{\hmheadA}[1]{\hmi{0}{#1}\hmi{11}{#1}\hmi{22}{#1}\hmi{34}{#1}\hmi{45}{#1}\hmi{56}{#1}\hmi{67}{#1}\hmi{78}{#1}\hmi{89}{#1}\hmi{101}{#1}\hmi{112}{#1}\hmi{123}{#1}}
\providecommand{\hmpl}[2]{\raisebox{\dimexpr2\dimexpr#2\relax+3pt-0.5\height\relax}[0pt][0pt]{\rotatebox{90}{\scriptsize Prompt #1}}}
\providecommand{\hmnferow}[6]{%
    #6 & \hmn{#4}{#5}{#3}{\scriptsize} & \hmf{#1}{#2}{#3} \\[2pt]}
\providecommand{\hmnfeblock}[2]{%
    \hmnferow{dmd}{#1}{#2}{DMD$^{\dagger}$}{4 NFE}{}
    \hmnferow{dmd2nfe}{#1}{#2}{DMD$^{\dagger}$}{2 NFE}{}
    \hmnferow{pdmd}{#1}{#2}{\textbf{\methodname{}}}{\textbf{4 NFE}}{}
    \hmnferow{pdmd2nfe}{#1}{#2}{\textbf{\methodname{}}}{\textbf{2 NFE}}{\hmpl{#1}{#2}}}

\begin{figure}[p]
    \centering
    \setlength{\tabcolsep}{0pt}
    \renewcommand{\arraystretch}{0}
    \begin{tabular}{@{}c@{\hspace{2pt}}c@{\hspace{2pt}}l@{}}
        & & \hmheadA{1.071cm} \\[1pt]
        \hmnfeblock{857}{2.5cm}
    \end{tabular}

    \caption{\textbf{Motion comparisons of \dmd and \methodname{} at 2 NFE and 4 NFE.} DMD$^{\dagger}$ at 4 and 2 NFE and \methodname{} at 4 and 2 NFE, using the checkpoints reported in \cref{tab:h3,tab:supp-h3-nfe2}. VideoGen-Eval prompt 857 on MiniMax-H3, one row per model; twelve uniformly spaced frames of the 124-frame clip, left to right, labelled by frame index, with the same prompt and seed across rows. Against \methodname{} at 4 NFE, \methodname{} at 2 NFE shows slightly lower motion quality and less realistic object texture, but remains clearly better than the unprojected DMD$^{\dagger}$ at 2 NFE.}
    \label{fig:h3_nfe2}
\end{figure}
\endgroup

\clearpage
\begingroup
\newcommand{\nfesample}[1]{\raisebox{-0.5\height}{\includegraphics[width=0.285\linewidth,height=77pt,keepaspectratio]{figures/src/h3_nfe2_samples/#1.jpg}}}
\newcommand{\nferow}[3]{\parbox[c]{0.12\linewidth}{\centering\scriptsize #1} &
\nfesample{#2_#3_f0} & \nfesample{#2_#3_f62} & \nfesample{#2_#3_f123}\\[2pt]}
\newcommand{\nfeblock}[1]{%
\multicolumn{4}{c}{\small\textbf{Prompt #1}}\\[3pt]
& \scriptsize Frame 0 & \scriptsize Frame 62 & \scriptsize Frame 123\\[2pt]
\nferow{DMD$^{\dagger}$\\2 NFE}{dmd2}{#1}
\nferow{\methodname\\2 NFE}{pdmd2}{#1}
\nferow{\methodname\\4 NFE}{pdmd4}{#1}}

\begin{figure}[p]
\centering
\setlength{\tabcolsep}{1pt}
\renewcommand{\arraystretch}{0}
\begin{tabular}{@{}cccc@{}}
\nfeblock{736}\\[8pt]
\nfeblock{735}
\end{tabular}
\caption{\textbf{Additional two-step generation results (part 1 of 4).} DMD$^{\dagger}$ at 2 NFE, \methodname at 2 NFE, and \methodname at 4 NFE on MiniMax-H3 for VideoGen-Eval prompts 736 and 735. Each row shows three uniformly sampled frames (0, 62, 123; zero-based) from a 124-frame clip, with the original aspect ratio preserved. $^{\dagger}$ denotes our reimplementation.}
\label{fig:h3_nfe2_samples}
\end{figure}

\begin{figure}[p]
\centering
\setlength{\tabcolsep}{1pt}
\renewcommand{\arraystretch}{0}
\begin{tabular}{@{}cccc@{}}
\nfeblock{733}\\[8pt]
\nfeblock{743}
\end{tabular}
\caption{\textbf{Additional two-step generation results (part 2 of 4).} DMD$^{\dagger}$ at 2 NFE, \methodname at 2 NFE, and \methodname at 4 NFE on MiniMax-H3 for VideoGen-Eval prompts 733 and 743. Each row shows three uniformly sampled frames (0, 62, 123; zero-based) from a 124-frame clip, with the original aspect ratio preserved. $^{\dagger}$ denotes our reimplementation.}
\label{fig:h3_nfe2_samples-2}
\end{figure}

\begin{figure}[p]
\centering
\setlength{\tabcolsep}{1pt}
\renewcommand{\arraystretch}{0}
\begin{tabular}{@{}cccc@{}}
\nfeblock{744}\\[8pt]
\nfeblock{753}
\end{tabular}
\caption{\textbf{Additional two-step generation results (part 3 of 4).} DMD$^{\dagger}$ at 2 NFE, \methodname at 2 NFE, and \methodname at 4 NFE on MiniMax-H3 for VideoGen-Eval prompts 744 and 753. Each row shows three uniformly sampled frames (0, 62, 123; zero-based) from a 124-frame clip, with the original aspect ratio preserved. $^{\dagger}$ denotes our reimplementation.}
\label{fig:h3_nfe2_samples-3}
\end{figure}

\begin{figure}[p]
\centering
\setlength{\tabcolsep}{1pt}
\renewcommand{\arraystretch}{0}
\begin{tabular}{@{}cccc@{}}
\nfeblock{760}\\[8pt]
\nfeblock{769}
\end{tabular}
\caption{\textbf{Additional two-step generation results (part 4 of 4).} DMD$^{\dagger}$ at 2 NFE, \methodname at 2 NFE, and \methodname at 4 NFE on MiniMax-H3 for VideoGen-Eval prompts 760 and 769. Each row shows three uniformly sampled frames (0, 62, 123; zero-based) from a 124-frame clip, with the original aspect ratio preserved. $^{\dagger}$ denotes our reimplementation.}
\label{fig:h3_nfe2_samples-4}
\end{figure}

\endgroup

\clearpage

\section{Additional Visual Comparisons}
\label{sec:supp-visual-comparisons}

\subsection{The Removed Component in Video Space}
\label{sec:h3-proj-endpoints}

The norm ratio measures how much of the \dmd update the projection removes. \cref{fig:supp-proj-endpoints} shows what the removed part looks like after decoding. The images come from the \methodname{} run of \cref{sec:ablations} at iteration 500. At each checkpoint, every rank decodes the sample from its last student step three ways: the student endpoint $\bm x_s$, the \dmd target $\bm x_s - \bm d/Z$ built from the raw numerator, and the \methodname{} target $\bm x_s - \bm d^{\perp}/Z$ built from the projected one. The two targets share the normalizer $Z$ and the same student step; the projection is their only difference. Under the 125k-token packing each rank holds one sample, and \cref{fig:supp-proj-endpoints} shows twelve of the sixteen ranks of that step in rank order.

The \dmd target carries high-frequency speckle that the student endpoint does not have. The \methodname{} target keeps the same subject, colors, and frame layout, and carries visibly less of that speckle. The projection removes between $5\%$ and $30\%$ of the video update norm across the 16 samples at this step (median $16\%$); the visual comparison associates the removed component with speckle artifacts.

\begin{figure}[p]
    \centering
    \setlength{\tabcolsep}{0pt}
    \renewcommand{\arraystretch}{0}
    \scriptsize
    \newcommand{\pef}[4]{\includegraphics[height=#1\linewidth]{figures/src/proj_endpoints/s500/rank#2_#3_#4.jpg}}
    \newcommand{\pel}[2]{\raisebox{\dimexpr#1\linewidth-0.5ex}{#2}}
    \scalebox{0.995}{%
    \begin{tabular}{@{}r@{\hspace{2pt}}c@{\hspace{1pt}}c@{\hspace{0.5pt}}c@{\hspace{4pt}}c@{\hspace{1pt}}c@{\hspace{0.5pt}}c@{}}
        \rule{0pt}{7pt} & \multicolumn{3}{c}{frame at 30\% of the clip} &
          \multicolumn{3}{c}{frame at 80\% of the clip} \\
        \rule{0pt}{7pt} & student & \dmd target & \methodname{} target &
          student & \dmd target & \methodname{} target \\[1pt]
        \pel{0.0420}{1} &
          \pef{0.0840}{2}{student}{30} &
          \pef{0.0840}{2}{target_before}{30} &
          \pef{0.0840}{2}{target_after}{30} &
          \pef{0.0840}{2}{student}{80} &
          \pef{0.0840}{2}{target_before}{80} &
          \pef{0.0840}{2}{target_after}{80} \\[0.5pt]
        \pel{0.0420}{2} &
          \pef{0.0840}{3}{student}{30} &
          \pef{0.0840}{3}{target_before}{30} &
          \pef{0.0840}{3}{target_after}{30} &
          \pef{0.0840}{3}{student}{80} &
          \pef{0.0840}{3}{target_before}{80} &
          \pef{0.0840}{3}{target_after}{80} \\[0.5pt]
        \pel{0.0420}{3} &
          \pef{0.0840}{4}{student}{30} &
          \pef{0.0840}{4}{target_before}{30} &
          \pef{0.0840}{4}{target_after}{30} &
          \pef{0.0840}{4}{student}{80} &
          \pef{0.0840}{4}{target_before}{80} &
          \pef{0.0840}{4}{target_after}{80} \\[0.5pt]
        \pel{0.0420}{4} &
          \pef{0.0840}{5}{student}{30} &
          \pef{0.0840}{5}{target_before}{30} &
          \pef{0.0840}{5}{target_after}{30} &
          \pef{0.0840}{5}{student}{80} &
          \pef{0.0840}{5}{target_before}{80} &
          \pef{0.0840}{5}{target_after}{80} \\[0.5pt]
        \pel{0.0420}{5} &
          \pef{0.0840}{6}{student}{30} &
          \pef{0.0840}{6}{target_before}{30} &
          \pef{0.0840}{6}{target_after}{30} &
          \pef{0.0840}{6}{student}{80} &
          \pef{0.0840}{6}{target_before}{80} &
          \pef{0.0840}{6}{target_after}{80} \\[0.5pt]
        \pel{0.0420}{6} &
          \pef{0.0840}{7}{student}{30} &
          \pef{0.0840}{7}{target_before}{30} &
          \pef{0.0840}{7}{target_after}{30} &
          \pef{0.0840}{7}{student}{80} &
          \pef{0.0840}{7}{target_before}{80} &
          \pef{0.0840}{7}{target_after}{80} \\[0.5pt]
        \pel{0.0420}{7} &
          \pef{0.0840}{8}{student}{30} &
          \pef{0.0840}{8}{target_before}{30} &
          \pef{0.0840}{8}{target_after}{30} &
          \pef{0.0840}{8}{student}{80} &
          \pef{0.0840}{8}{target_before}{80} &
          \pef{0.0840}{8}{target_after}{80} \\[0.5pt]
        \pel{0.0318}{8} &
          \pef{0.0636}{9}{student}{30} &
          \pef{0.0636}{9}{target_before}{30} &
          \pef{0.0636}{9}{target_after}{30} &
          \pef{0.0636}{9}{student}{80} &
          \pef{0.0636}{9}{target_before}{80} &
          \pef{0.0636}{9}{target_after}{80} \\[0.5pt]
        \pel{0.0318}{9} &
          \pef{0.0636}{10}{student}{30} &
          \pef{0.0636}{10}{target_before}{30} &
          \pef{0.0636}{10}{target_after}{30} &
          \pef{0.0636}{10}{student}{80} &
          \pef{0.0636}{10}{target_before}{80} &
          \pef{0.0636}{10}{target_after}{80} \\[0.5pt]
        \pel{0.0420}{10} &
          \pef{0.0840}{12}{student}{30} &
          \pef{0.0840}{12}{target_before}{30} &
          \pef{0.0840}{12}{target_after}{30} &
          \pef{0.0840}{12}{student}{80} &
          \pef{0.0840}{12}{target_before}{80} &
          \pef{0.0840}{12}{target_after}{80} \\[0.5pt]
        \pel{0.0420}{11} &
          \pef{0.0840}{13}{student}{30} &
          \pef{0.0840}{13}{target_before}{30} &
          \pef{0.0840}{13}{target_after}{30} &
          \pef{0.0840}{13}{student}{80} &
          \pef{0.0840}{13}{target_before}{80} &
          \pef{0.0840}{13}{target_after}{80} \\[0.5pt]
        \pel{0.0420}{12} &
          \pef{0.0840}{14}{student}{30} &
          \pef{0.0840}{14}{target_before}{30} &
          \pef{0.0840}{14}{target_after}{30} &
          \pef{0.0840}{14}{student}{80} &
          \pef{0.0840}{14}{target_before}{80} &
          \pef{0.0840}{14}{target_after}{80} \\
    \end{tabular}}
    \caption{\textbf{One student step, before and after the projection.} Results are from the training iteration-500 of the 4-NFE MiniMax-H3 \methodname{} run in \cref{sec:ablations}. Each row is one of the data-parallel ranks of the same student step, decoded three ways: the student endpoint $\bm x_s$, the \dmd target $\bm x_s - \bm d/Z$ built from the raw numerator, and the \methodname{} target $\bm x_s - \bm d^{\perp}/Z$ built from the projected one. Under the 125k-token packing each rank holds one sample; the rows are twelve of the sixteen ranks of that step, in rank order. Left: the frame at 30\% of the clip.  Right: the frame at 80\%. The \dmd target carries high-frequency speckle, but the \methodname{} target retains the same subject, colors, and layout while exhibiting less speckle. This comparison associates the removed component with visible artifacts.}
    \label{fig:supp-proj-endpoints}
\end{figure}

\subsection{Saturation Metrics}
\label{sec:supp-saturation}

\Cref{tab:supp-sat} lists two color statistics of the models in \cref{tab:wan13b,tab:h3}. Chroma is the mean CIELAB $C^*=\sqrt{a^{*2}+b^{*2}}$ over pixels. Saturation is the mean $C^*$ divided by the mean lightness $L^*$. Both are averaged over frames and clips.

\methodname{} has lower mean saturation than \dmd on both backbones. On Wan2.1, DMD$^{\dagger}$ sits at 0.709 saturation against 0.587 for the 50-step teacher, and \methodname{} at 0.476; on MiniMax-H3, DMD$^{\dagger}$ and \dmdtwodag sit at 0.460 and 0.520 against 0.405, and \methodname{} at 0.357. \rcm and \anyflow sit above the teacher on both color statistics for both backbones. These aggregate statistics complement the visual comparisons and user studies; lower saturation alone does not establish better perceptual quality.

\begin{table}[htbp]
    \centering
    \caption{\textbf{Color statistics of the models in \cref{tab:wan13b,tab:h3}.} Wan2.1 values are means over the 4{,}720 clips of the VBench protocol in \cref{sec:supp-wan-protocol}; MiniMax-H3 values are means over the 387 VideoGen-Eval clips of \cref{sec:vgeneval387}.}
    \label{tab:supp-sat}
    \small
    \begin{tabular}{@{}lcc@{}}
        \toprule
        Method & Saturation ($C^*/L^*$) & Chroma $C^*$ \\
        \midrule
        \multicolumn{3}{@{}l}{\textit{Wan2.1-T2V-1.3B}} \\
        \deemph{Wan2.1-T2V-1.3B, 50 steps} & \deemph{0.587} & \deemph{21.5} \\
        Wan2.1-T2V-1.3B, 4 steps & 1.233 & 27.6 \\
        \rcm & 0.633 & 23.6 \\
        \anyflow & 0.607 & 25.9 \\
        DMD$^{\dagger}$ & 0.709 & 23.5 \\
        \dmdtwodag & 0.437 & 17.9 \\
        \textbf{\methodname{} (ours)} & 0.476 & 17.9 \\
        \midrule
        \multicolumn{3}{@{}l}{\textit{MiniMax-H3-33B}} \\
        \deemph{MiniMax-H3-33B, 50 steps} & \deemph{0.405} & \deemph{15.0} \\
        MiniMax-H3-33B, 4 steps & 0.669 & 17.7 \\
        H3 Turbo LoRA & 0.548 & 17.2 \\
        rCM$^{\dagger}$ & 0.440 & 15.2 \\
        AnyFlow$^{\dagger}$ & 0.437 & 15.3 \\
        DMD$^{\dagger}$ & 0.460 & 16.6 \\
        \dmdtwodag & 0.520 & 18.3 \\
        \textbf{\methodname{} (ours)} & 0.357 & 14.0 \\
        \bottomrule
    \end{tabular}
\end{table}

\subsection{Motion Comparisons}
\label{sec:supp-motion}

Beyond the dynamic-degree scores in \cref{tab:wan13b,tab:h3}, this section shows more examples of the motion in our distilled results. \Cref{fig:supp-motion-wan119} shows Wan2.1 prompt 119 with eighteen uniformly spaced frames per method, one column per method. \Cref{fig:supp-motion-h3-857,fig:supp-motion-h3-895} show MiniMax-H3 prompts 857 and 895 with twelve and ten uniformly spaced frames per method, one row per method. \Cref{fig:h3_nfe2} adds the two-step runs of \cref{sec:h3-two-step} on prompt 857, below the four-step rows of \dmd$^{\dagger}$ and \methodname{}.

The three scenes show three recurring failures of the baselines. Motion is lost: in 895 the distilled baselines except AnyFlow$^{\dagger}$ are almost static while \methodname{} follows the teacher. Motion is misread: in 119 every baseline except ADV turns the rider's upper body around. Motion is blurred: Turbo LoRA, rCM$^{\dagger}$ and AnyFlow$^{\dagger}$ blur the slowly moving bouquet in 857 into translucent ghosting. Several baselines also carry a color cast (119) or a smeared, shifted tone (DMD$^{\dagger}$ and \dmdtwodag in 857). \methodname{} avoids these visible failures in the three scenes.

\newcommand{\mww}{\dimexpr(\textwidth-0.16in-9pt)/8\relax}
\newcommand{\mwfh}{\dimexpr0.577\mww\relax}%
\newcommand{\mwl}[1]{\rule{0pt}{\mwfh}\raisebox{\dimexpr0.5\mwfh-0.5\height\relax}[0pt][0pt]{\rotatebox{90}{\tiny #1}}\\}
\newcommand{\mwlabels}{\begin{tabular}[b]{@{}c@{}}%
    \mwl{0}\mwl{5}\mwl{9}\mwl{14}\mwl{19}\mwl{24}\mwl{28}\mwl{33}\mwl{38}\mwl{42}\mwl{47}\mwl{52}\mwl{56}\mwl{61}\mwl{66}\mwl{71}\mwl{75}\mwl{80}\end{tabular}}
\newcommand{\mwh}[3]{\parbox[b]{\mww}{\centering\scriptsize #1\\[1pt]%
    \tiny #2\\[1pt]\scriptsize #3}}
\newcommand{\mw}[2]{\includegraphics[width=\mww]{figures/src/motion/wan_#1_#2_strip.jpg}}
\newcommand{\mwhead}{%
    & \mwh{Wan2.1}{\resizebox{0.92\mww}{!}{\citep{wan2025wan}}}{$50{\times}2$ NFE}
    & \mwh{Wan2.1}{\resizebox{0.92\mww}{!}{\citep{wan2025wan}}}{$4{\times}2$ NFE}
    & \mwh{rCM}{\citep{zheng2026rcm}}{4 NFE}
    & \mwh{ADV}{\citep{you2026adaptive}}{4 NFE}
    & \mwh{AnyFlow}{\citep{gu2026anyflow}}{4 NFE}
    & \mwh{DMD$^{\dagger}$}{\citep{yin2024onestep}}{4 NFE}
    & \mwh{\dmdtwodag}{\citep{yin2024improved}}{4 NFE}
    & \mwh{\textbf{\methodname{}}}{\textbf{(Ours)}}{\textbf{4 NFE}} \\[2pt]}
\newcommand{\wanmotionfig}[3]{%
\begin{figure}[p]
    \centering
    \setlength{\tabcolsep}{0pt}
    \renewcommand{\arraystretch}{0}
    \begin{tabular}{@{}r@{\hspace{2pt}}c@{\hspace{1pt}}c@{\hspace{1pt}}c@{\hspace{1pt}}c@{\hspace{1pt}}c@{\hspace{1pt}}c@{\hspace{1pt}}c@{\hspace{1pt}}c@{}}
        \mwhead
        \makebox[0.16in][c]{\mwlabels} & \mw{t50}{#1} & \mw{t4}{#1} & \mw{rcm}{#1} & \mw{adv}{#1} & \mw{af}{#1} & \mw{dmd}{#1} & \mw{dmd2}{#1} & \mw{pdmd}{#1} \\
    \end{tabular}
    \caption{\textbf{Motion on four-step Wan2.1-T2V-1.3B, VBench prompt #1.} Each column is one method from \cref{tab:wan13b}; the eighteen rows are uniformly spaced frames of the 81-frame clip, top to bottom, labelled by frame index, with the same prompt and seed across methods. #3}
    \label{#2}
\end{figure}}

\wanmotionfig{119}{fig:supp-motion-wan119}{Besides the visible color cast
    of the other methods, every baseline except ADV misreads the rider's
    motion and turns his upper body by 180 degrees.}

\newcommand{\hmn}[4]{\rotatebox{90}{\parbox[c]{#3}{\centering #4 #1\\[0pt]#4 #2}}}
\newcommand{\hmf}[3]{\includegraphics[height=#3]{figures/src/motion/h3_#1_#2_strip.jpg}}
\newcommand{\hmi}[2]{\makebox[#2][c]{\tiny #1}}
\newcommand{\hmrowA}[6]{%
    \hmn{#4}{#5}{#3}{#6} & \hmf{#1}{#2}{#3} \\[2pt]}
\newcommand{\hmrowB}[6]{%
    \hmn{#4}{#5}{#3}{#6} & \hmf{#1}{#2}{#3} \\[2pt]}
\newcommand{\hmheadA}[1]{\hmi{0}{#1}\hmi{11}{#1}\hmi{22}{#1}\hmi{34}{#1}\hmi{45}{#1}\hmi{56}{#1}\hmi{67}{#1}\hmi{78}{#1}\hmi{89}{#1}\hmi{101}{#1}\hmi{112}{#1}\hmi{123}{#1}}
\newcommand{\hmheadB}[1]{\hmi{0}{#1}\hmi{14}{#1}\hmi{27}{#1}\hmi{41}{#1}\hmi{55}{#1}\hmi{68}{#1}\hmi{82}{#1}\hmi{96}{#1}\hmi{109}{#1}\hmi{123}{#1}}
\newcommand{\hmotionbody}[9]{%
\begin{figure}[p]
    \centering
    \setlength{\tabcolsep}{0pt}
    \renewcommand{\arraystretch}{0}
    \begin{tabular}{@{}c@{\hspace{2pt}}l@{}}
        & #2{#6} \\[1pt]
        #1{t50}{#3}{#5}{MiniMax-H3}{50 NFE}{#8}
        #1{t4}{#3}{#5}{MiniMax-H3}{4 NFE}{#8}
        #1{lora}{#3}{#5}{Turbo LoRA}{4 NFE}{#8}
        #1{rcm}{#3}{#5}{rCM$^{\dagger}$}{4 NFE}{#8}
        #1{af}{#3}{#5}{AnyFlow$^{\dagger}$}{4 NFE}{#8}
        #1{dmd}{#3}{#5}{DMD$^{\dagger}$}{4 NFE}{#8}
        #1{dmd2}{#3}{#5}{\dmdtwodag}{4 NFE}{#8}
        #1{pdmd}{#3}{#5}{\textbf{\methodname{}}}{\textbf{(ours), 4 NFE}}{#8}
    \end{tabular}
    \caption{\textbf{Motion on four-step MiniMax-H3, VideoGen-Eval prompt #3.} Each row is one method from \cref{tab:h3}; the #7 frames are uniformly spaced over the 124-frame clip, left to right, labelled by frame index, with the same prompt and seed across methods. #9}
    \label{#4}
\end{figure}}
\newcommand{\hmotionfigA}[6]{\hmotionbody{\hmrowA}{\hmheadA}{#1}{#2}{#3}{#4}{twelve}{#5}{#6}}
\newcommand{\hmotionfigB}[6]{\hmotionbody{\hmrowB}{\hmheadB}{#1}{#2}{#3}{#4}{ten}{#5}{#6}}

\hmotionfigA{857}{fig:supp-motion-h3-857}{2.47cm}{1.059cm}{\scriptsize}{%
    Turbo LoRA, rCM$^{\dagger}$ and AnyFlow$^{\dagger}$ blur the slowly moving bouquet into translucent ghosting.
    DMD$^{\dagger}$ and \dmdtwodag exhibit shifted colors and smeared textures.}
\hmotionfigB{895}{fig:supp-motion-h3-895}{1.32cm}{1.32cm}{\tiny}{Except for
    AnyFlow$^{\dagger}$, the distilled baselines render an almost completely static
    video. \methodname{} shows the baseball bat entering the frame, similar to
    the teacher.}

\newcommand{\hmpl}[2]{\raisebox{\dimexpr2\dimexpr#2\relax+3pt-0.5\height\relax}[0pt][0pt]{\rotatebox{90}{\scriptsize Prompt #1}}}
\newcommand{\hmnferow}[6]{%
    #6 & \hmn{#4}{#5}{#3}{\scriptsize} & \hmf{#1}{#2}{#3} \\[2pt]}
\newcommand{\hmnfeblock}[2]{%
    \hmnferow{dmd}{#1}{#2}{DMD$^{\dagger}$}{4 NFE}{}
    \hmnferow{dmd2nfe}{#1}{#2}{DMD$^{\dagger}$}{2 NFE}{}
    \hmnferow{pdmd}{#1}{#2}{\textbf{\methodname{}}}{\textbf{4 NFE}}{}
    \hmnferow{pdmd2nfe}{#1}{#2}{\textbf{\methodname{}}}{\textbf{2 NFE}}{\hmpl{#1}{#2}}}

\clearpage

\subsection{Additional Samples}
\label{sec:supp-h3-qualitative}

\Cref{fig:wan_grid_supp} shows more prompts for the methods of \cref{tab:wan13b} on Wan2.1-T2V-1.3B. \Cref{fig:h3_grid} compares the eight models in \cref{tab:h3} on eight of the 387 VideoGen-Eval prompts, with two frames from every clip over two pages; \cref{fig:degradation} instead compares the methods over training.

\newcommand{\swgl}[1]{\makebox[0.16in][c]{\raisebox{-0.5\height}[0pt][0pt]{\rotatebox{90}{\scriptsize Prompt #1}}}}
\newcommand{\swgw}{\dimexpr(\textwidth-0.16in-9pt)/8\relax}
\newcommand{\swgh}[3]{\parbox[b]{\swgw}{\centering\scriptsize #1\\[1pt]%
    \tiny #2\\[1pt]\scriptsize #3}}
\newcommand{\swg}[1]{\includegraphics[width=\swgw]{figures/src/wan_grid/#1.jpg}}
\newcommand{\swghead}{%
    & \swgh{Wan2.1}{\citep{wan2025wan}}{$50{\times}2$ NFE}
    & \swgh{Wan2.1}{\citep{wan2025wan}}{$4{\times}2$ NFE}
    & \swgh{rCM}{\citep{zheng2026rcm}}{4 NFE}
        & \swgh{ADV}{\citep{you2026adaptive}}{4 NFE}
    & \swgh{AnyFlow}{\citep{gu2026anyflow}}{4 NFE}
    & \swgh{DMD$^{\dagger}$}{\citep{yin2024onestep}}{4 NFE}
    & \swgh{\dmdtwodag}{\citep{yin2024improved}}{4 NFE}
    & \swgh{\textbf{\methodname{}}}{\textbf{(Ours)}}{\textbf{4 NFE}}}
\newcommand{\swgsetup}{%
    \setlength{\tabcolsep}{0pt}%
    \renewcommand{\arraystretch}{0}}
\newcommand{\swgblock}[2]{%
    \swgl{#1} & \swg{t50_#1_#2} & \swg{t4_#1_#2} & \swg{rcm_#1_#2} & \swg{adv_#1_#2} & \swg{af_#1_#2} & \swg{dmd_#1_#2} & \swg{dmd2_#1_#2} & \swg{pdmd_#1_#2} \\
    & \swg{t50_#1_80} & \swg{t4_#1_80} & \swg{rcm_#1_80} & \swg{adv_#1_80} & \swg{af_#1_80} & \swg{dmd_#1_80} & \swg{dmd2_#1_80} & \swg{pdmd_#1_80} \\[2pt]}

\begin{figure}[!t]
    \centering
    \swgsetup
    \begin{tabular}{@{}r@{\hspace{2pt}}c@{\hspace{1pt}}c@{\hspace{1pt}}c@{\hspace{1pt}}c@{\hspace{1pt}}c@{\hspace{1pt}}c@{\hspace{1pt}}c@{\hspace{1pt}}c@{}}
        \swghead \\[2pt]
        \swgblock{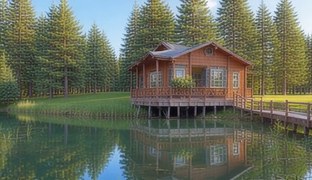}{30}
        \swgblock{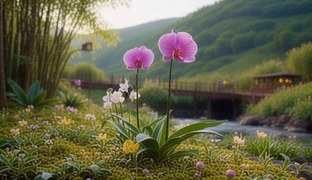}{30}
        \swgblock{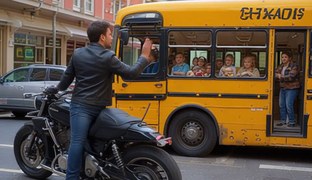}{30}
        \swgblock{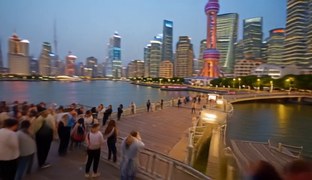}{30}
        \swgblock{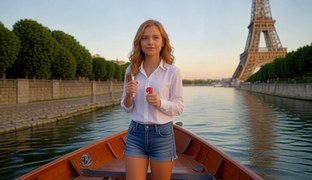}{f7}
        \swgblock{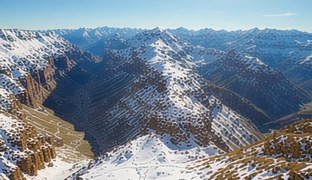}{30}
    \end{tabular}
    \caption{\textbf{Additional qualitative results for four-step Wan2.1-T2V-1.3B.} Each block shows an early and a later frame from one video under the same layout, methods, protocol, and seed as \cref{fig:wan_grid_main}; $^{\dagger}$ marks our implementations (\cref{sec:supp-wan-protocol}).}
    \label{fig:wan_grid_supp}
\end{figure}

\begin{figure}[p]
    \ContinuedFloat
    \centering
    \swgsetup
    \begin{tabular}{@{}r@{\hspace{2pt}}c@{\hspace{1pt}}c@{\hspace{1pt}}c@{\hspace{1pt}}c@{\hspace{1pt}}c@{\hspace{1pt}}c@{\hspace{1pt}}c@{\hspace{1pt}}c@{}}
        \swghead \\[2pt]
        \swgblock{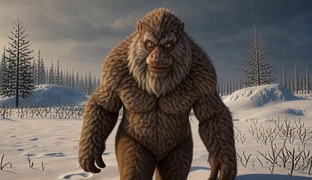}{30}
        \swgblock{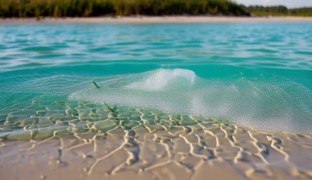}{30}
        \swgblock{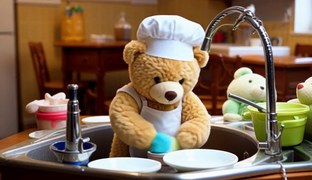}{30}
        \swgblock{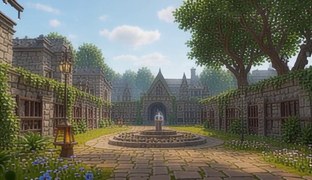}{30}
        \swgblock{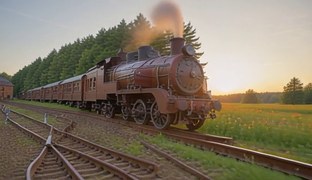}{30}
        \swgblock{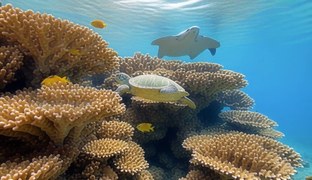}{30}
    \end{tabular}
    \caption{\textbf{Additional qualitative results for four-step Wan2.1-T2V-1.3B (continued).} The remaining prompts use the same methods, frame positions, protocol, and seed as the preceding page.}
\end{figure}

\newcommand{\hgl}[1]{\makebox[0.16in][c]{\raisebox{-0.5\height}[0pt][0pt]{\rotatebox{90}{\tiny Prompt #1}}}}
\newcommand{\hgw}{\dimexpr(\textwidth-0.16in-9pt)/8\relax}
\newcommand{\hgh}[3]{\parbox[b]{\hgw}{\centering\scriptsize #1\\[1pt]%
    \tiny #2\\[1pt]\scriptsize #3}}
\newcommand{\hg}[1]{\includegraphics[width=\hgw]{figures/src/h3_grid/#1.jpg}}
\newcommand{\hghead}{%
    & \hgh{MiniMax-H3}{\citep{minimaxh3}}{50 NFE}
    & \hgh{MiniMax-H3}{\citep{minimaxh3}}{4 NFE}
    & \hgh{H3 Turbo LoRA}{\citep{larry2026h3turbo}}{4 NFE}
    & \hgh{\rcm$^{\dagger}$}{\citep{zheng2026rcm}}{4 NFE}
    & \hgh{\anyflow$^{\dagger}$}{\citep{gu2026anyflow}}{4 NFE}
    & \hgh{DMD$^{\dagger}$}{\citep{yin2024onestep}}{4 NFE}
    & \hgh{\dmdtwodag}{\citep{yin2024improved}}{4 NFE}
    & \hgh{\textbf{PDMD}}{\textbf{(Ours)}}{\textbf{4 NFE}}}
\newcommand{\hgsetup}{%
    \setlength{\tabcolsep}{0pt}%
    \renewcommand{\arraystretch}{0}}

\begin{figure}[p]
    \centering
    \hgsetup
    \begin{tabular}{@{}r@{\hspace{2pt}}c@{\hspace{1pt}}c@{\hspace{1pt}}c@{\hspace{1pt}}c@{\hspace{1pt}}c@{\hspace{1pt}}c@{\hspace{1pt}}c@{\hspace{1pt}}c@{}}
        \hghead \\[2pt]
        \hgl{837} & \hg{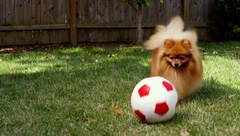} & \hg{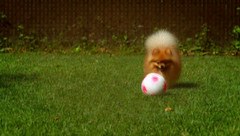} & \hg{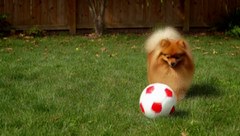} & \hg{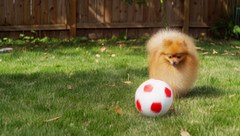} & \hg{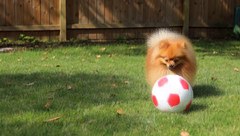} & \hg{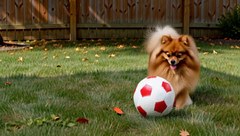} & \hg{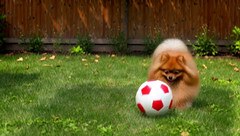} & \hg{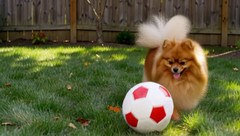} \\
        & \hg{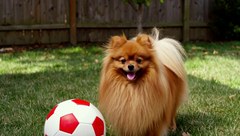} & \hg{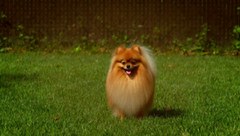} & \hg{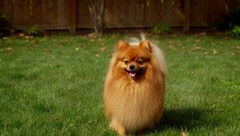} & \hg{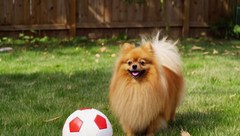} & \hg{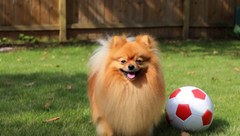} & \hg{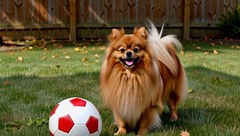} & \hg{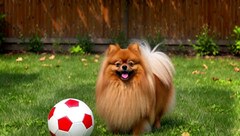} & \hg{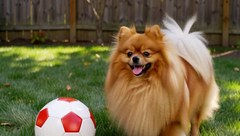} \\[2pt]
        \hgl{857} & \hg{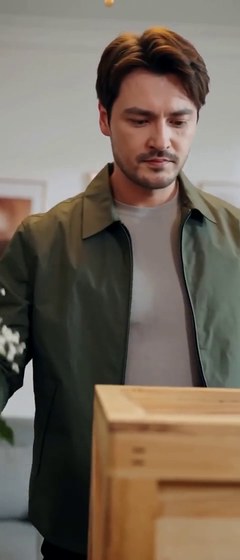} & \hg{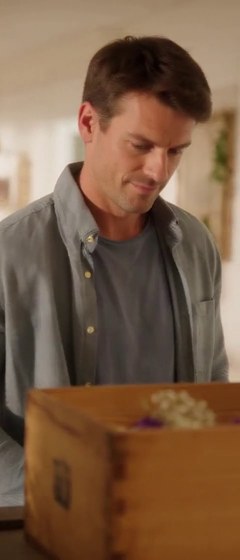} & \hg{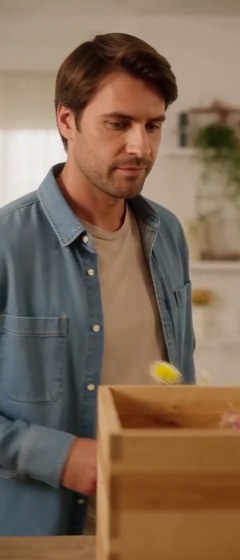} & \hg{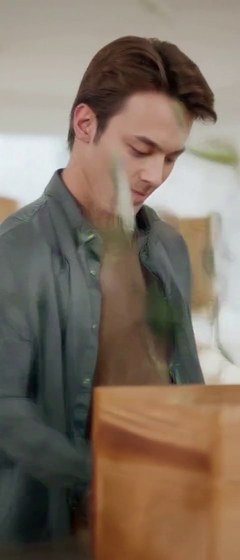} & \hg{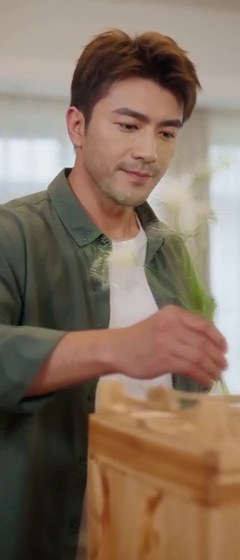} & \hg{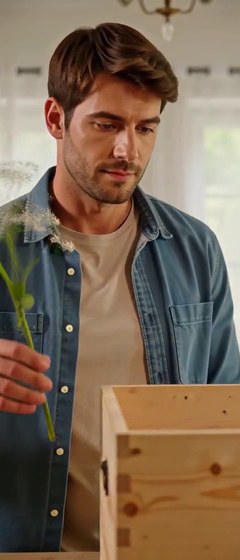} & \hg{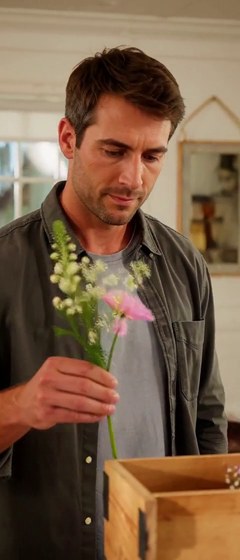} & \hg{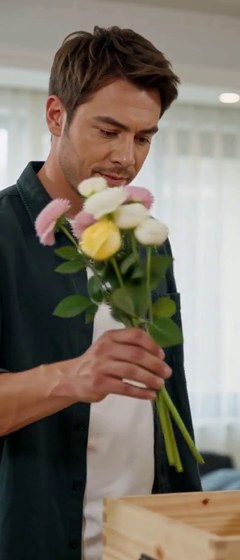} \\
        & \hg{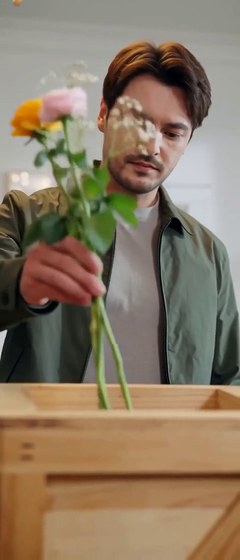} & \hg{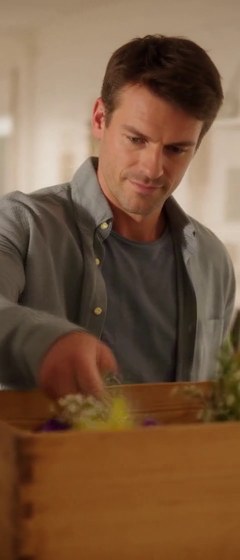} & \hg{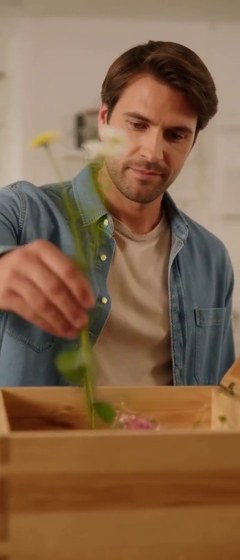} & \hg{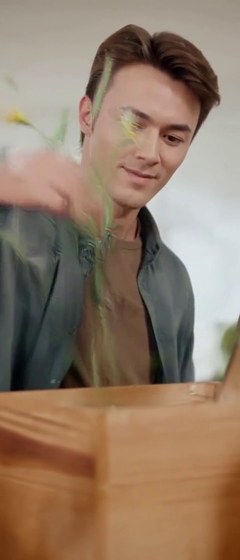} & \hg{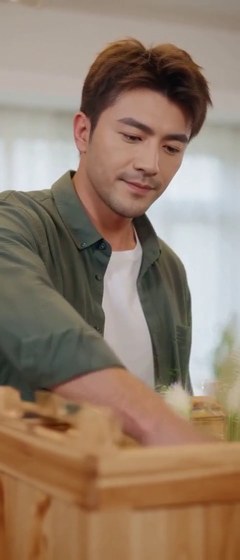} & \hg{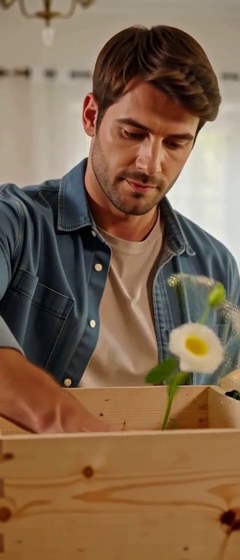} & \hg{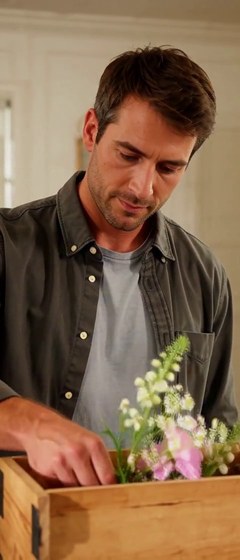} & \hg{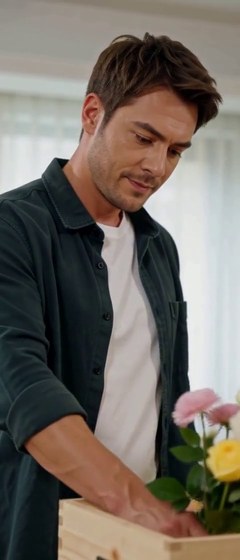} \\[2pt]
        \hgl{860} & \hg{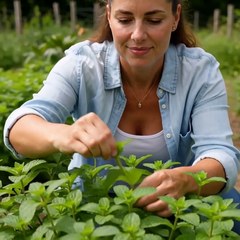} & \hg{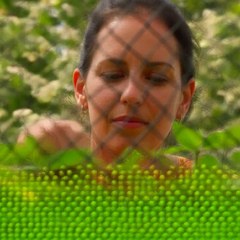} & \hg{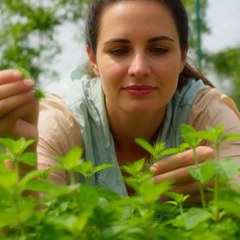} & \hg{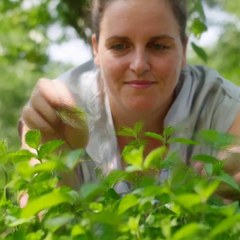} & \hg{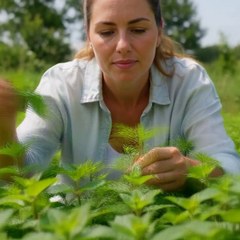} & \hg{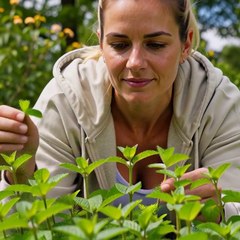} & \hg{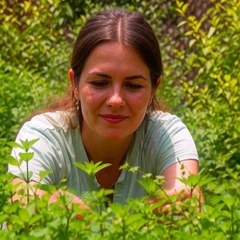} & \hg{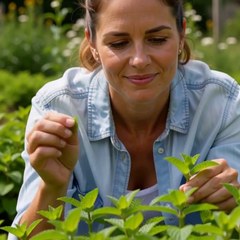} \\
        & \hg{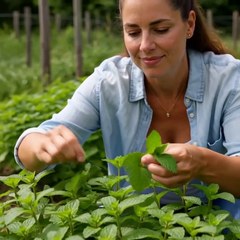} & \hg{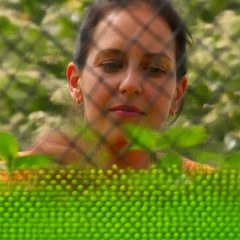} & \hg{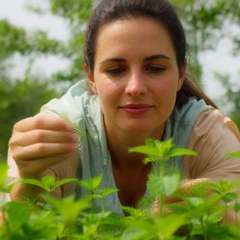} & \hg{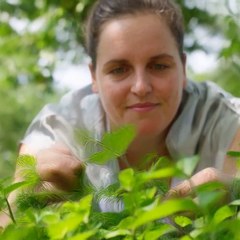} & \hg{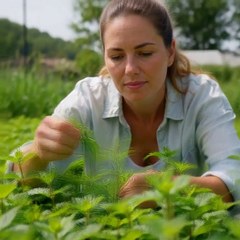} & \hg{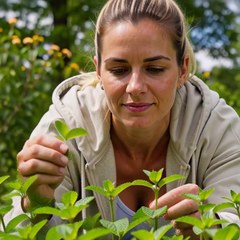} & \hg{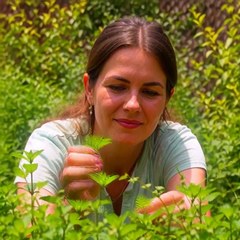} & \hg{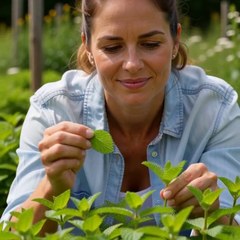} \\[2pt]
        \hgl{911} & \hg{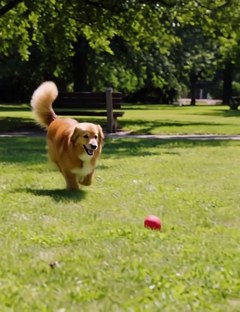} & \hg{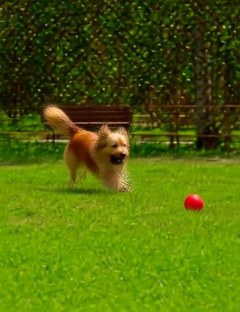} & \hg{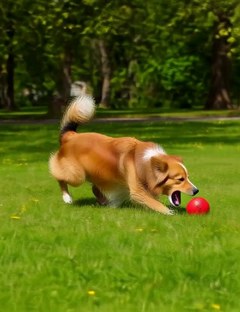} & \hg{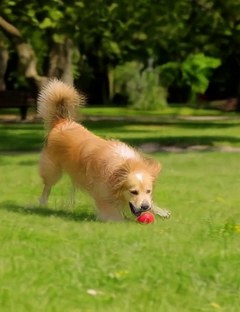} & \hg{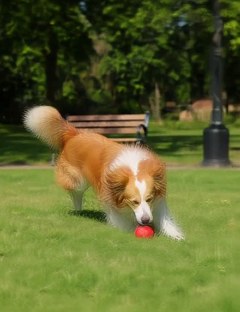} & \hg{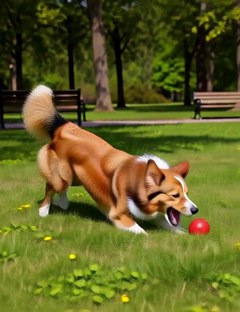} & \hg{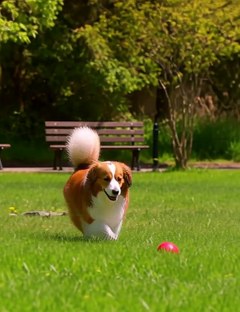} & \hg{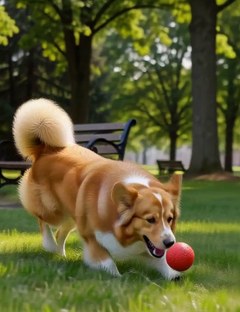} \\
        & \hg{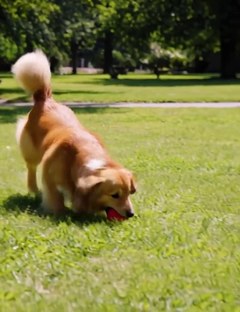} & \hg{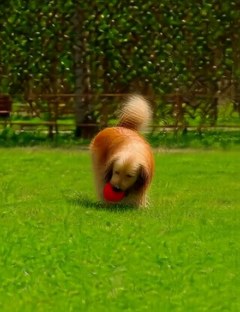} & \hg{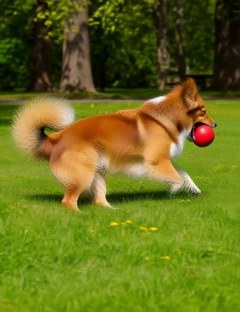} & \hg{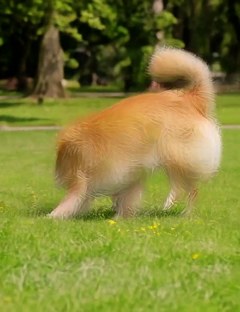} & \hg{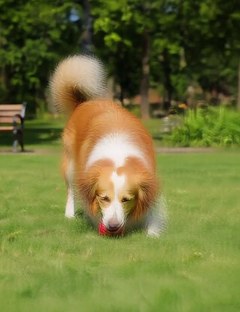} & \hg{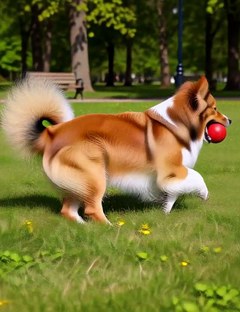} & \hg{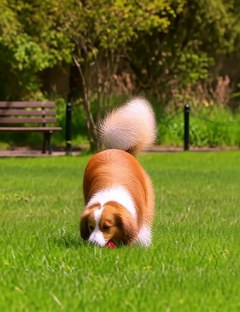} & \hg{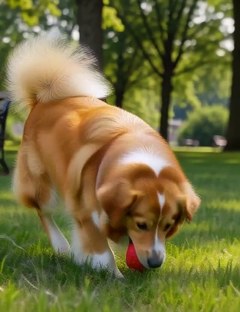} \\[2pt]
    \end{tabular}
    \caption{\textbf{Qualitative results on four-step MiniMax-H3 (part 1 of 2).} The eight models of \cref{tab:h3} on the first four of the 8 prompts, two frames of every clip per prompt with the earlier frame above. Every clip is 124 frames, and rows keep the benchmark's own aspect ratio, so their heights differ. $^{\dagger}$ marks our reimplementations.}
    \label{fig:h3_grid}
\end{figure}

\begin{figure}[p]
    \centering
    \hgsetup
    \begin{tabular}{@{}r@{\hspace{2pt}}c@{\hspace{1pt}}c@{\hspace{1pt}}c@{\hspace{1pt}}c@{\hspace{1pt}}c@{\hspace{1pt}}c@{\hspace{1pt}}c@{\hspace{1pt}}c@{}}
        \hghead \\[2pt]
        \hgl{912} & \hg{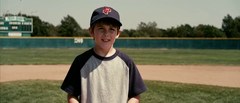} & \hg{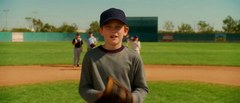} & \hg{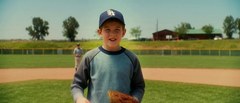} & \hg{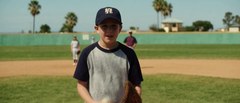} & \hg{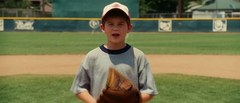} & \hg{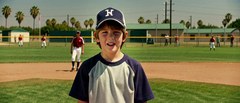} & \hg{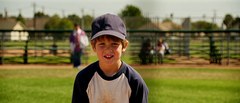} & \hg{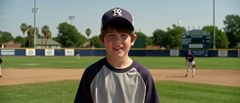} \\
        & \hg{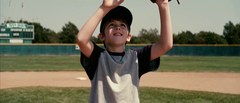} & \hg{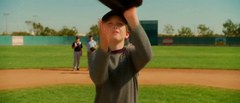} & \hg{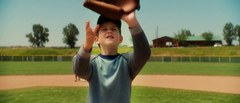} & \hg{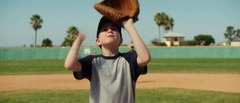} & \hg{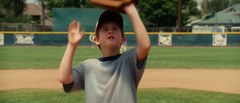} & \hg{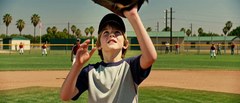} & \hg{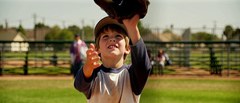} & \hg{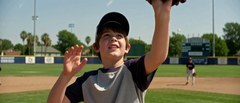} \\[2pt]
        \hgl{916} & \hg{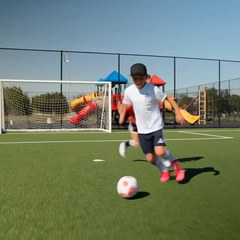} & \hg{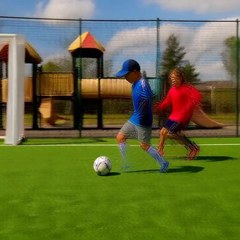} & \hg{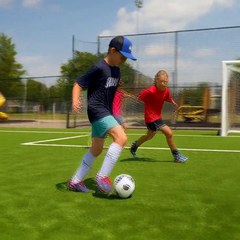} & \hg{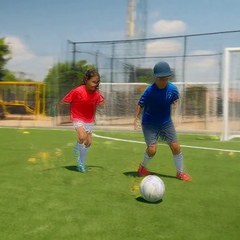} & \hg{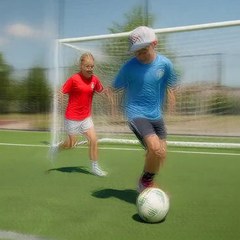} & \hg{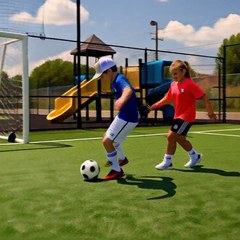} & \hg{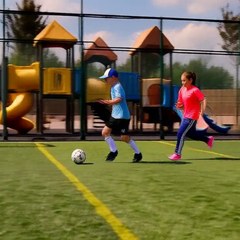} & \hg{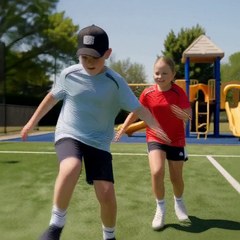} \\
        & \hg{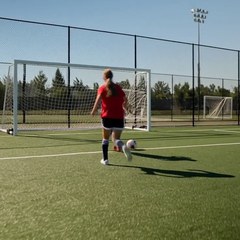} & \hg{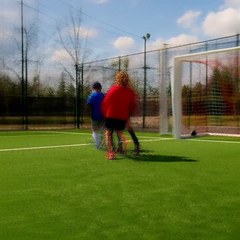} & \hg{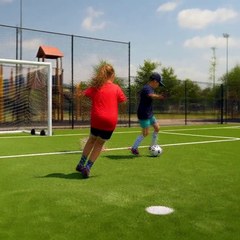} & \hg{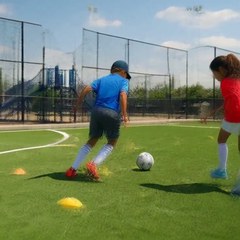} & \hg{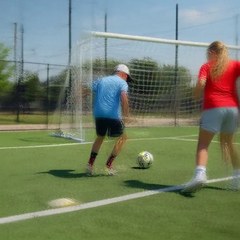} & \hg{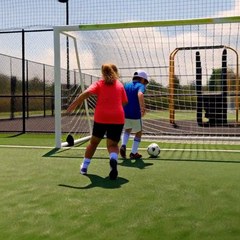} & \hg{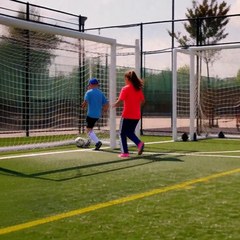} & \hg{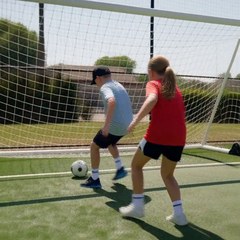} \\[2pt]
        \hgl{968} & \hg{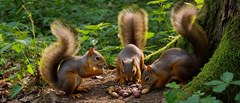} & \hg{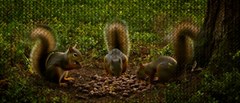} & \hg{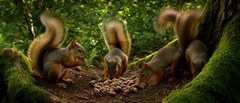} & \hg{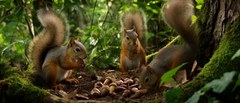} & \hg{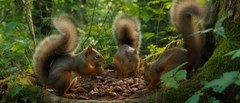} & \hg{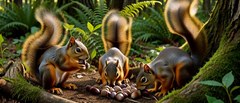} & \hg{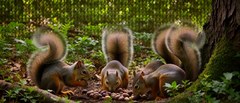} & \hg{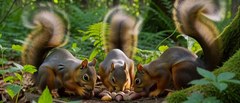} \\
        & \hg{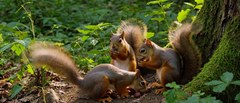} & \hg{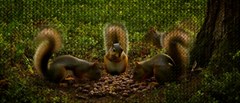} & \hg{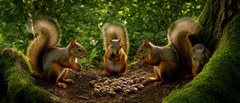} & \hg{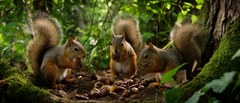} & \hg{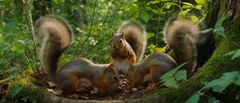} & \hg{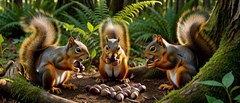} & \hg{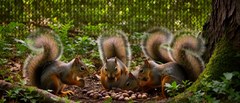} & \hg{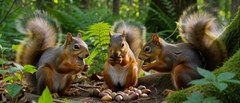} \\[2pt]
        \hgl{1010} & \hg{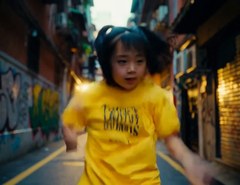} & \hg{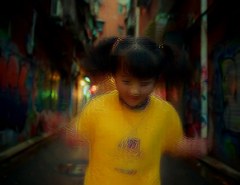} & \hg{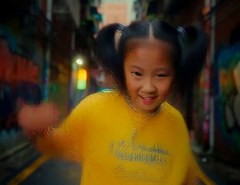} & \hg{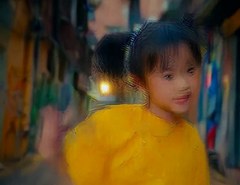} & \hg{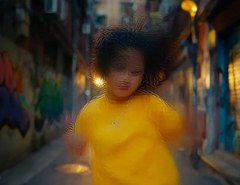} & \hg{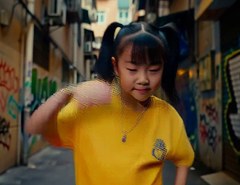} & \hg{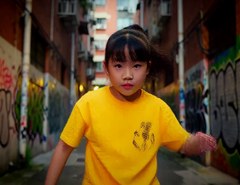} & \hg{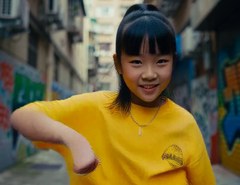} \\
        & \hg{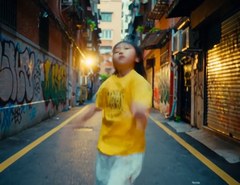} & \hg{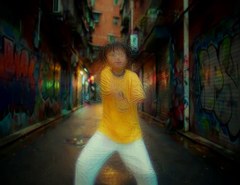} & \hg{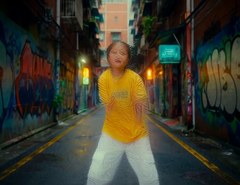} & \hg{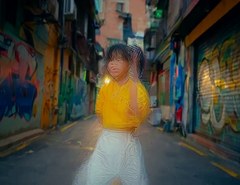} & \hg{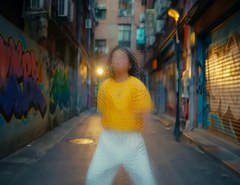} & \hg{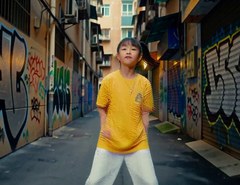} & \hg{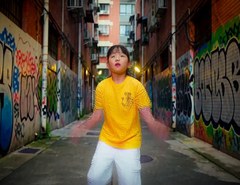} & \hg{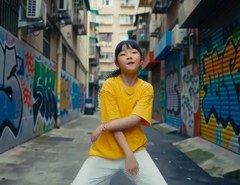} \\[2pt]
    \end{tabular}
    \caption{\textbf{Qualitative results on four-step MiniMax-H3 (part 2 of 2).} Four more prompts, same models, same layout and the same two-frame rule as \cref{fig:h3_grid}.}
    \label{fig:h3_grid-b}
\end{figure}

\clearpage

\section{Limitations and Future Work}
\label{sec:failure-cases}

\label{sec:failure-one-step}

\methodname{} does not yet yield satisfactory single-step video generation in our experiments. At 1 NFE, samples remain smeared and fall below the quality of the four-step results (\cref{fig:limitation}). Improving single-step quality may require an additional objective, such as a discriminator loss; we leave this investigation to future work. As in graphics, where a biased estimator with smaller variance often replaces an unbiased one, using a biased critic error estimate with smaller deviation is another possible improvement (\cref{par:supp-biased-critic-error}). Whether the residual direction remains the right one to be removed when a single evaluation must cover the whole video generation is left to future work.

The theory guarantees conditional error removal, not convergence of the coupled student--critic optimization. A nonvanishing error-removal fraction requires the critic error to remain appreciable relative to conditional endpoint variance, and signal retention requires weak alignment with the residual.
If the useful signal aligns with the residual, the projection can remove it as well; when the critic is exact, it can still discard useful signal (\cref{eq:supp-projected-correction-error}).
Although these assumptions cannot be verified directly on the video models, the decoded \methodname{} target carries visibly less high-frequency speckle than the \dmd target (\cref{fig:supp-proj-endpoints}).

Our video experiments cover two backbones, primarily at 4 NFE, and do not establish preservation of diversity across prompts and seeds. On MiniMax-H3, the 50-step teacher remains preferred in the user study despite \methodname{}'s higher aggregate video score (\cref{tab:h3}), indicating a remaining perceptual-quality gap and a limitation of the automatic metrics.

\end{document}